\newtheorem{theorem}{Theorem}
\definecolor{cvprblue}{rgb}{0.21,0.49,0.74}
\begin{document}

%%%%%%%%% TITLE - PLEASE UPDATE
\title{Deterministic Certification of Graph Neural Networks against \\ Graph Poisoning Attacks with Arbitrary Perturbations}

\author{
Jiate Li\footnotemark[4 ], 
Meng Pang\footnotemark[2 ], Yun Dong\footnotemark[3 ], 
 and 
Binghui Wang\footnotemark[4 ] \footnotemark[1]\\ 
\footnotemark[4]\,\,{\small Department of Computer Science, Illinois Institute of Technology, Chicago, USA}\\
\footnotemark[2]\,\,{\small School of Mathematics and Computer Sciences, Nanchang University, Nanchang, China}\\
\footnotemark[3]\,\,{\small Department of Humanities, Social Science, and Communication, MSOE, Milwaukee, USA}\\
\vspace{-5mm}
}

\newcommand{\name}{\texttt{PGNNCert}}
\newcommand{\nameE}{\texttt{PGNNCert-E}}
\newcommand{\nameN}{\texttt{PGNNCert-N}}

\maketitle

\pagestyle{empty} 
\thispagestyle{empty}

\renewcommand{\thefootnote}{\fnsymbol{footnote}}
\footnotetext[1]{Corresponding author (bwang70@iit.edu)}
\renewcommand{\thefootnote}{\arabic{footnote}}

\begin{abstract}

Graph neural networks (GNNs) are becoming the de facto method to learn on the graph data and have achieved the state-of-the-art on node and graph classification tasks. 
However, recent works show GNNs are vulnerable to  training-time poisoning attacks -- marginally perturbing edges, nodes, or/and node features of training graph(s) can largely degrade GNNs' testing  performance. 
Most previous defenses 
against graph poisoning attacks 
are empirical and are soon broken by adaptive / stronger ones. 
A few provable defenses provide robustness guarantees, but have large gaps when applied in practice: 1)  restrict the attacker on only one type of perturbation; 2) design for a particular GNN architecture or task; and 3) robustness guarantees are not 100\% accurate. 

In this work, we bridge all these gaps by developing {\name}, the first certified defense of GNNs against poisoning attacks under arbitrary (edge, node, and node feature) perturbations with deterministic 
robustness guarantees. 
Extensive evaluations on multiple node and graph classification datasets and GNNs demonstrate the effectiveness of {\name} to provably defend against arbitrary poisoning perturbations. {\name} is also shown to significantly outperform the state-of-the-art certified defenses against edge perturbation or node perturbation during GNN training. 
\end{abstract}

\section{Introduction}

Graph Neural Network (GNN)~\cite{scarselli2008graph,kipf2017semi,hamilton2017inductive,velivckovic2018graph,xu2019powerful,wang2021semi}  is the leading approach for representation learning on graphs, showing state-of-the-art performance in various graph-related tasks like node classification and graph classification. In node classification, the goal is to predict labels for individual nodes, while in graph classification, the objective is to predict labels for entire graphs. GNNs have significantly advanced applications across fields such as chemistry~\cite{fung2021benchmarking}, 
physics~\cite{sanchez2020learning,shlomi2020graph},  neuroscience~\cite{bessadok2021graph},
and social science~\cite{fan2019graph}. 

However, various works~\cite{zugner2019adversarial,dai2018adversarial,wang2019attacking,xu2019topology,sun2020adversarial,zhang2021backdoor,wang2023turning,chenunderstanding,dai2023unnoticeable,ju2023let} have shown that GNNs are vulnerable to 
\emph{training-time} graph poisoning attacks
--- an attacker perturbs the graph structure during training such that the learnt poisoned GNN model will have low accuracy on predicting new test nodes/graphs. 
As a graph consists of three components: nodes, their features, and edges connecting the nodes, an attacker is allowed to perturb an individual component or their combinations. 
For instance, an attacker could inject a few nodes~\cite{sun2020adversarial,ju2023let}, slightly modify the edges~\cite{wang2019attacking,zugner2019adversarial,dai2018adversarial,xu2019topology} on the training graphs, and/or perturb features of certain nodes~\cite{zugner2019adversarial}.

Various empirical defenses \cite{zhu2019robust,tang2020transferring,entezari2020all,tao2021adversarial,zhang2020feature,zhao2021expressive} have been proposed to mitigate the graph poisoning attack, but were soon broken by adaptive attacks~\cite{mujkanovic2022defenses}. Most existing certified defenses \cite{jin2020certified,wang2021certified,jia2020certified,xia2024gnncert,lai2023nodeawarebismoothingcertifiedrobustness,li2025agnncert} are against test-time evasion attacks, with a few exceptions~\cite{jia2023pore,lai2023nodeawarebismoothingcertifiedrobustness}, leaving certified defenses against poisoning attacks largely unexplored\footnote{We note there exist some certified defense~\cite{rosenfeld2020certified, wang2020certifying,levine2020deep,jia2021intrinsic} against poisoning attacks but not for the graph data. In addition \cite{jia2023pore,lai2023nodeawarebismoothingcertifiedrobustness} show they achieve unsatisfactory performance when adapted to graph data.}. 
However, existing provable defenses face several limitations when applied in practice: 1) all restrict the attacker's capability to only one type of perturbation (e.g., node injection or edge perturbation); 2) they are designed for a particular GNN architecture or GNN task \cite{jin2020certified}; and 3) their robustness guarantee
is probabilistic (i.e., not 100\% accurate) \cite{lai2023nodeawarebismoothingcertifiedrobustness,jia2023pore}. 

We propose {\name} to address the above limitations. {\name} is the  \emph{first certified defense} for GNNs on the two most common \emph{node and graph classification tasks} against \emph{arbitrary poisoning perturbations} (i.e., arbitrarily manipulate the nodes, node features, and edges of training graph(s)) with  \emph{deterministic} robustness guarantees. 
Our defense is inspired by ensemble learning, and consists of three main steps: 
1) Divide each training graph into multiple subgraphs and allocate subgraphs of training graph(s) into multiple groups via a hash function; 2) Train a set of node/graph classifiers for each group and build a majority-voting node/graph classifier on the subgraphs; 3) Derive the deterministic robustness guarantee against arbitrary poisoning perturbations.      
{Following \cite{li2025agnncert}, we adapt} two graph division strategies---one is edge-centric and the other is node-centric---to realize our defense. The former strategy map edges, while the latter one map nodes from a given graph into multiple subgraphs.
Theoretically, {\name} provably predicts the same label for a test node/graph after training on the poisoned training set with arbitrary perturbation whose perturbation size (i.e., 
the total number of manipulated nodes, nodes with feature perturbations, and edges) is bounded by a threshold, which we call the \emph{certified perturbation size}. 
Empirically, we extensively evaluate {\name} on multiple graph datasets and multiple node and graph classifiers against arbitrary perturbations, and compared our methods with state-of-the-art certified defenses for node classification against node injection poisoning attack~\cite{lai2023nodeawarebismoothingcertifiedrobustness}, and for graph classification against edge manipulation~\cite{jia2021intrinsic}. 
Our results show {\name} significantly outperforms \cite{lai2023nodeawarebismoothingcertifiedrobustness}  under node-centric graph division, and outperforms \cite{xia2024gnncert} 
under both graph division methods. 

\noindent {\bf Contributions:} Our contributions are summarized below: 
\begin{itemize}[leftmargin=*]

\item We develop the first certified defense to robustify GNNs against arbitrary poisoning attack on the training set. 

\item We propose two strategies (edge-centric and node-centric) to realize our defense that leverages the
unique message-passing mechanism in GNNs.  

\item Our robustness guarantee is applicable to both node and graph classification tasks and accurate with probability 1. 

\item Our defense treat existing certified defenses as special cases, as well as significantly outperforming them. 
\end{itemize}
\section{Background and Problem Definition}
\label{sec:background}
\subsection{Graph Neural Network (GNN)}

Let a graph be $G=\{V,E,{\bf X}\}$, which consists of the nodes $V$, node features  ${\bf X}$, and edges $E$. We denote $u\in V$ as a node, $e=(u,v) \in E$ as an edge, and ${\bf X}_u$ as node $u$'s feature. 
% GNNs learn representations for graph data 
% %via iterative message-passing on graphs, which 
% that are used for downstream tasks node classification and graph classification. 
 Let 
 %the learnt representation of a node $v$ be $\bm{h}_v$,  
 $f_\theta$ be the node or graph classifier parameterized by $\theta$. $\mathcal{Y}$ is the label set, $y_v$ and $y_G$ are the groundtruth label of a node $v$ and a graph $G$, respectively. 

 \noindent {\bf Node classification:} 
 $f_\theta$ takes a graph $G$ as input and predicts each node $v \in G$ a label $\tilde{y}_v \in \mathcal{Y}$, i.e., $\tilde{y}_v = f_\theta(G)_v$.
% based on $v$'s learnt representation $\bm{h}_v$. That is, $\tilde{y}_v = f_\theta(G)_v = \texttt{softmax}(\bm{h}_v)$. 
 %To train the node classifier, 
Given a training node set $V_{\text{tr}} \subseteq V$ with ground-truth labels ${\bf y}_{\text{tr}} = \{y_v, v \in V_{\text{tr}}\}$, %the node classifier 
$f_\theta$ is learnt by minimizing a loss  $\mathcal{L}$ between the node predictions $\tilde{\bf y}_{\text{tr}}$ on $V_{\text{tr}}$ and the ground-truth ${\bf y}_{\text{tr}}$:
\begin{equation}
    %\theta = {\arg
    {\min_\theta}\mathcal{L}({\bf y}_{\text{tr}},\tilde{\bf y}_{\text{tr}};\theta), \tilde{\bf y}_{\text{tr}}=\{f_\theta (G)_{v}, v\in V_{\text{tr}}\}
\end{equation}

 %the trainer first puts $G$ forward into the model $f$ and obtains predicted labels ${\bf y}_{\text{tr}} = f(G)_{V_{\text{tr}}}$. Then $f$ is updated $f$ by the loss function $\mathcal{L}({\bf y}_{\text{tr}}, \overline{\bf y}_{\text{tr}})$.
 % In the paper, we consider the graph $G$ either undirected or directed. 

 %\vspace{+0.05in}
 \noindent {\bf Graph classification:}  $f_\theta$ takes a graph $G$ as input and predicts a label $\tilde{y}_G \in \mathcal{Y}$ for the whole graph $G$, i.e., $\tilde{y}_G = f(G)$.
 %using all nodes' representations $\{\bm{h}_v\}_{v\in G}$. For instance, when averaging all nodes' final representations, $y_G = f(G) = \texttt{softmax}(\texttt{Avg}(\{\bm{h}_v\}_{v \in G}))$. 
Given a set of training graphs $\mathcal{G}_{\text{tr}}$ with ground-truth labels ${\bf y}_{\text{tr}} = \{y_G, G \in \mathcal{G}_{\text{tr}} \}$. The graph classifier  $f_\theta$ is learnt by minimizing a loss function $\mathcal{L}$ between the predictions on $\mathcal{G}_{\text{tr}}$ and the ground-truth $\tilde{\bf y}_{\text{tr}}$:
\begin{equation}
    %\theta = {\arg
    {\min_\theta} \mathcal{L}({\bf y}_{\text{tr}},\tilde{\bf y}_{\text{tr}};\theta), \tilde{\bf y}_{\text{tr}}=\{f_\theta(G), G\in \mathcal{G}_{\text{tr}}\}
\end{equation}
\subsection{Poisoning Attack on GNNs}

%We use graph classification for ease of description.  
In poisoning attacks against GNNs, an attacker can manipulate %the input graph $G=\{V,E,{\bf X}\}$  
any training graph $G=\{V,E,{\bf X}\} \in \mathcal{G}_{\text{tr}}$ (For node classification, $\mathcal{G}_{\text{tr}} = \{G\}$) into a perturbed one $G' = \{V',E',{\bf X}'\}$ during training, where $V'$, $E'$, ${\bf X}'$ are the perturbed version of $V$, $E$, and ${\bf X}$, respectively. For simplicity, we denote the nodes, edges and features in training graph(s) as $\mathcal{V}$, $\mathcal{E}$, and $\mathcal{X}$, respectively. 
%$\mathcal{V}=\{V, \forall V\in \mathcal{G}_{\text{tr}}\}$, $\mathcal{E}=\{E, \forall E\in \mathcal{G}_{\text{tr}}\}$ and $\mathcal{X}=\{{\bf X}, \forall {\bf X}\in \mathcal{G}_{\text{tr}}\}$.

%\vspace{+0.05in}
\noindent {\bf Edge manipulation:} The attacker can 1) \emph{inject new edges} $\mathcal{E}_+$, and 2) \emph{delete existing edges}, denoted as $\mathcal{E}_-\subset \mathcal{E}$. 
% The graph $G$ after edge manipulation becomes $G' = \{V,E',{\bf X}\}$ 
% with 
% \begin{align}
%     E'=(E\setminusE_{-})\cupE_+
% \end{align}

% Note if $G$ is an undirected graph, the injected edges are all undirected as well, i.e, $E_+$ consists of several pairs of directed edges $(u^{*},v^{*})$ and $(v^{*},u^{*})$, $u^{*},v^{*}\inV$.

%\vspace{+0.05in}
\noindent {\bf Node manipulation:}
The attacker perturbs $\mathcal{G}_{\text{tr}}$ by (1) \emph{injecting new nodes} $\mathcal{V}_+$,  whose node feature denoted as $\mathcal{X}'_{\mathcal{V}_+}$ can be arbitrary, together with the arbitrarily injected new edges $\mathcal{E}_{\mathcal{V}_+} \subseteq \{(u,v) \notin \mathcal{E}, \forall u \in \mathcal{V}_+ \vee v \in \mathcal{V}_+ \}$ induced by 
% the newly injected nodes 
$\mathcal{V}_+$; and (2) \emph{deleting existing nodes} $\mathcal{V}_- \subset \mathcal{V}$. When $\mathcal{V}_-$ are deleted, their features  $\mathcal{X}_{V_-} \subseteq \mathcal{X}$ and all connected edges  $\mathcal{E}_{\mathcal{V}_-} = \{(u,v) \in \mathcal{E}, \forall u \in \mathcal{V}_- \vee v \in \mathcal{V}_- \}$ are also removed. We denote that for the node classification case, the injected and deleted nodes are not from $V_{\text{tr}}$.
%restricted to be out of $V_{\text{tr}}$ or $V_{\text{test}}$.
%from $G$. 
% With node manipulation, the graph $G$ becomes a perturbed graph $G' = \{V',E',{\bf X}'\}$ with    

% {\vspace{-4mm}
% \small
% \begin{align}
% V'&=(V\setminusV_{-})\cupV_+ \\ 
% E'&=(E\setminus E_{V_-})\cupE_{V_+} \\
% {\bf X}'&=({\bf X}\setminus{\bf X}_{V_-})\cup{\bf X}'_{V_+}
% \end{align}
% }%

%\vspace{+0.05in}
\noindent {\bf Node feature manipulation:} 
The attacker arbitrarily manipulates features $\mathcal{ X}_{\mathcal{V}_r}$ of a set of representative nodes $\mathcal{V}_{r}$ to be $\mathcal{X}'_{\mathcal{V}_r}$. 
We also denote the edges connected with nodes $\mathcal{V}_{r}$  
%with feature manipulation 
as $\mathcal{E}_{\mathcal{V}_r}=\{(u,v) \in \mathcal{E}: \forall u \in \mathcal{V}_{r} \vee v \in \mathcal{V}_{r} \}$.

%\vspace{+0.05in}
\noindent {\bf Arbitrary manipulation:} The attacker can manipulate a training graphs in $\mathcal{G}_{\text{tr}}$ with an 
%We define an arbitrary attack on the graph $G$ as an 
arbitrary combined perturbations on 
edges, nodes, and node features for each of them.  The attacker can manipulate several training graphs at the same time with different combinations of attack.
% Then the perturbed graph is $G' = \{V',E',{\bf X}'\}$ with:  

% {\vspace{-4mm}
% \small
% \begin{align}
% &V' = (V\setminusV_-)\cup V_+\\
% &E' = (E\setminus(E_{V_-}\cupE_-))\cupE_{V_+}\cupE_+\\
% &{\bf X}' = ({\bf X}\setminus({\bf X}_{V_{-}}\cup {\bf X}_{V_r})) \cup {\bf X}'_{V_+}\cup {\bf X}'_{V_r}
% \end{align}
% }%

\emph{For description simplicity, we will use $\{\mathcal{E}_+, \mathcal{E}_-\}$ to indicate the edge manipulation with arbitrary injected edges $E_+$  and deleted edges $\mathcal{E}_-$ on $\mathcal{G}_{\text{tr}}$. Similarly, we will use $\{\mathcal{V}_+, \mathcal{E}_{\mathcal{V}_+}, \mathcal{X}'_{\mathcal{V}_+}, \mathcal{V}_-, \mathcal{E}_{\mathcal{V}_-}\}$ to indicate the node manipulation, and $\{\mathcal{V}_r, \mathcal{E}_{\mathcal{V}_r},\mathcal{X}'_{\mathcal{V}_r}\}$ the node feature manipulation. Any combination of the manipulations is inherently well-defined.} 
\subsection{Problem Statement}
\label{sec:problem}

{\bf Threat model:} Given a node/graph classifier $f$, a training graph set $\mathcal{G}_{\text{tr}}$, the adversary can \emph{arbitrarily} manipulate a number of the edges, nodes, and node features in any graph of $\mathcal{G}_{\text{tr}}$, such that after training, $f$ misclassifies target graphs in graph classification or target nodes in node classification. Since we focus on certified defenses, we consider the strongest attack where the adversary has white-box access to $\mathcal{G}_{\text{tr}}$ i.e., it knows all the edges, nodes, and node features in $\mathcal{G}_{\text{tr}}$.%, and all the model parameters about $f$.

\vspace{+0.05in}
\noindent {\bf Defense goal:}
We aim to build provably robust GNNs against poisoning attacks that: 
\begin{itemize}[leftmargin=*]
%\vspace{-2mm}
\item  has a deterministic robustness guarantee; 
%\vspace{-2mm}
\item is suitable for both node and graph classification tasks; 
%\vspace{-2mm}
\item provably predicts the same label against the arbitrary poisoning perturbation within the  \emph{certified perturbation size}. 
%when the \emph{perturbation size}, i.e., the total number of manipulated nodes, nodes with feature perturbation, and edges, is bounded by a threshold, which we call the \emph{certified perturbation size}. 
%\vspace{-2mm}
\end{itemize}

Our ultimate goal is to obtain the largest-possible certified perturbation size that satisfies all the above conditions.

\section{Our Certified Defense: {\name}}

\begin{figure*}[t]
\vspace{-4mm}
    \centering
    \captionsetup[subfloat]{labelsep=none, format=plain, labelformat=empty}

    \includegraphics[width=0.95\linewidth]{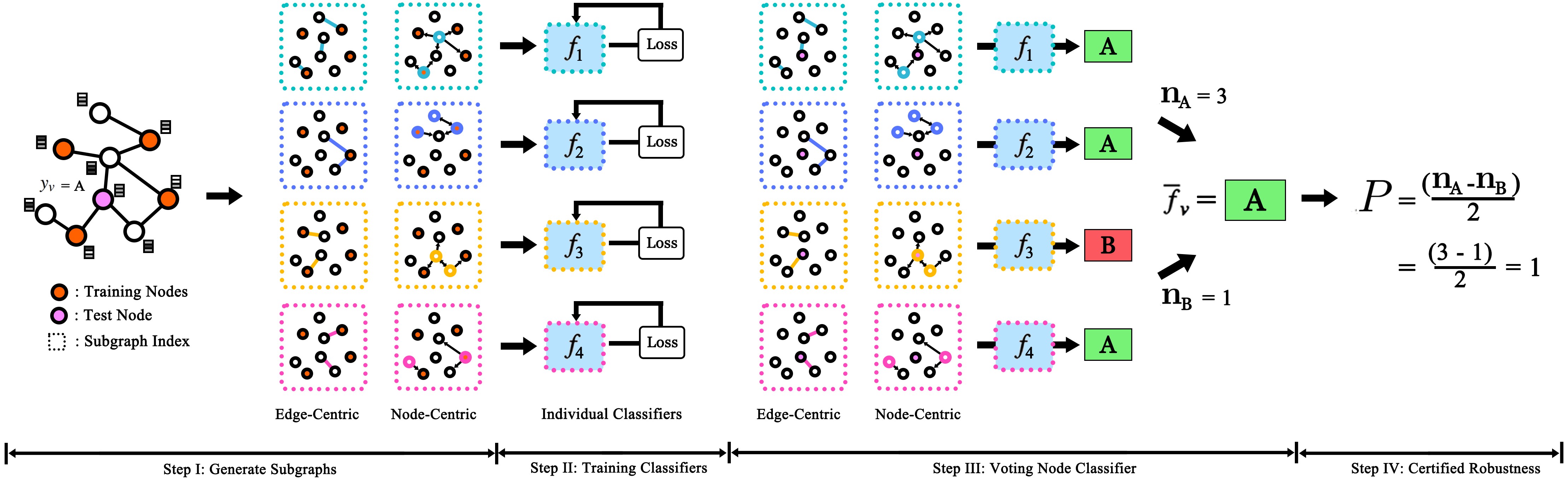}
    \vspace{-2mm}
    \caption{Overview of our {\name} (use node classification for illustration), which consists of four steps. 
    }
    \label{fig:overview}
   \vspace{-5mm}
\end{figure*}

\subsection{Overview}
\label{Sec:overview}

{Our method is inspired by previous work \cite{xia2024gnncert, li2025agnncert}, which divides a test input into several sub-parts and assembles a voting GNN classifier on the sub-parts. We generalize this idea by designing the dividing strategy tailored to training set.} 
Specifically, it consists of four steps below: 

\noindent {\bf In the first step, we divide the training graph set into multiple subgraph sets.} 
For each training graph $G\in \mathcal{G}_{\text{tr}}$ with ground-truth label ${y}_G\in \mathbf{y}_{\text{tr}}$, we divide it into $S$ subgraphs ${G}_{1},G_{2},\dots,G_{S}$ via a hash function and ensure  edges in different subgraphs are \emph{disjoint}. This process is detailed in Section \ref{Sec:E-C_Division}-\ref{Sec:N-C_Division}.
By collecting these subgraphs, we build $S$ sets of subgraphs $\mathcal{G}_{[S]}=\{\mathcal{G}_{1},\mathcal{G}_{2},\dots,\mathcal{G}_{S}\}$. In each subgraph set $\mathcal{G}_{i}$, there exists an exact subgraph $G_{i}$ generated from $G$, and we label $G_{i}$ the same label ${y}_{G}$ as $G$. 

\noindent \emph{\bf In the second step, we train multiple classifiers with the respective subgraph sets.} 
On each subgraph set $\mathcal{G}_{i}$, we initialize a classifier $f_{{i}}$ with weights $\theta_{i}$ and train it below: 

{
\vspace{-4mm}
\small
\begin{align*}
& \textbf{Node classifier: }  \min_{\theta_{i}} \mathcal{L}({\bf y}_{\text{tr}},\tilde{\bf y}_{i};\theta_{i}), \tilde{\bf y}_{i}=\{f_{i}(G)_{v}, v\in{V}_{\text{tr}}\}\\
& \textbf{Graph classifier: }  \min_{\theta_{i}} \mathcal{L}({\bf y}_{\text{tr}},\tilde{\bf y}_{i};\theta_{i}), \tilde{\bf y}_{i}=\{f_{i}(G), G\in\mathcal{G}_{\text{tr}}\}
\end{align*}
}

\noindent {\bf In the third step, we build the voting-based classifier based on the trained sub-classifiers.} 
Given a test graph $G$, we first divide it into $S$ subgraphs $\{G_{1},\dots,G_{S}\}$ by the same subgraph division method. Then we apply a voting classifier $\overline{f}$, which assembles the predictions of the trained classifiers $f_{[S]}$ on the subgraphs:

{
\vspace{-4mm}
\small
\begin{align}
& \textbf{Node classifier: } {\bf  n}_{y_v} = \sum\nolimits_{i=1}^{S}\mathbb{I}(f_{i}(G_{i})_v=y_v), \forall y_v \in \mathcal{Y} \label{eqn:vote_NC} \\
& \textbf{Graph clasifier: } {\bf  n}_{y_G} = \sum\nolimits_{i=1}^{S}\mathbb{I}(f_{i}(G_{i})=y_G), \forall y_G \in \mathcal{Y} \label{eqn:vote_GC} 
\end{align}
}

We then define our \emph{voting node/graph classifier} $\bar{f}$ as returning the class with the most vote:  

{
\vspace{-4mm}
\small
\begin{align}
& \textbf{Voting node classifier: } \bar{f}(G)_v = \underset{y_v \in \mathcal{Y}}{\arg\max} \, {\bf   n}_{y_v} \label{eqn:vc_NC} \\
& \textbf{Voting graph classifier: } \bar{f}(G) = \underset{y_G \in \mathcal{Y}}{\arg\max} \, { \bf  n}_{y_G} \label{eqn:vc_GC} 
\end{align}
}

\noindent {\bf In the forth step, we derive the deterministic robustness guarantee for the test graph.} We denote $y_{a}$ and $y_{b}$ as the class with the most vote ${\bf  n}_{y_{a}}$ and the second-most vote ${\bf n}_{y_{b}}$, respectively. We pick the class with a smaller index if ties exist. Denote $\mathcal{G}_\text{tr}'$ as the perturbed train dataset of $\mathcal{G}_\text{tr}$, and $\mathcal{G}_{[S]}'=\{\mathcal{G}_{1}', \mathcal{G}_{2}',\dots,\mathcal{G}_{S}'\}$ be the perturbed subgraph sets generated from $\mathcal{G}_{\text{tr}}'$ under the same graph division strategy. Then we have the below condition for certified robustness against arbitrary poisoning attacks on GNNs.

\begin{theorem}[Sufficient Condition for Certified Robustness]
\label{thm:suffcond}
Let $y_a, y_b, {\bf  n}_{y_a}, {\bf  n}_{y_b}$ be defined above in node classification or graph classification, and let $P = {\lfloor {\bf  n}_{y_a}-{\bf  n}_{y_b}-\mathbb{I}(y_{a}>y_{b})\rfloor} / {2}$. The voting classifier $\bar{f}$ trained on $\mathcal{G}_\text{tr}$ guarantees the same prediction on $G$ for the target node $v$ in node classification or the target graph $G$ in graph classification with the poisoned voting classifier $\bar{f}'$, if the number of different sub-classifiers (i.e., different in weights) trained on $\mathcal{G}_{[S]}$ and $\mathcal{G}_{[S]}'$ under the arbitrary perturbation is bounded by $P$. I.e., 

{
\vspace{-4mm}
\small
\begin{align}
    & \forall \mathcal{G}_{\text{tr}}': \sum\nolimits_{i=1}^{S}\mathbb{I}(\theta_{i}\neq \theta_{i}') \leq P \implies \bar{f}(G)_v = \bar{f}'(G)_v \label{eqn:suff-NC} \\ 
    & \forall \mathcal{G}_{\text{tr}}': \sum\nolimits_{i=1}^{S}\mathbb{I}(\theta_{i} \neq \theta_{i}') \leq P  \implies \bar{f}(G) = \bar{f}'(G)
    \label{eqn:suff-GC}
\end{align}
}
\end{theorem}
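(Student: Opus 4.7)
The plan is to reduce the problem to a standard majority-voting stability argument, exploiting the fact that the test graph $G$ (and hence its subgraph decomposition $\{G_1,\dots,G_S\}$) is unaffected by the training-set poisoning. First, I would observe that the deterministic divisions in Sections \ref{Sec:E-C_Division}--\ref{Sec:N-C_Division} depend only on $G$ itself, so the $i$-th sub-input to sub-classifier $f_i$ is the same under $\mathcal{G}_\text{tr}$ and under $\mathcal{G}_\text{tr}'$. Combined with the assumption $\theta_i=\theta_i'$, this gives $f_i(G_i)=f_i'(G_i)$, and similarly for the node-classification variant; hence a sub-classifier can change its vote between $\bar f$ and $\bar f'$ only if $\theta_i\neq\theta_i'$.

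Next, I would translate the hypothesis $\sum_{i=1}^{S}\mathbb{I}(\theta_i\neq\theta_i')\le P$ into bounds on the perturbed vote counts. For the top class $y_a$, at most $P$ of its votes can be lost, so $\mathbf{n}_{y_a}'\ge \mathbf{n}_{y_a}-P$. For any other class $y\neq y_a$, at most $P$ extra votes can be added, so $\mathbf{n}_{y}'\le \mathbf{n}_{y}+P\le \mathbf{n}_{y_b}+P$ (using that $y_b$ is the runner-up by definition, with ties broken by smaller index). It then suffices to verify that $y_a$ still beats every other class $y$ under the voting rule $\bar f(\cdot)=\arg\max$ with smaller-index tie-breaking.

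The case analysis on tie-breaking is where the $\mathbb{I}(y_a>y_b)$ term enters. If $y_a<y_b$, then $y_a$ retains victory as long as $\mathbf{n}_{y_a}'\ge \mathbf{n}_{y_b}'$, i.e.\ $\mathbf{n}_{y_a}-P\ge \mathbf{n}_{y_b}+P$, which is equivalent to $P\le \lfloor(\mathbf{n}_{y_a}-\mathbf{n}_{y_b})/2\rfloor$. If $y_a>y_b$, $y_a$ needs a strict majority, i.e.\ $\mathbf{n}_{y_a}-P>\mathbf{n}_{y_b}+P$, which becomes $P\le \lfloor(\mathbf{n}_{y_a}-\mathbf{n}_{y_b}-1)/2\rfloor$. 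Both inequalities are captured by the single formula $P=\lfloor(\mathbf{n}_{y_a}-\mathbf{n}_{y_b}-\mathbb{I}(y_a>y_b))/2\rfloor$ in the statement, so the hypothesis $\sum_i\mathbb{I}(\theta_i\neq\theta_i')\le P$ is exactly enough to preserve the prediction.

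The only subtle step, and the one I would be most careful about, is checking that comparing only against $y_b$ is sufficient. Any competitor $y$ with $y<y_a$ could in principle win ties more easily than $y_b$ does, but since $\mathbf{n}_y\le \mathbf{n}_{y_b}$ (strictly, if indices disagree with the tie-breaking order, the runner-up $y_b$ was chosen precisely to maximize adversarial leverage under the smaller-index rule), the worst-case competitor is already $y_b$. A short verification of this claim, done by a case split on whether a candidate $y$ has smaller or larger index than $y_a$, closes the argument for both \eqref{eqn:suff-NC} and \eqref{eqn:suff-GC} simultaneously, since the proof treats the per-sample prediction $f_i(G_i)_v$ or $f_i(G_i)$ uniformly.
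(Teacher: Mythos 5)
Your proposal is correct and follows essentially the same route as the paper's proof: each sub-classifier whose weights change can move at most one vote away from $y_a$ and at most one vote toward a competitor, so the margin shrinks by at most $2P$, and the tie-breaking indicator $\mathbb{I}(y_a>y_b)$ absorbs the index comparison. The only cosmetic difference is that you handle the ``is $y_b$ the worst-case competitor?'' question by an explicit case split on indices, whereas the paper packages the same fact into the runner-up inequality ${\bf n}_{y_b}-\mathbb{I}(y_b>y_c)\geq {\bf n}_{y_c}$ together with $\mathbb{I}(y_a>y_b)\geq \mathbb{I}(y_a>y_c)-\mathbb{I}(y_b>y_c)$; both verifications go through.
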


Proof is in Appendix \ref{supp:proofs}.The above theorem motivates us to design the graph division method such that: 1) the number of different sub-classifiers with trained on $\mathcal{G}_{[S]}$ and $\mathcal{G}_{[S]}'$ can be upper bounded (and the smaller the better).  
2) the difference between ${\bf  n}_{y_a}$ and ${\bf  n}_{y_b}$ is as large as possible, ensuring larger certified perturbation size.  

Next, we introduce our two graph division methods.  
Figure~\ref{fig:subgraphs} visualizes the divided subgraphs of the two methods without and with the adversarial manipulation.  

\subsection{Edge-Centric Graph Division}
\label{Sec:E-C_Division}

Our first graph division method is edge-centric.
The idea is to divide \emph{edges} in a graph into different subgraphs, such that each edge is deterministically mapped into \emph{only one subgraph}. 
With this strategy, we can bound the number of altered classifiers trained on these subgraphs before and after the arbitrary perturbation (Theorem~\ref{thm:edgebased}), which facilitates deriving the certified perturbation size (Theorem~\ref{thm:certifyedgebased}). 
All proofs are detailed in Appendix \ref{supp:proofs}.

\vspace{-3mm}
\subsubsection{Generating edge-centric subgraphs}
We follow \cite{xia2024gnncert,li2025agnncert} to 
use a hash function $h$ (e.g., MD5) to generate the subgraphs for every train graph $G\in \mathcal{G}_{\text{tr}}$. A hash function takes a bit string as input and outputs an integer (e.g., within a range $[0,2^{128}-1]$). We uses the string of edge or node index as the input to the hash function. 
For instance, for a node $u$,  its string is denoted as $\texttt{str}(u)$, while for an edge $e=(u,v)$, its string is $\texttt{str}(u)+\texttt{str}(v)$, where ``+" means string concatenation, and $\texttt{str}$ turns the node index into a string and adds “0” prefix to align it into a fixed length.

Assuming $S$ subgraphs in total, the subgraph index $i_e$ of every edge $e=(u,v)$ is defined as\footnote{In the undirected graph, we put the node with a smaller index (say  $u$) first and let $h[\mathrm{str}(v) + \mathrm{str}(u)]=h[\mathrm{str}(u) + \mathrm{str}(v)]$.} 

{
\vspace{-4mm}
\small
\begin{align}
\label{eqn:edgehash}
i_e = h[\mathrm{str}(u) + \mathrm{str}(v)] \, \, \texttt{mod} \, \, S+1, 
\end{align}
}
where $\texttt{mod}$ is the module function. Denoting $\mathcal{E}^i$ as the set of edges whose subgraph index is $i$, i.e., $\mathcal{E}^i = \{\forall e \in \mathcal{E}: i_e= i \}$,  $S$ subgraphs for $G$ can be built as $\{ {G}_i = (\mathcal{V},\mathcal{E}^i,{\bf X}): i=1,2,\cdots, S\}$, where edges in different subgraphs are disjoint, i.e., $\mathcal{E}^i \cap \mathcal{E}^j =  \emptyset, \forall i,j \in \{1, \cdots, S\}, i \neq j$. {Here, we mention that we need to further postprocess the subgraphs for graph classification, in order to derive the robustness guarantee. 
Particularly, in each subgraph ${G}_i$, we remove all isolated nodes who have no connected edges. This is because although they have no influence on other nodes' representation, their information would still be passed to the global graph embedding aggregation.}

After dividing all training graphs in $\mathcal{G}_{\text{tr}}$, we combine all generated subgraphs with the same index as a separate subgraph training set: $\mathcal{G}_{i}=\{G_{i}, \forall G\in \mathcal{G}_\text{tr}\}, \forall i \in [S]$.
We denote the $S$ training sets as $\mathcal{G}_{[S]} = \{\mathcal{G}_{1},\cdots, \mathcal{G}_{S}\}$.

\begin{figure*}[t]
    \centering
    \captionsetup[subfloat]{labelsep=none, format=plain, labelformat=empty}

    \subfloat[{(a) Edge-Centric Graph Division against edge injection and node injection attacks}]{
     \includegraphics[width=0.9\linewidth]{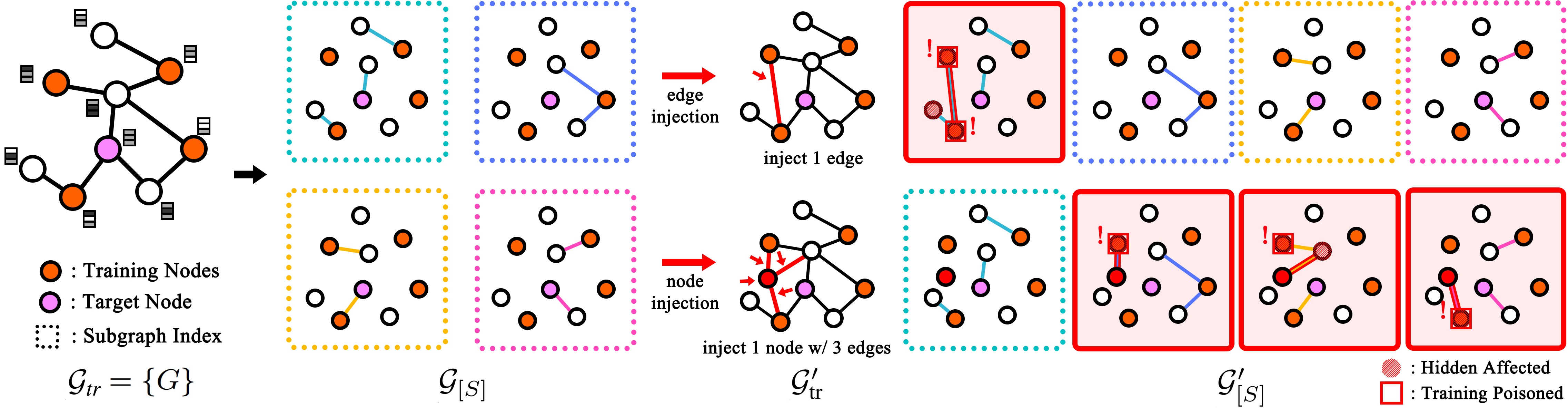}}
   
    \hspace{+20mm}
    \subfloat[{(b) Node-Centric Graph Division against edge injection and node injection attacks}]{
    \includegraphics[width=0.9\linewidth]{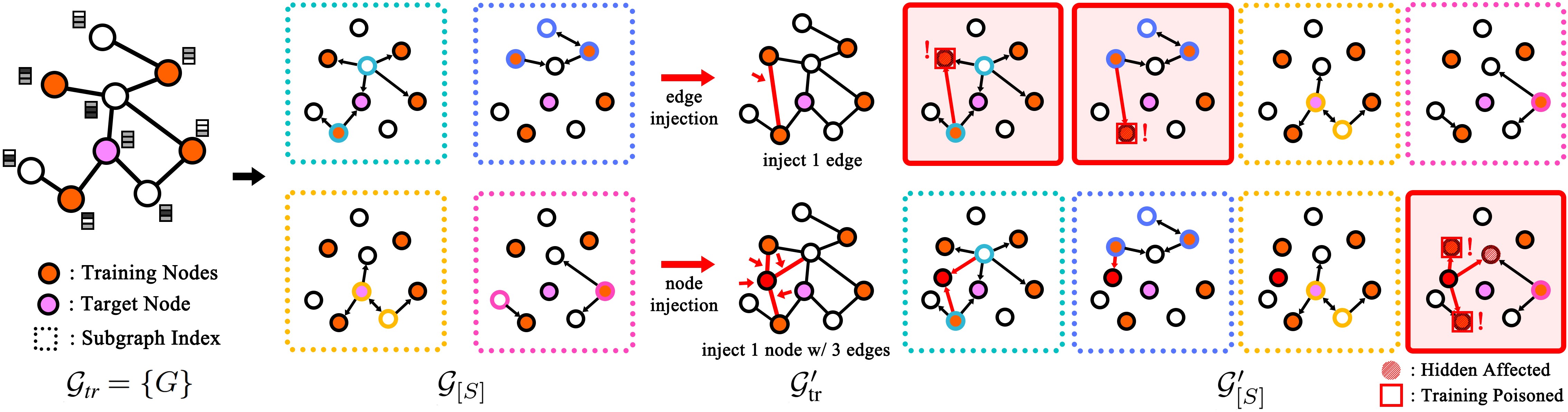}}
    
    \vspace{-2mm}
    \caption{Illustration of our edge-centric and node-centric graph division strategies for node classification. 
    We use edge injection and node injection poisoning attacks to show the bounded number of altered predictions on the generated subgraphs after the attack. 
    Figures~\ref{fig:subgraphs_NC_more}-\ref{fig:subgraphs_GC} in Appendix also show other attacks and on graph classification.}
    \label{fig:subgraphs}
    \vspace{-5mm}
\end{figure*}

\vspace{-2mm}
\subsubsection{Bounding the number of different sub-classifiers}
For a perturbed training set $\mathcal{G}'_{\text{tr}}$, we use the same graph division strategy to generate a set of $S$ subgraphs sets $\mathcal{G}'_{[S]}=\{\mathcal{G}'_1, \mathcal{G}'_2, \cdots, \mathcal{G}'_T\}$. Then, we can upper bound the number of different classifiers trained on $\mathcal{G}_{[S]}$ and $\mathcal{G}'_{[S]}$ against any individual perturbation. 

\begin{theorem}[Bounded Number of Edge-Centric Subgraphs with Altered Predictions under Arbitrary Perturbation]
\label{thm:edgebased} 
\vspace{-2mm}
Given any training graph set $\mathcal{G}_{\text{tr}}$,  
and $S$ edge-centric subgraph sets $\mathcal{G}_{[S]}$ for $\mathcal{G}_{\text{tr}}$. 
A perturbed training set $\mathcal{G}_{\text{train]}}'$ of
$\mathcal{G}_{\text{train]}}$ is 
with arbitrary edge manipulation $\{\mathcal{E}_+,\mathcal{E}_-\}$, node manipulation  
$\{\mathcal{V}_+, \mathcal{E}_{\mathcal{V}_+}, \mathcal{V}_-, \mathcal{E}_{\mathcal{V}_-}\}$, and node feature manipulation 
$\{{\bf X}_{\mathcal{V}_r}, \mathcal{V}_r, \mathcal{E}_{\mathcal{V}_r}\}$ on arbitrary graphs in $\mathcal{G}_{\text{tr}}$. 
Then at most $p=|\mathcal{E}_+| + |\mathcal{E}_-| + |\mathcal{E}_{\mathcal{V}_+}| + |\mathcal{E}_{\mathcal{V}_-}| + |\mathcal{E}_{\mathcal{V}_r}|$ 
node/graph sub-classifiers in $f_{[S]}$ are different in weights between training on the subgraph sets $\mathcal{G}_{[S]}$ and on the perturbed subgraph sets $\mathcal{G}'_{[S]}$. 
In other words, $\sum_{i=1}^{S}\mathbb{I}(\theta_{i}\neq \theta_{i}') \leq p$ for both node classification case and graph classification case. 
\vspace{-2mm}
\end{theorem}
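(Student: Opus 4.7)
The plan is to reduce the theorem to a purely combinatorial statement about which subgraph indices each perturbation can touch, and then invoke a union bound across the five perturbation buckets. The key structural fact enabled by edge-centric hashing is that every edge $e=(u,v)$ is mapped to a unique, deterministic index $i_e = h[\texttt{str}(u)+\texttt{str}(v)] \bmod S + 1$; consequently, inserting or deleting $e$ alters at most the subgraph set $\mathcal{G}_{i_e}$ and leaves the other $S-1$ sets bit-identical. I would also state upfront the (standing) assumption that training is deterministic given the dataset and initialization, so that $\mathcal{G}_i = \mathcal{G}_i' \Rightarrow \theta_i = \theta_i'$, reducing the problem to counting indices with $\mathcal{G}_i \ne \mathcal{G}_i'$.

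I would then walk through the perturbation types one by one. Pure edge manipulation trivially contributes at most $|\mathcal{E}_+|+|\mathcal{E}_-|$ affected indices by the hash-injectivity argument. Node injection can only perturb a subgraph through the new incident edges $\mathcal{E}_{\mathcal{V}_+}$, because a newly injected node without incident edges is isolated in every subgraph and is either pruned by the isolated-node postprocessing (graph classification) or decoupled from every training node's message passing (node classification, where $\mathcal{V}_+\cap V_{\text{tr}}=\emptyset$); so at most $|\mathcal{E}_{\mathcal{V}_+}|$ indices change. Node deletion is analogous and contributes at most $|\mathcal{E}_{\mathcal{V}_-}|$. Node feature manipulation on $\mathcal{V}_r$ can only alter $\mathcal{G}_i$ for those $i$ such that some edge of $\mathcal{E}_{\mathcal{V}_r}$ lies in $\mathcal{E}^i$, since otherwise every node of $\mathcal{V}_r$ is isolated in $G_i$ and its perturbed feature neither propagates via message passing nor survives isolated-node pruning. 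A union bound across the five buckets yields $\sum_{i=1}^{S}\mathbb{I}(\theta_i\ne\theta_i') \le |\mathcal{E}_+|+|\mathcal{E}_-|+|\mathcal{E}_{\mathcal{V}_+}|+|\mathcal{E}_{\mathcal{V}_-}|+|\mathcal{E}_{\mathcal{V}_r}| = p$, and the same argument applies verbatim to both node and graph classification.

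I expect the main obstacle to be the feature-manipulation case rather than the edge cases, which are immediate from the injectivity of the hash map. Justifying that a modified raw feature on a node isolated in $G_i$ cannot change the training trajectory on $\mathcal{G}_i$ requires a careful appeal to the message-passing structure (features only influence neighbors along edges) together with, in node classification, the observation that an isolated non-training node contributes nothing to the loss. A secondary subtlety is the combined perturbation setting: one must verify that the sets $\mathcal{E}_+,\mathcal{E}_-,\mathcal{E}_{\mathcal{V}_+},\mathcal{E}_{\mathcal{V}_-},\mathcal{E}_{\mathcal{V}_r}$ can be treated additively rather than with overlaps that inflate the bound, which follows from the fact that the theorem bounds a union of disturbed indices, not a sum of disturbed edges, so any incidental overlap only helps.
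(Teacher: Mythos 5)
Your proposal is correct and follows essentially the same route as the paper: the paper also decomposes the bound into the edge-, node-, and feature-manipulation cases, uses the disjointness of the hashed edge partition to charge each perturbed edge to a single subgraph index, invokes the message-passing/isolated-node observation to show that nodes without incident edges in a subgraph cannot alter its training trajectory, and then sums the per-case bounds. Your explicit remarks on deterministic training, the non-training status of injected nodes, and the harmlessness of overlaps are refinements the paper leaves implicit, but they do not change the argument.
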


\vspace{-3mm}
\subsubsection{Deriving the robustness guarantee}
Based on Theorems~\ref{thm:suffcond} and \ref{thm:edgebased}, we can derive the certified perturbation size as the maximal perturbation such that Equation~\ref{eqn:suff-NC} or Equation~\ref{eqn:suff-GC} is satisfied. Formally,

\begin{theorem}[Certified Robustness Guarantee with Edge-Centric Subgraphs against Arbitrary Perturbation]
\vspace{-2mm}
\label{thm:certifyedgebased} 
Let $f, y_a, y_b, {\bf  n}_{y_a}, {\bf  n}_{y_b}$ be defined above for edge-centric subgraphs, and  
$p$ be the perturbation size induced by an arbitrary perturbed training set $\mathcal{G}_{\text{tr}
}'$ on $\mathcal{G}_{\text{tr}}$. 
After training on $\mathcal{G}'_{tr}$ and $\mathcal{G}'_{tr}$, the voting classifier $\bar{f}$ and poisoned classifier $\bar{f}'$ guarantee the same prediction for the target node $v$ in node classification (i.e., $\bar{f}(G')_v = \bar{f}(G)_v$) or target graph $G$ in graph classification (i.e., $\bar{f}(G') = \bar{f}(G)$), when $p$ satisfies.
\begin{align}
\label{eqn:cpz_edge}
p \leq P = {\lfloor {\bf  n}_{y_a}-{\bf  n}_{y_b}-\mathbb{I}(y_{a}>y_{b})\rfloor} / {2}.
\end{align}
Or to say, the maximum certified perturbation size is  $P$.
\end{theorem}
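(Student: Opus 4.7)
The plan is to obtain Theorem~\ref{thm:certifyedgebased} as an immediate composition of the two preceding results. Theorem~\ref{thm:suffcond} already reduces certified robustness of the voting classifier to a bound on the number of sub-classifiers whose trained weights change under perturbation, namely $\sum_{i=1}^{S}\mathbb{I}(\theta_i\neq\theta_i')\leq P$ with $P=\lfloor {\bf n}_{y_a}-{\bf n}_{y_b}-\mathbb{I}(y_a>y_b)\rfloor/2$. Theorem~\ref{thm:edgebased} in turn bounds that same quantity by the geometric perturbation size $p=|\mathcal{E}_+|+|\mathcal{E}_-|+|\mathcal{E}_{\mathcal{V}_+}|+|\mathcal{E}_{\mathcal{V}_-}|+|\mathcal{E}_{\mathcal{V}_r}|$. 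So the proof amounts to chaining these two inequalities.

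Concretely, I would start from an arbitrary perturbed training set $\mathcal{G}_{\text{tr}}'$ with edge, node, and node-feature manipulations whose induced edge-count is $p$. Applying Theorem~\ref{thm:edgebased} to the edge-centric subgraph sets $\mathcal{G}_{[S]}$ and $\mathcal{G}_{[S]}'$ yields $\sum_{i=1}^{S}\mathbb{I}(\theta_i\neq \theta_i')\leq p$. Under the hypothesis $p\leq P$, transitivity gives $\sum_{i=1}^{S}\mathbb{I}(\theta_i\neq \theta_i')\leq P$, which is exactly the antecedent of the implication in Equation~\ref{eqn:suff-NC} for node classification and Equation~\ref{eqn:suff-GC} for graph classification. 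Invoking Theorem~\ref{thm:suffcond} then gives $\bar{f}'(G)_v=\bar{f}(G)_v$ or $\bar{f}'(G)=\bar{f}(G)$, as required.

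To justify the phrase ``the maximum certified perturbation size is $P$'', I would add a brief tightness remark rather than a full counterexample construction: each additional altered sub-classifier can shift the vote gap ${\bf n}_{y_a}-{\bf n}_{y_b}$ by at most $2$ (one vote lost by $y_a$ and one gained by $y_b$), and the $\mathbb{I}(y_a>y_b)$ correction handles the tie-breaking rule that picks the smaller class index. Setting up the arithmetic so that $P+1$ altered classifiers can in principle drive the gap to be non-positive (or to tie in the wrong direction) confirms that the bound in Equation~\ref{eqn:cpz_edge} cannot be loosened using only the hypotheses of Theorems~\ref{thm:suffcond} and \ref{thm:edgebased}.

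I do not expect a significant obstacle here, since all the substantive work, namely the edge-disjoint hashing argument that guarantees each manipulated edge can touch at most one subgraph set (Theorem~\ref{thm:edgebased}), and the combinatorial analysis of the voting gap (Theorem~\ref{thm:suffcond}), has already been carried out. The only care required is to ensure the bookkeeping is consistent between the two classification tasks: both Equations~\ref{eqn:suff-NC} and \ref{eqn:suff-GC} share the same antecedent, so a single application of the chained inequality delivers the conclusion for node classification and graph classification simultaneously.
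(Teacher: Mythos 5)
Your proposal is correct and matches the paper's own derivation exactly: the paper also obtains Theorem~\ref{thm:certifyedgebased} by chaining the bound $\sum_{i=1}^{S}\mathbb{I}(\theta_i\neq\theta_i')\leq p$ from Theorem~\ref{thm:edgebased} with the hypothesis $p\leq P$ and then invoking the sufficient condition of Theorem~\ref{thm:suffcond}. Your added tightness remark goes slightly beyond what the paper argues (the paper simply asserts maximality), but the core composition argument is the same.
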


\subsection{Node-Centric Graph Division}
\label{Sec:N-C_Division}

We observe the robustness guarantee under edge-centric graph division is largely dominated by the  
edges (i.e., $\mathcal{E}_{\mathcal{V}_+}, \mathcal{E}_{\mathcal{V}_-}$) induced by the manipulated nodes $\mathcal{V}_+, \mathcal{V}_-$, and edges $\mathcal{E}_{\mathcal{V}_r}$ by the perturbed node features ${\bf X}'_{\mathcal{V}_r}$. 
This guarantee could be weak against node or node feature manipulation, 
as the number of edges (i.e., $|\mathcal{E}_{\mathcal{V}_+}|, |\mathcal{E}_{\mathcal{V}_-}|, |\mathcal{E}_{\mathcal{V}_r}|$) could be much larger, compared with the number of the nodes (i.e., $|{\mathcal{V}_+}|, |{\mathcal{V}_-}|, |{\mathcal{V}_r}|$). 
For instance, an injected node could link with many edges to a given graph in practice, and when the number exceeds $P$ in Equation~\ref{eqn:cpz_edge},  the certified robustness guarantee is ineffective. 
This flaw inspires us to generate subgraphs, where % against node and node feature manipulation. Particularly, 
we expect at most one subgraph is affected 
under every node or node feature manipulation on a training graph (this means all edges of a manipulated node should be in a same subgraph), implying only a training subgraph set is affected. 
{Following \cite{li2025agnncert},} we apply a tailored {node-centric graph division} strategy to achieve our goal.

\vspace{-2mm}
\subsubsection{Generating node-centric directed subgraphs}
We use a hash function $h$ to generate directed subgraphs for a given train graph $G=(\mathcal{V},\mathcal{E},{\bf X}) \in \mathcal{G}_{\text{tr}}$. 
Our node-centric graph division strategy as follow: (1) we treat every undirected edge $e=(u,v) \in G$ as two directed edges for $u$\footnote{GNNs inherently handles directed graphs with directed message passing -- each node only uses its incoming neighbors' message for update.}: the outgoing edge $u \rightarrow v$ and incoming edge $v \rightarrow u$; (2) for every node $u$, we compute the subgraph index of its every outgoing edge $u \rightarrow v$: 
{
\begin{align}
\label{eqn:nodehash}
i_{u \rightarrow v} = h[\mathrm{str}(u)] \, \, \mathrm{mod} \, \, S+1. 
\end{align}
}
Note outgoing edges of $u$ are mapped in the same subgraph.

We use $\vec{\mathcal{E}}_i$ to denote the set of directed edges whose subgraph index is $i$, i.e., $\vec{\mathcal{E}}_i = \{\forall u \rightarrow v \in {\mathcal{E}}: i_{u \rightarrow v}= i \}.$ 
Then, we can construct $S$ \emph{directed} subgraphs for $G$ as $ \vec{G}_i = (\mathcal{V},\vec{\mathcal{E}}_i,{\bf X}), \forall i\in [1,S]$.
After generating subgraphs for all training graphs, we combine all subgraphs with the same index as a separate subgraph set:
$ \vec{\mathcal{G}}_{i}=\{\vec{G}_{i}, \forall G\in \mathcal{G}_\text{tr}\}, \forall i \in [S]$.
We denote the $S$ training sets as $\vec{\mathcal{G}}_{[S]} = \{\vec{\mathcal{G}}_{1},\cdots, \vec{\mathcal{G}}_{S}\}$.
 
{Here, we mention that we need to further postprocess the subgraphs for graph classification, in order to derive the robustness guarantee. 
Particularly, in each subgraph $\vec{G}_i$, we remove all other nodes whose subgraph index is not $i$. This is because although they have no influence on other nodes' representation, their information would still be passed to the global graph embedding aggregation. To make up the loss of connectivity between nodes and simulate the aggregation, we add an extra node with a zero feature, and add an outgoing edge from every node with index $i$ to it.}

\vspace{-2mm}
\subsubsection{Bounding the number of different sub-classifiers}
For a perturbed training set ${G}'_{\text{tr}}$, we use the same graph division strategy to generate a set of $S$ \emph{directed subgraph sets} $\vec{\mathcal{G}}'_{[S]} = \{\vec{\mathcal{G}}'_1, \vec{\mathcal{G}}'_2, \cdots, \vec{\mathcal{G}}'_T\}$. 
We first show the theoretical result on bounding the number of different trained classifiers on $\vec{\mathcal{G}}_{[S]}$ and $\vec{\mathcal{G}}'_{[S]}$ against any individual perturbation.

\begin{theorem}[Bounded Number of Node-Centric Subgraphs with Altered Predictions under Arbitrary Perturbation]
\label{thm:nodebased}
\vspace{-2mm}
Let $\mathcal{G}_{\text{tr}}, v, G, \mathcal{E}_+, \mathcal{E}_-, {\mathcal{V}_+}, {\mathcal{V}_-}, \mathcal{V}_{r}$ be defined in Theorem~\ref{thm:edgebased}, and $\vec{\mathcal{G}}_{[S]}, \vec{\mathcal{G}}'_{[S]}$ contain directed subgraph sets under the node-centric graph division.  
Then, at most $\bar{p} = 2|\mathcal{E}_+|+2|\mathcal{E}_-| + |{\mathcal{V}_+}| + |{\mathcal{V}_-}| + |{\mathcal{V}_r}|$ trained node classifiers in $f_{[S]}$ are different in weights after training on $\vec{\mathcal{G}}_{[S]}$ and on $\vec{\mathcal{G}}_{[S]}'$. In other words, $\sum_{i=1}^{S}\mathbb{I}(\theta_{i} \neq \theta_{i}') \leq \bar{p}$ for any target node $v \in G$ in node classification.
Meanwhile, at most $\bar{p} = |\mathcal{E}_+|+|\mathcal{E}_-| + |{\mathcal{V}_+}| + |{\mathcal{V}_-}| + |{\mathcal{V}_r}|$ trained graph classifiers in $f_{[S]}$ are different in weights after training on $\vec{\mathcal{G}}_{[S]}$ and on $\vec{\mathcal{G}}_{[S]}'$. In other words, $\sum_{i=1}^{S}\mathbb{I}(\theta_{i} \neq \theta_{i}') \leq \bar{p}$ in graph classification.
\vspace{-2mm}
\end{theorem}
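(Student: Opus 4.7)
The plan is to bound $\sum_i \mathbb{I}(\theta_i\neq \theta'_i)$ by decomposing the arbitrary perturbation of $\mathcal{G}_{\text{tr}}$ into its atomic operations---each inserted or deleted undirected edge in $\mathcal{E}_+\cup\mathcal{E}_-$, each inserted or deleted node in $\mathcal{V}_+\cup\mathcal{V}_-$ together with its induced edges, and each feature-modified node in $\mathcal{V}_r$---and by bounding, for each operation, the number of the $S$ node-centric subgraph sets $\vec{\mathcal{G}}_i$ it can alter in a way that modifies the training of $f_i$. Since training is deterministic given the input, $\theta_i=\theta'_i$ whenever the contribution of $\vec{\mathcal{G}}_i$ to the training loss of $f_i$ is unchanged, so the total count is at most the sum of the per-operation contributions, and I will show this sum is exactly $\bar{p}$.

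The key structural property I would exploit is the source-indexed hash rule $i_{u\to v}=h[\mathrm{str}(u)]\bmod S+1$ from Eq.~\ref{eqn:nodehash}: every outgoing directed edge of a node $u$ is routed to the single subgraph $\vec{G}_{i_u}$. Consequently, in every subgraph $\vec{G}_i$ with $i\neq i_u$, node $u$ has no outgoing edge, and a straightforward depth induction on the GNN shows that under directed message passing $u$ never appears in the aggregated in-neighbor set of any other node inside $\vec{G}_i$; therefore neither the presence of $u$ nor changes to $u$'s feature can influence the embedding of any other node in $\vec{G}_i$.

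Armed with this directional property, I would verify the three per-operation contributions. First, an undirected edge $(u,v)\in \mathcal{E}_+\cup\mathcal{E}_-$ yields directed edges $u\!\to\!v$ and $v\!\to\!u$ placed in $\vec{G}_{i_u}$ and $\vec{G}_{i_v}$, so it can alter at most two node-classification subgraphs, giving $2|\mathcal{E}_+|+2|\mathcal{E}_-|$; in graph classification, the postprocessing that strips every node whose index is not $i$ from $\vec{G}_i$ removes $u\!\to\!v$ from $\vec{G}_{i_u}$ whenever $i_v\neq i_u$ and symmetrically for $v\!\to\!u$, so only $i_u=i_v$ survives and the contribution reduces to $|\mathcal{E}_+|+|\mathcal{E}_-|$. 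Second, a node insertion $u\in\mathcal{V}_+$ or deletion $u\in\mathcal{V}_-$ together with its induced edges alters only $\vec{G}_{i_u}$: the directional property together with the stated assumption that injected/deleted nodes lie outside $V_{\text{tr}}$ ensures that in every other $\vec{G}_i$ the forward pass on $V_{\text{tr}}$ nodes is unchanged, so the gradients and weights of $f_i$ are preserved; in graph classification the postprocessing confines $u$ itself to $\vec{G}_{i_u}$ and gives the same bound, contributing $|\mathcal{V}_+|+|\mathcal{V}_-|$. Third, a feature modification $u\in\mathcal{V}_r$ alters only $\vec{G}_{i_u}$ by the same directional argument, since ${\bf X}'_u$ reaches other nodes only through $u$'s outgoing edges, contributing $|\mathcal{V}_r|$. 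Summing yields $2|\mathcal{E}_+|+2|\mathcal{E}_-|+|\mathcal{V}_+|+|\mathcal{V}_-|+|\mathcal{V}_r|$ for node classification and $|\mathcal{E}_+|+|\mathcal{E}_-|+|\mathcal{V}_+|+|\mathcal{V}_-|+|\mathcal{V}_r|$ for graph classification.

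The main obstacle is the step that lifts the purely combinatorial directional property to the operational statement that $\theta_i=\theta'_i$ for every $i\neq i_u$ after a node-level or feature-level perturbation on $u$. Because all node-classification subgraphs formally share the global node set $\mathcal{V}$ and feature matrix ${\bf X}$, the tuple $\vec{G}_i$ literally changes even in the supposedly untouched subgraphs; what I actually need is that the change is invisible to the training dynamics. The cleanest route is a depth induction on the GNN: because $u$ has no outgoing edge in $\vec{G}_i$ for $i\neq i_u$, the layer-$\ell$ embedding of every node other than $u$ (and in particular of every $v\in V_{\text{tr}}$) coincides on $\vec{G}_i$ and $\vec{G}'_i$, so the per-node losses on $V_{\text{tr}}$ agree, every gradient step during training agrees, and the trained weights match. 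Once this invariance is established for each atomic operation, summing over operations is routine and produces the stated $\bar{p}$.
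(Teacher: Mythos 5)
Your proposal is correct and follows essentially the same route as the paper: decompose the perturbation into atomic edge, node, and feature operations, use the source-indexed hash (all outgoing edges of $u$ land in the single subgraph $\vec{G}_{i_u}$) plus directed message passing to show a node without outgoing edges cannot influence any other node's embedding, and invoke the graph-classification postprocessing to cut the edge contribution from $2(|\mathcal{E}_+|+|\mathcal{E}_-|)$ down to $|\mathcal{E}_+|+|\mathcal{E}_-|$. Your explicit depth induction and the remark that deterministic training on identical effective inputs yields identical weights are just slightly more careful renderings of the paper's ``representations and gradients maintain the same, implying the trained classifier weight to be the same.''
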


\subsubsection{Deriving the robustness guarantee}
Based on Theorem~\ref{thm:suffcond} and Theorem~\ref{thm:nodebased}, we can derive the certified perturbation size formally stated below

\begin{theorem}[Certified Robustness Guarantee with Node-Centric Subgraphs against Arbitrary Perturbation]
\label{thm:certifynodebased} 
\vspace{-2mm}
Let $f, y_a, y_b, {\bf  n}_{y_a}, {\bf  n}_{y_b}$\footnote{Note that ${\bf  n}_{y_a}, {\bf  n}_{y_b}$ have different values with those in edge-centric graph division.
Here we use the same notation for description brevity.} be defined above for node-centric subgraphs, and  
$\bar{p}$ be the perturbation size induced by an arbitrary perturbed graph $G'$ on $G$. 
With a probability 100\%, the voting classifier $\bar{f}$ guarantees the same prediction on both $G'$ and $G$ for the target node $v$ in node classification (i.e., $\bar{f}(G')_v = \bar{f}(G)_v$) or the target graph $G$ in graph classification (i.e., $\bar{f}(G') = \bar{f}(G)$),
if 
\begin{align}
\label{eqn:cpz_node}
\bar{p} \leq P = {\lfloor {\bf  n}_{y_a}-{\bf  n}_{y_b}-\mathbb{I}(y_{a}>y_{b})\rfloor} / {2}.
\end{align}
\end{theorem}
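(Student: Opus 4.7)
The plan is to derive Theorem~\ref{thm:certifynodebased} as a direct composition of the sufficient condition in Theorem~\ref{thm:suffcond} with the sub-classifier stability bound of Theorem~\ref{thm:nodebased}, handling the node and graph classification cases in parallel. The structure will be: start from the assumption $\bar{p} \le P$, convert this into a bound on the number of perturbed sub-classifiers using Theorem~\ref{thm:nodebased}, and then invoke Theorem~\ref{thm:suffcond} to conclude that $\bar{f}'(G)_v = \bar{f}(G)_v$ (node case) or $\bar{f}'(G) = \bar{f}(G)$ (graph case).

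In more detail, I would first fix the perturbed training set $\mathcal{G}_{\text{tr}}'$ with total perturbation $\{\mathcal{E}_+,\mathcal{E}_-,\mathcal{V}_+,\mathcal{E}_{\mathcal{V}_+},\mathcal{V}_-,\mathcal{E}_{\mathcal{V}_-},\mathcal{V}_r,\mathcal{E}_{\mathcal{V}_r},{\bf X}'_{\mathcal{V}_r}\}$, and recall that under the node-centric division the directed subgraph sets $\vec{\mathcal{G}}_{[S]}$ and $\vec{\mathcal{G}}'_{[S]}$ differ only because the hash $i_{u\to v}=h[\mathrm{str}(u)]\bmod S+1$ bucketizes all outgoing edges of a node into a single subgraph. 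This is precisely the property that Theorem~\ref{thm:nodebased} exploits to bound $\sum_{i=1}^S \mathbb{I}(\theta_i \neq \theta_i')$ by $\bar{p}=2|\mathcal{E}_+|+2|\mathcal{E}_-|+|\mathcal{V}_+|+|\mathcal{V}_-|+|\mathcal{V}_r|$ (node classification) or the analogous single-count version for graph classification.

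Chaining the inequalities gives $\sum_{i=1}^S \mathbb{I}(\theta_i \neq \theta_i') \le \bar{p} \le P$. This is exactly the premise of Equations~(\ref{eqn:suff-NC}) and (\ref{eqn:suff-GC}) in Theorem~\ref{thm:suffcond}, so we immediately obtain the certified-prediction conclusion for both tasks. I would then note that the definition $P=\lfloor {\bf n}_{y_a}-{\bf n}_{y_b}-\mathbb{I}(y_a>y_b)\rfloor/2$ is precisely the largest integer for which this voting-margin argument goes through (a simple counting check on how many votes can flip between $y_a$ and $y_b$ when at most $P$ sub-classifiers change, with the tie-breaking indicator $\mathbb{I}(y_a>y_b)$ accounting for the "smaller index wins ties" convention), which establishes optimality of the stated bound.

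I expect no real technical obstacle here since the theorem is essentially a corollary of the two preceding results; the only care points are (i) making sure the statement is invoked for the correct task-specific form of $\bar p$ from Theorem~\ref{thm:nodebased}, and (ii) being explicit that the graph-classification postprocessing step (removal of non-index nodes plus the zero-feature surrogate node) does not introduce any additional dependence of $\theta_i'$ on parts of $\mathcal{G}_{\text{tr}}'$ beyond $\vec{\mathcal{G}}'_i$, so that the per-subgraph classifier change count is still governed by $\bar p$.
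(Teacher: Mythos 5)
Your proposal is correct and follows essentially the same route as the paper: the paper also derives Theorem~\ref{thm:certifynodebased} as a direct corollary by chaining the bound $\sum_{i=1}^{S}\mathbb{I}(\theta_i \neq \theta_i') \le \bar{p}$ from Theorem~\ref{thm:nodebased} with the sufficient condition of Theorem~\ref{thm:suffcond}, giving no separate proof in the appendix. Your additional care points (task-specific $\bar p$ and the graph-classification postprocessing) are sensible but already absorbed into the proof of Theorem~\ref{thm:nodebased}, and the optimality remark is not actually required by the statement.
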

\section{Experiments}
\subsection{Experiments Settings}
\noindent {\bf Datasets:} We use four node classification datasets (Cora-ML~\cite{mccallum2000automating}, Citeseer~\cite{sen2008collective},  PubMed~\cite{sen2008collective}, Amazon-C~\cite{yang2021extract}) and four graph classification datasets (AIDS~\cite{riesen2008iam}, MUTAG~\cite{debnath1991structure}, PROTEINS~\cite{Borgwardt2005}, and DD~\cite{Dobson2003}) for evaluation. In each dataset, we take $30\%$ nodes (for node classification) or 50\% graphs (for graph classification) as the training set, $10\%$ and 20\% as the validation set and $30\%$ nodes/graph as the test set. 
Table~\ref{tab:datasets} in Appendix shows the basic  statistics of them.

\vspace{+0.05in}
\noindent {\bf GNN classifiers and {\name} training:} 
We adopt three well-known GNNs as the base node/graph classifiers: GCN~\cite{kipf2017semi}, GSAGE~\cite{hamilton2017inductive} and GAT~\cite{velivckovic2018graph}. 
We denote the two versions of {\name} under edge-centric 
and node-centric graph division as {\nameE} and {\nameN}, respectively. 
By default, we use GCN as the node/graph classifier. 

\vspace{+0.05in}
\noindent {\bf Evaluation metric:} 
Following existing works~\cite{xia2024gnncert,lai2023nodeawarebismoothingcertifiedrobustness,wang2021certified}, we use the certified node/graph  accuracy at perturbation size as the evaluation metric. 
Given a perturbation size $p$ and test nodes/graphs,  
certified node/graph accuracy at $p$ is the fraction of test nodes/graphs that are accurately classified by the voting node/graph classifier and its certified perturbation size is no smaller than $p$. 
Note that the standard node/graph accuracy is achieved when over $p=0$. 

\vspace{+0.05in}
\noindent {\bf Parameter setting:} {\name} has two hyperparameters: the hash function $h$ and the number of subgraphs $S$. By default, we use MD5 as the hash function and set $S=50, 60$ respectively for node and graph classification, considering their different graph sizes. 
 We also study the impact of them. 

\vspace{+0.05in}
\noindent {\bf Compared baselines:} 
As  {\name} encompasses existing defenses as special cases, we can compare {\name} with them against less types of perturbation. 
Here, we choose the sparse-smoothing RS~\cite{bojchevski2020efficient}, Bagging~\cite{jia2021intrinsic} and Bi-RS~\cite{lai2023nodeawarebismoothingcertifiedrobustness} as compared baselines in face of node injection poisoning attack on node classification task. For Bi-RS, We adopt the $p_{e}=0.2, p_{n}=0.9, N=1000$ setting as described, and test both include and exclude methods. We also use Bagging for comparison on graph classification.\footnote{Source code is at 
\text{https://github.com/JetRichardLee/PGNNCert}}

\begin{figure*}[!t]
\centering
\subfloat[Cora-ML]{\includegraphics[width=0.25\textwidth]{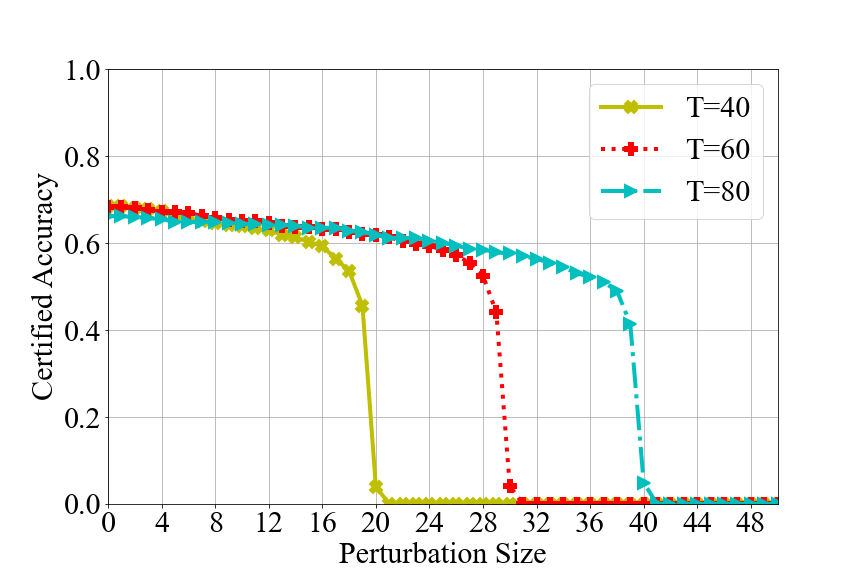}}\hfill
\subfloat[Citeseer]{\includegraphics[width=0.25\textwidth]{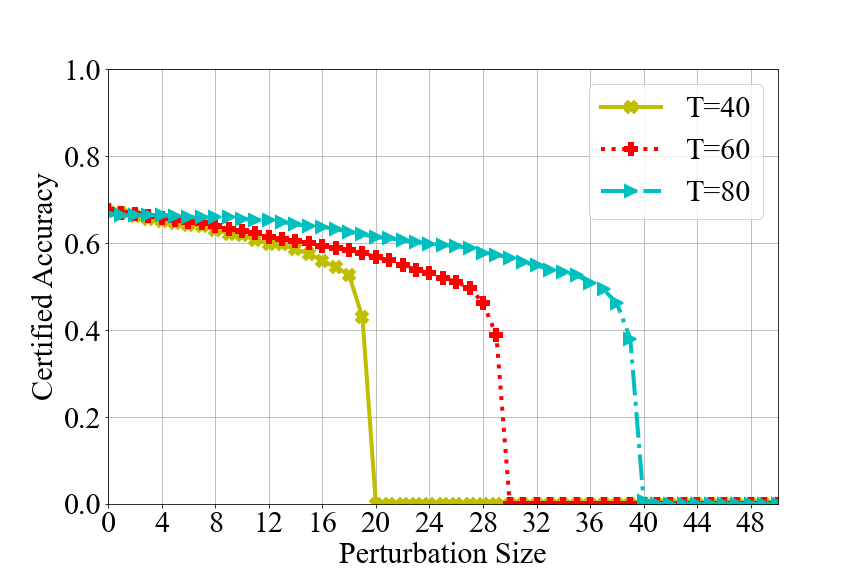}}\hfill
\subfloat[Pubmed]{\includegraphics[width=0.25\textwidth]{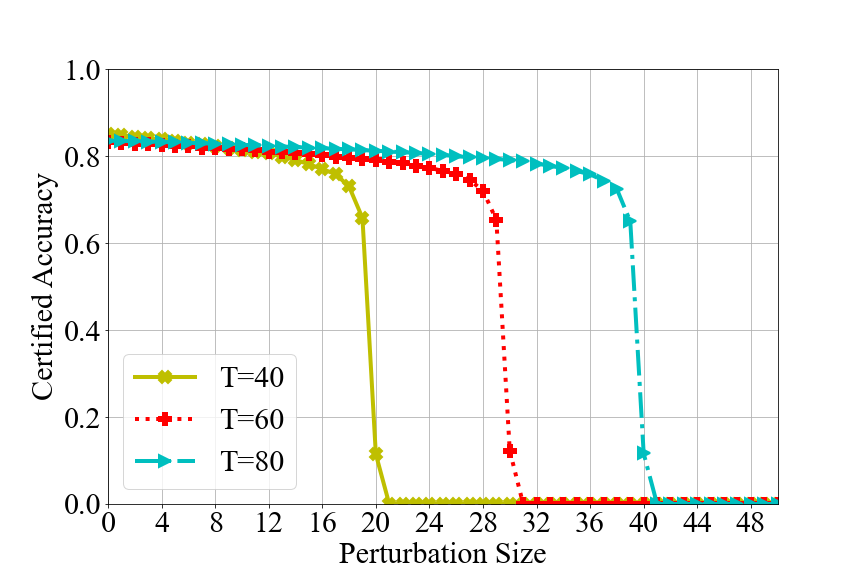}}\hfill
\subfloat[Amazon-C]{\includegraphics[width=0.25\textwidth]{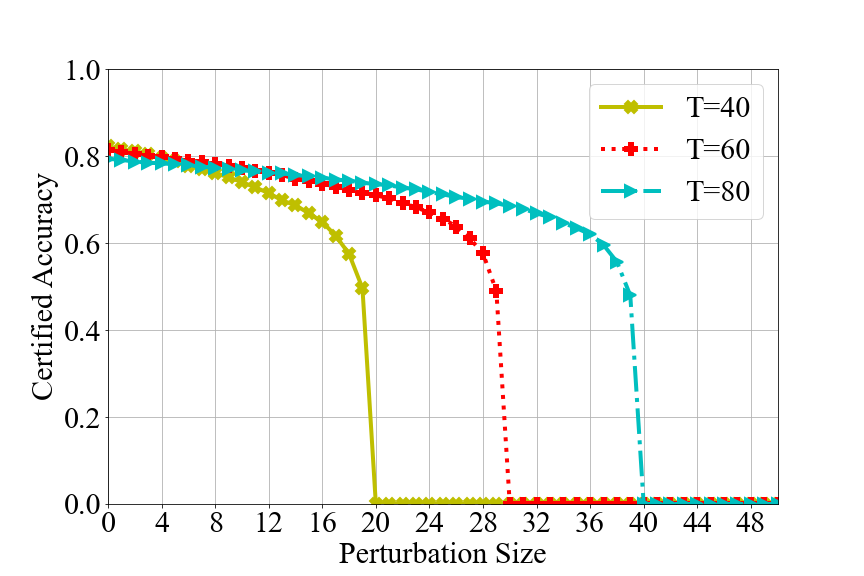}}\\
\caption{Certified node accuracy of our {\nameE} w.r.t. the number of subgraphs $S$. 
}
\label{fig:node-EC-T}
\vspace{-4mm}
\end{figure*}

\begin{figure*}[!t]
\centering
\subfloat[Cora-ML]{\includegraphics[width=0.25\textwidth]{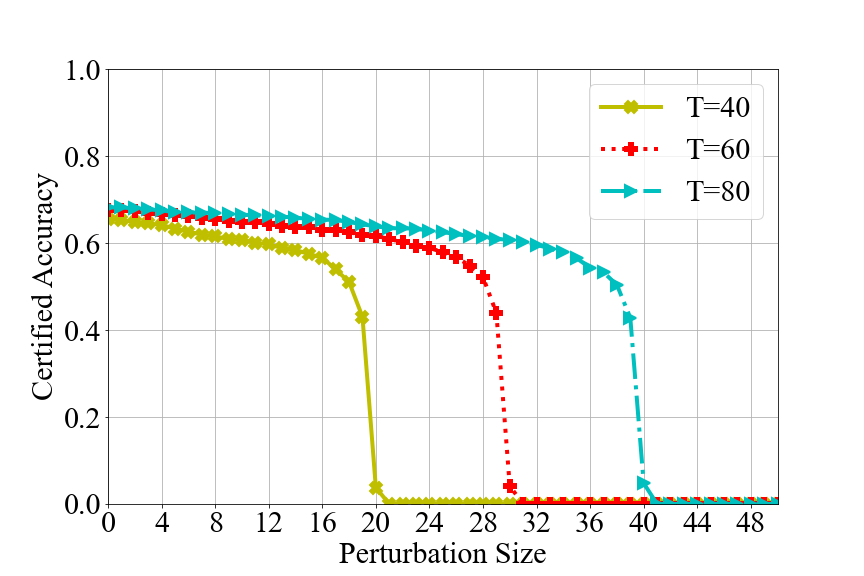}}\hfill
\subfloat[Citeseer]{\includegraphics[width=0.25\textwidth]{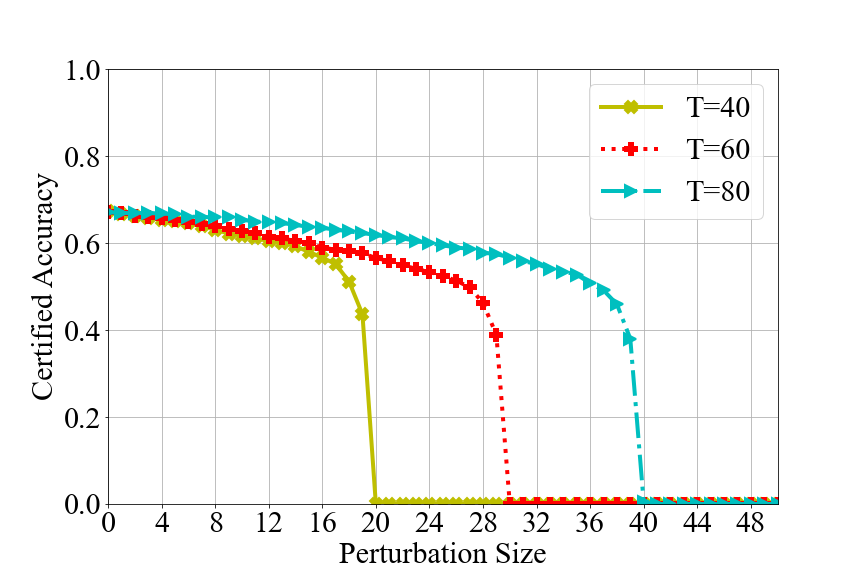}}\hfill
\subfloat[Pubmed]{\includegraphics[width=0.25\textwidth]{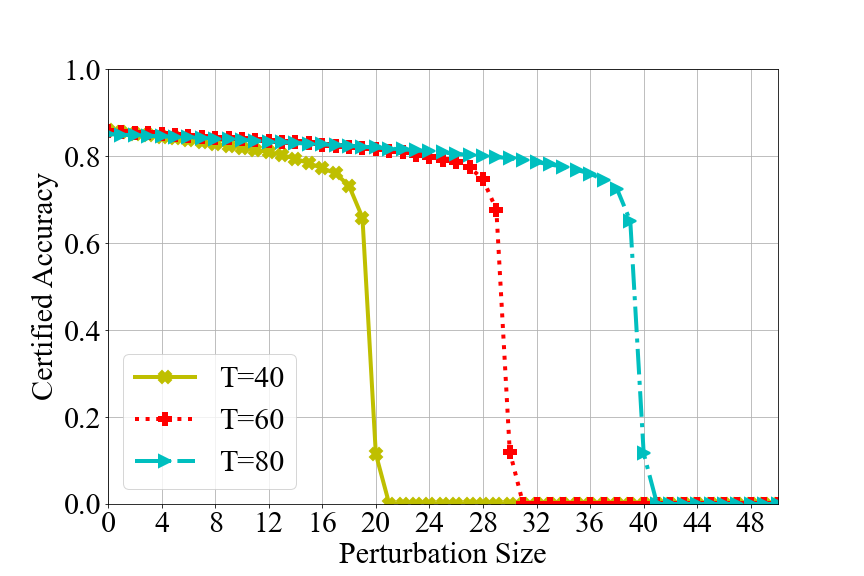}}\hfill
\subfloat[Amazon-C]{\includegraphics[width=0.25\textwidth]{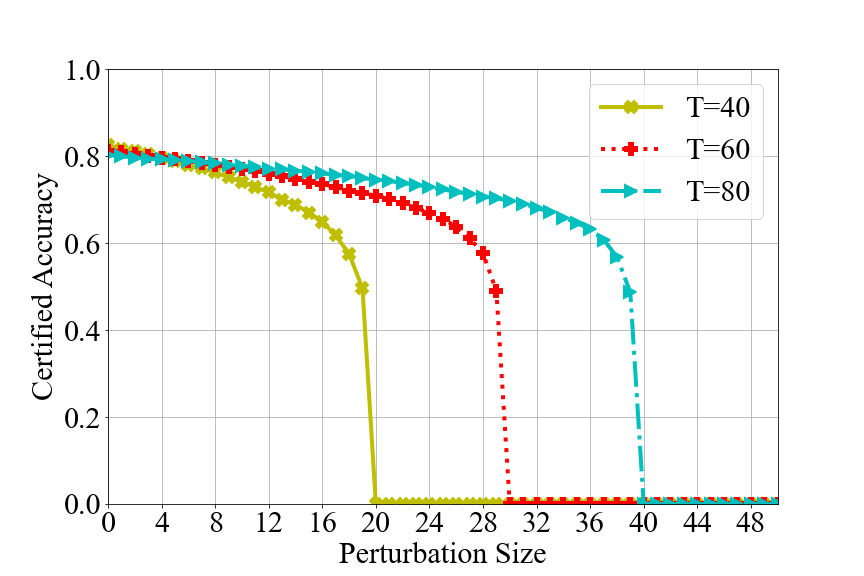}}\\
\caption{Certified node accuracy of our {\nameN} w.r.t. the number of subgraphs $S$.
}
\label{fig:node-NC-T}
\vspace{-4mm}
\end{figure*}

\begin{figure*}[!t]
\centering
\subfloat[AIDS]{\includegraphics[width=0.25\textwidth]{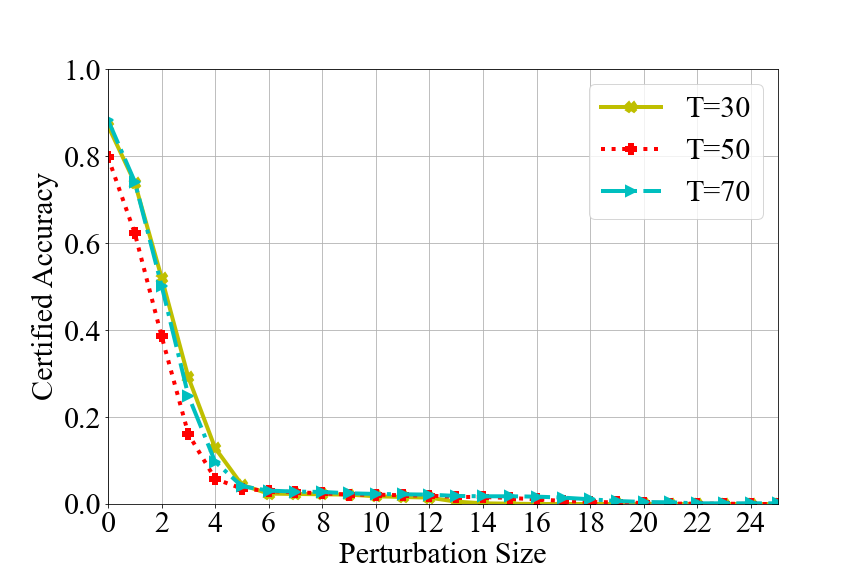}}\hfill
\subfloat[MUTAG]{\includegraphics[width=0.25\textwidth]{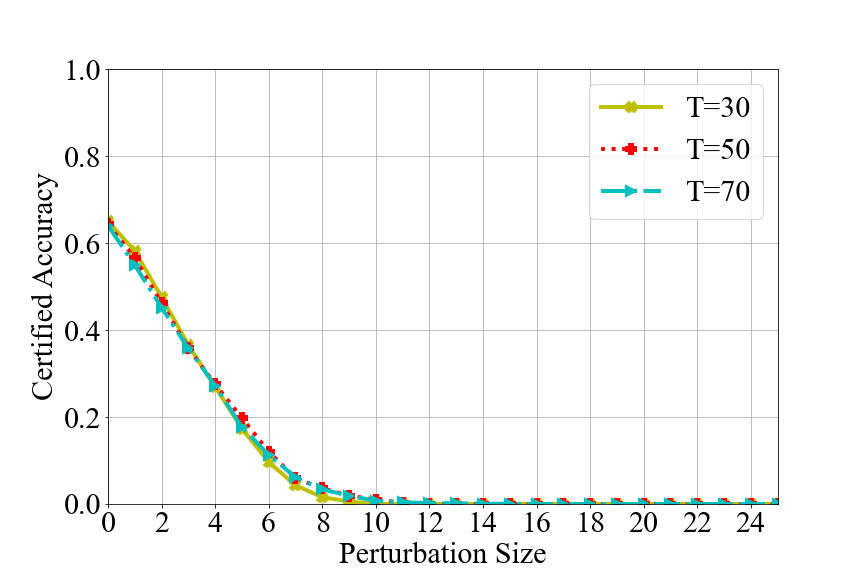}}\hfill
\subfloat[PROTEINS]{\includegraphics[width=0.25\textwidth]{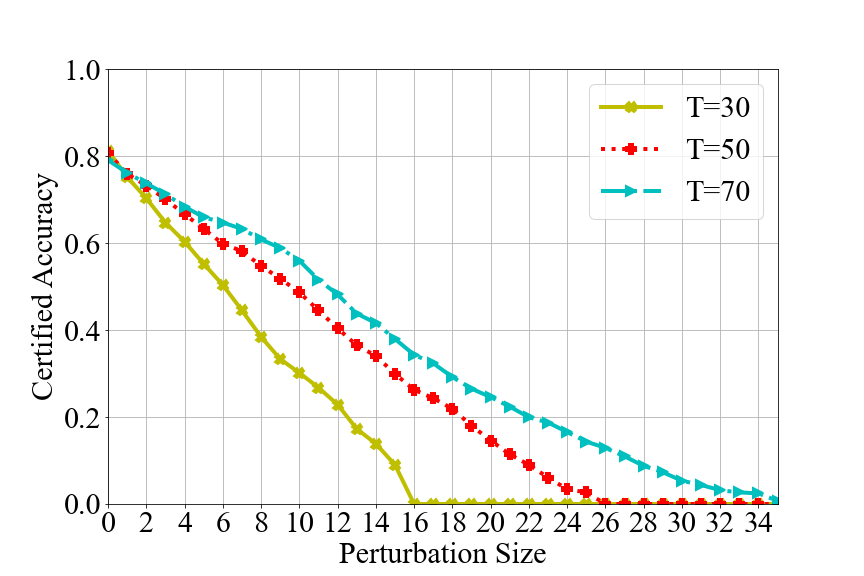}}\hfill
\subfloat[DD]{\includegraphics[width=0.25\textwidth]{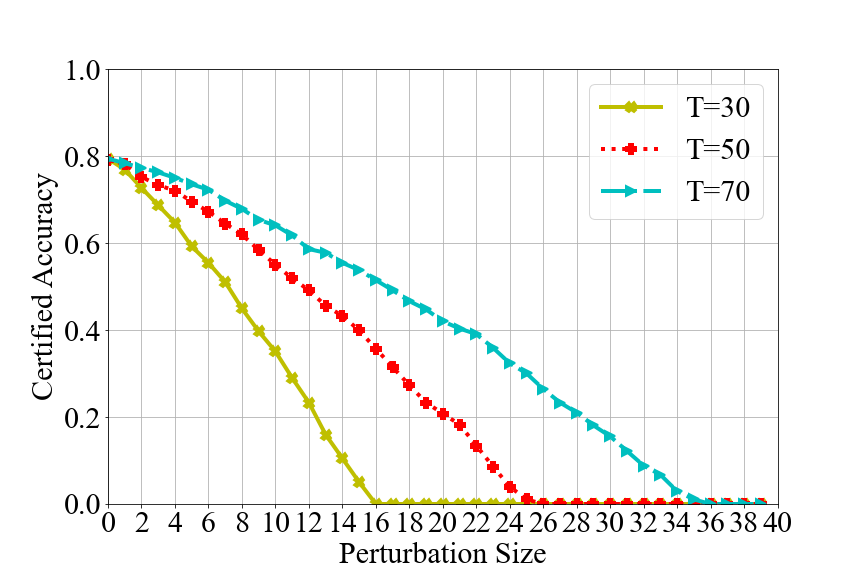}}\\
\caption{Certified graph accuracy of our {\nameE} w.r.t. the number of subgraphs $S$.
}
\label{fig:graph-EC-T}
\vspace{-4mm}
\end{figure*}

\begin{figure*}[!t]
\centering
\subfloat[AIDS]{\includegraphics[width=0.25\textwidth]{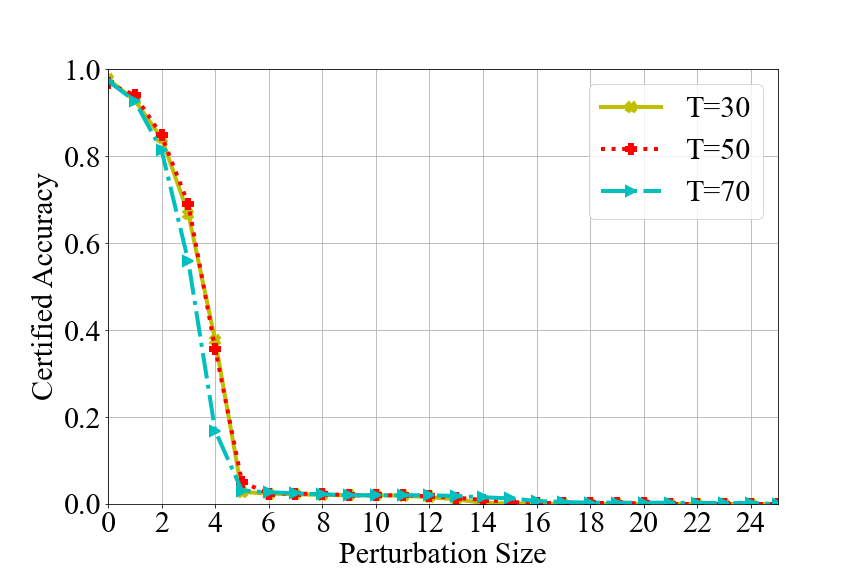}}\hfill
\subfloat[MUTAG]{\includegraphics[width=0.25\textwidth]{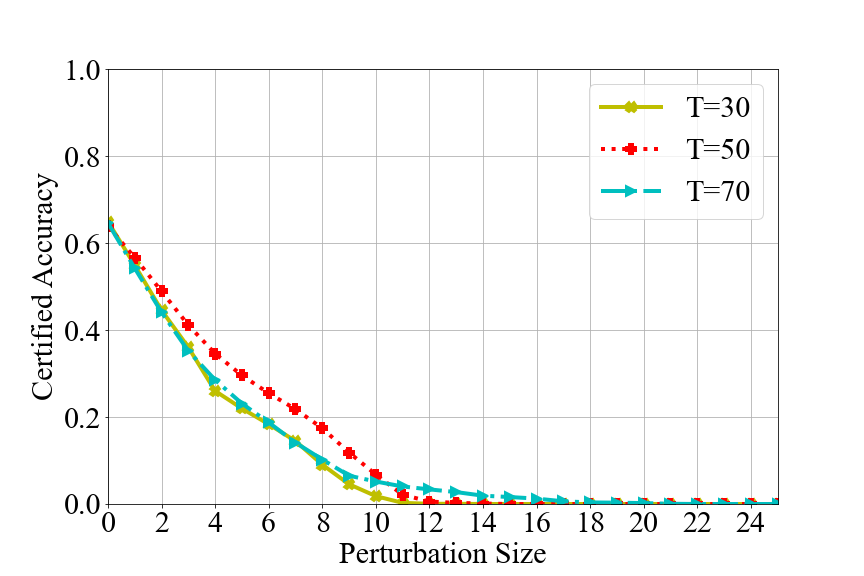}}\hfill
\subfloat[PROTEINS]{\includegraphics[width=0.25\textwidth]{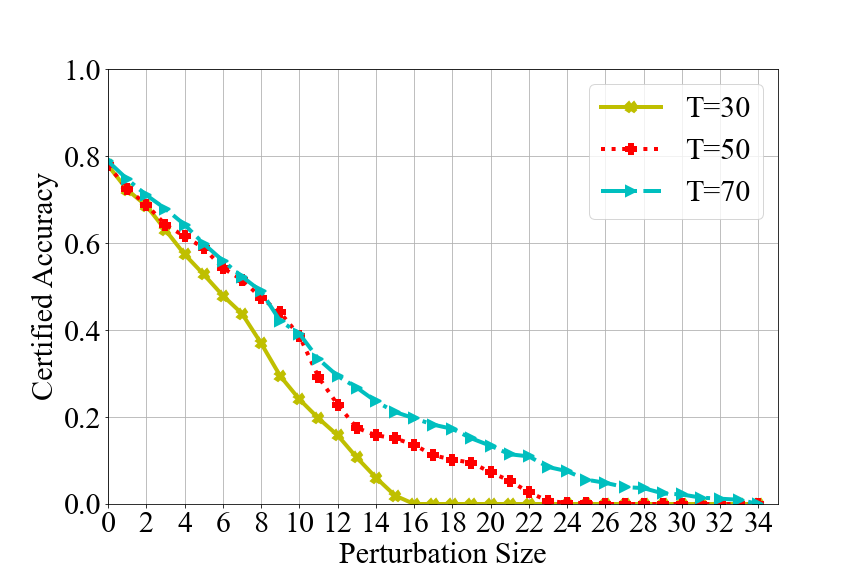}}\hfill
\subfloat[DD]{\includegraphics[width=0.25\textwidth]{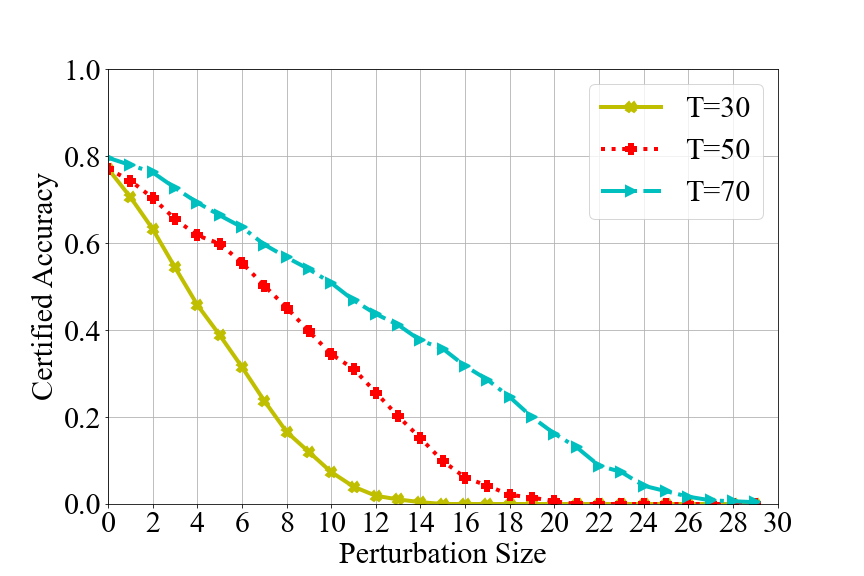}}\\
\caption{Certified graph accuracy of our {\nameN} w.r.t. the number of subgraphs $S$.}
\label{fig:graph-NC-T}
\vspace{-4mm}
\end{figure*}

\begin{figure*}[!t]
\centering
\subfloat[Cora-ML]{\includegraphics[width=0.25\textwidth]{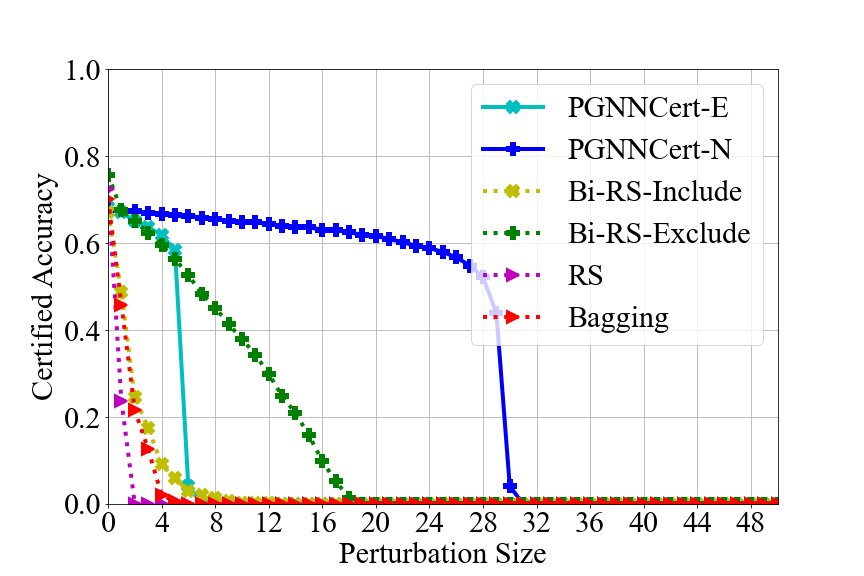}}\hfill
\subfloat[Citeseer]{\includegraphics[width=0.25\textwidth]{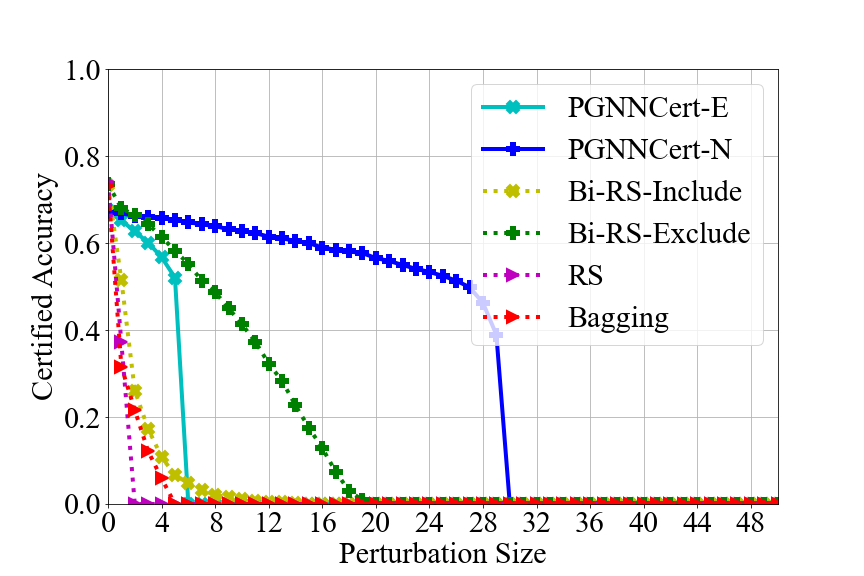}}\hfill
\subfloat[Pubmed]{\includegraphics[width=0.25\textwidth]{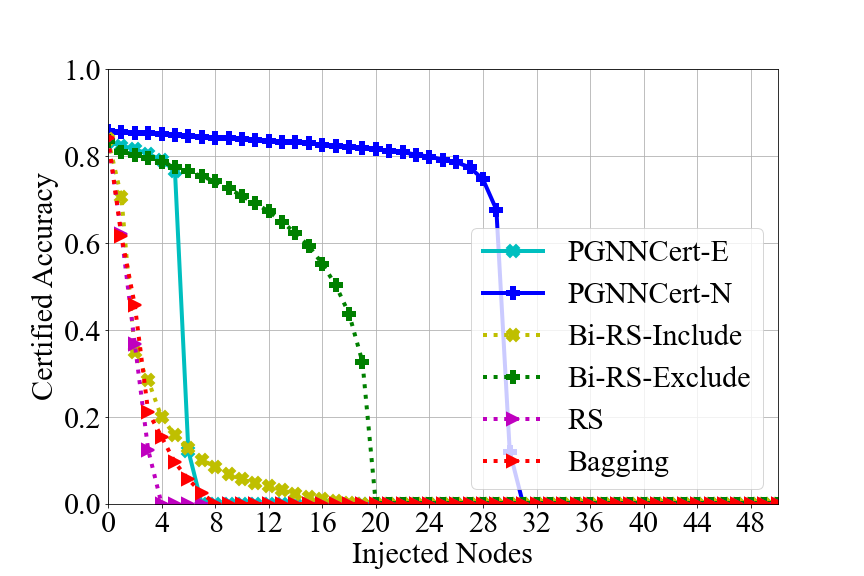}}\hfill
\subfloat[Amazon-C]{\includegraphics[width=0.25\textwidth]{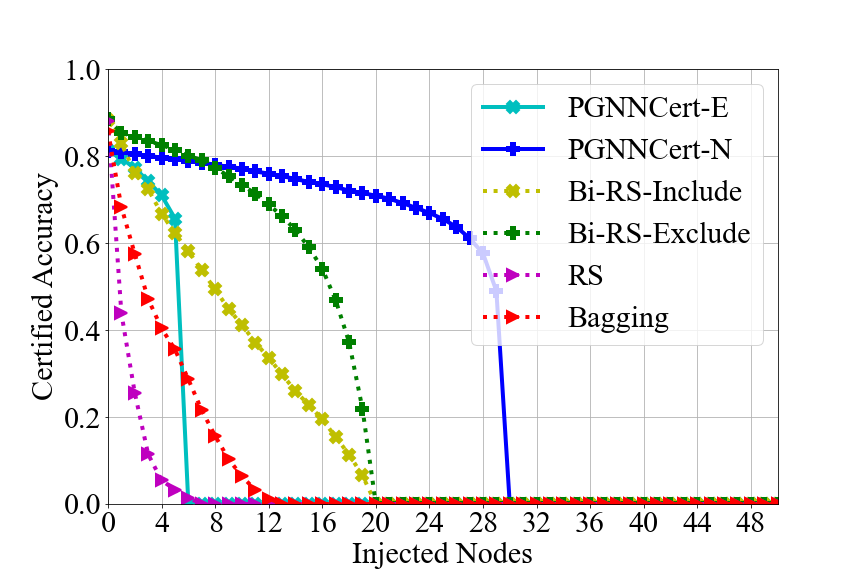}}\\
\caption{Certified node accuracy of our {\name} and SOTA defenses against node injection attacks.
}
\label{fig:node-compare}
\vspace{-3mm}
\end{figure*}

\subsection{Experiment Results on {\name}}

\noindent {\bf Main results:}
Figures~\ref{fig:node-EC-T}-\ref{fig:node-NC-T} show the certified node accuracy and Figures~\ref{fig:graph-EC-T}-\ref{fig:graph-NC-T} show the certified graph accuracy at perturbation size $p$ w.r.t. $S$ under the two graph division strategies, respectively.
We have the following observations. 

\begin{itemize}[leftmargin=*]
\item Both {\nameE} and {\nameN} can tolerate the perturbation size up to 25 and 30, on the node and graph classification datasets. This means {\nameE} can defend against a total of 25 (30) arbitrary edges, while {\nameN}  against a total of 25 (30) arbitrary edges and nodes caused by the arbitrary perturbation,  on the node (graph) classification datasets, respectively. 
Note that node classification datasets have several orders of more nodes/edges than graph classification datasets, hence {\name} can tolerate more perturbations on them.

\item $S$ acts as the robustness-accuracy tradeoff. That is, a larger (smaller) $S$ yields a higher (lower) certified perturbation size, but a smaller (higher) normal accuracy ($p=0$). 

\item In {\nameN}, the guaranteed perturbed nodes can 
have infinite edges. This implies that {\nameN} has better  robustness than {\nameE} against the perturbed  edges by node/node feature manipulation. 
\end{itemize}

\noindent {\bf Impact of base GNN classifiers:} 
Figures~\ref{fig:node-EC-T-GSAGE}-\ref{fig:node-NC-T-GSAGE} and Figures~\ref{fig:node-EC-T-GAT}-\ref{fig:node-NC-T-GAT} 
in Appendix show the certified accuracy using GSAGE and GAT as the base classifier, respectively. We have similar observations as those results with GCN.  

\vspace{+0.05in}
\noindent {\bf Impact of hash function:} 
Figures~\ref{fig:node-EC-hash}-\ref{fig:graph-NC-hash} 
show the certified node/edge accuracy of {\nameE} and  {\nameN} with different hash functions. We observe that our certified accuracy and certified perturbation size are almost the same in all cases. This reveals our {\name} is insensitive to the hash function, and \cite{xia2024gnncert} draws a similar conclusion. 

\begin{table}[!t]\renewcommand\arraystretch{0.85}
\centering
\footnotesize
\addtolength{\tabcolsep}{-3.5pt}
\caption{Node/graph accuracy of normally trained GNN and of {\name} with GNN trained on the subgraphs.}
\vspace{-2mm}
\begin{tabular}{|c|c|c|c|c|c|c|c|c|c|}
\hline
\multirow{2}{*}{\bf Dataset} & \multirow{2}{*}{\bf \scriptsize GCN} & \multicolumn{2}{c|}{\name} & \multirow{2}{*}{\bf \scriptsize GSAGE} & \multicolumn{2}{c|}{\name} & \multirow{2}{*}{\bf \scriptsize GAT} & \multicolumn{2}{c|}{\name} \\ \cline{3-4}\cline{6-7}\cline{9-10}
&&-E&-N&&-E&-N&&-E&-N
\\ \hline
{\bf \scriptsize Cora-ML} &0.73&0.68&0.68&0.67& 0.65 & 0.68 &0.74&  0.69&0.69 \\ \hline
{\bf \scriptsize Citeseer} &0.66&0.67&0.67&0.69& 0.68 & 0.68 &0.70& 0.67&0.67 
  \\ \hline
{\bf \scriptsize Pubmed} &0.86&0.83&0.85&0.84&  0.84&0.85&0.85& 0.85& 0.84 \\ \hline
{\bf \scriptsize Amazon-C}&0.81&0.81&0.81&0.82&0.78  &0.80 &0.83& 0.78& 0.80 \\ \hline \hline
{\bf \scriptsize AIDS}&0.99&0.88&0.96&0.97& 0.94 &0.95 &0.96&0.93& 0.96  \\ \hline
{\bf \scriptsize MUTAG}&0.71&0.64&0.64&0.70& 0.63 &0.64 &0.71& 0.65&0.65 \\ \hline
{\bf \scriptsize Proteins}&0.82&0.81&0.78&0.83& 0.80 &  0.81&0.82
&0.78 & 0.77\\ \hline
{\bf \scriptsize DD}&0.80 &0.79
&0.77&0.81& 0.78 & 0.76&0.81& 0.77& 0.79\\ \hline
\end{tabular}
\label{tbl:normalacc}
\vspace{-6mm}
\end{table}

\vspace{+0.05in}
\noindent {\bf Impact of subgraphs on  normal accuracy:} 
We test the normal accuracy of (not) using subgraphs to train the GNN classifier.  
Table~\ref{tbl:normalacc} shows the test node/graph accuracy of normally trained GNN without sbugraphs and {\name} with GNN trained on the subgraphs. 
We observe that the accuracy of {\name} is 5\% smaller than that of normally trained GNN in almost all cases, and in some cases even larger. This implies the augmented subgraphs for training marginally affects the normal test accuracy. 

\subsection{Comparison Results with SOTA Baselines}
In this section, we compare {\name} with SOTA defenses (Bi-RS \cite{lai2023nodeawarebismoothingcertifiedrobustness}, RS~\cite{bojchevski2020efficient}, Bagging~\cite{jia2021intrinsic}) against node injection poisoning attacks and graph structure poisoning attacks \cite{jia2021intrinsic}. 

\vspace{+0.05in}
\noindent {\bf Results for node classification against node injection poisoning attacks:} 
We follow Bi-RS~\cite{lai2023nodeawarebismoothingcertifiedrobustness} by setting the injected nodes' degrees as $\tau=5$ and show the certified node accuracy with varying number of injected nodes. Figure~\ref{fig:node-compare} shows the comparison results. We can see {\nameE} has better certified accuracy than RS, Bagging and Bi-RS-Include, but it is worse than Bi-RS-Exclude. This aligns with our assumption on its weakness against node-relevant attack. By contrast, {\nameN} outperforms all compared baselines. We highlight that each bounded node in {\nameN} can inject as many edges as possible, meaning the total number of bounded edges in {\nameN} could be infinite. 

\vspace{+0.05in}
\noindent {\bf Results for graph classification against graph poisoning attacks:} 
Table~\ref{tab:attack} shows the certified graph accuracies of {\name} and Bagging against edge perturbations on graph classification tasks. {\name} outperforms Bagging, especially when the perturbation size is larger.
    
\begin{table}[]
    \centering
    \footnotesize
    \caption{Certified graph accuracy of our {\name} and Bagging against edge perturbations.}
    \vspace{-2mm}
    \begin{tabular}{c|c|c|c|c|c} \hline
    Datasets & Methods &\#Edges=0& 5&10&15\\
    \hline
     \multirow{3}{*}{Proteins}    & Bagging&0.756&0.205&0.000&0.00 \\
         & {\nameE}&0.744&0.518&0.260 &0.00\\ 
         & {\nameN}&0.778&0.474& 0.102&0.00 \\ 
    
   \hline
     \multirow{3}{*}{DD}    & Bagging&0.785&0.311&0.000&0.00 \\
         & {\nameE}&0.792&0.578&0.330 &0.063\\ 
         & {\nameN}&0.771&0.344& 0.006&0.000 \\ \hline
    \end{tabular}
    \vspace{-2mm}
    \label{tab:attack}
\end{table}

\section{Discussion}

\noindent {\bf Evaluation on CV datasets.} We also test a benchmark superpixel graph CIFAR10  in computer vision~\cite{2023Benchmarking} for graph classification, to show the generality of our defense. CIFAR10 is an image classification dataset, which is converted into graphs using the SLIC superpixels~\cite{2012SLIC}. Each node has the superpixel coordinates as the feature and each superpixel graph has a label. Results in the default setting are shown in Table \ref{tab:cvresults}, where we see {\name} can obtain about 50\% certified accuracy when 3 arbitrary edges are perturbed.  

\begin{table}[!t]

    \footnotesize
    \centering
    \caption{Results of {\name} on CIFAR10.}
    \vspace{-2mm}
    \begin{tabular}{c|c|c|c|c|c}
    \hline
    Datasets&Methods&p=0&1&3&5\\
    \hline
         \multirow{2}{*}{CIFAR10}&\nameE&0.596&0.556&0.482&0.107  \\
         &\nameN&0.636&0.593&0.498&0.113\\ 
         \hline
    \end{tabular}
    \vspace{-2mm}
    
    \label{tab:cvresults}
    \end{table}

\vspace{+0.05in}
\noindent {\bf {\name} against optimization-based attacks.} 
We first emphasize that {\name} is provably robust against {\emph{all}} (known and unknown)  attacks, when their perturbation budget is within the derived bound (in Eqn \ref{eqn:cpz_edge} or Eqn \ref{eqn:cpz_node}), regardless of the attack knowledge of \name. 
Here, we test {\nameN} against  Metattack \cite{zugner2019adversarial}  when its perturbation size $m$ is larger than the derived certified perturbation size {P=25 (with certified accuracy 0.588)} on Cora-ML with S=60. Results are shown in Table \ref{tab:moreattack}. We see {\nameN} can achieve {0.661} accuracy of defending against Metattack even when it perturbs 40 edges and nodes in total.

\begin{table}[!t]
    \footnotesize
    \caption{Defense results against the optimization-based Mettack.}
    \vspace{-2mm}
    \centering
    \begin{tabular}{c|c|c|c|c|c}
    \hline
    {Dataset}&{Nettack}&p=0&20&30&40\\
    \hline
         {Cora-ML}
         &\nameN&0.675&0.675&0.668&0.661\\ 
         \hline
    \end{tabular}
    %\vspace{-2mm}
     
    \label{tab:moreattack}
    \vspace{-4mm}
\end{table}

\vspace{+0.05in}
\noindent {\bf Limitations.} Despite the effectiveness of {\name}, its inefficiency remains a main limitation for large GNNs. 
{\name} trains $S$ classifiers on $S$ subgraphs and uses $S$ subgraphs for certification. It may incur a training/certification complexity that is $S$ times of standard GNN training/testing. This overhead could be significant when {\name} is applied to large GNNs. On the other hand, we note that the $S$ classifiers can be trained in parallel once obtaining the $S$ subgraphs after graph division. 
\label{sec:discussion}
\section{Related Work}
\noindent {\bf Adversarial attacks on GNNs:} 
Existing attacks to GNNs can be classified as graph {evasion attacks}~\cite{dai2018adversarial,zugner2018adversarial,xu2019topology,wu2019adversarial,ma2019attacking,ma2020towards,mu2021a,wang2022bandits,wang2023turning,wang2024efficient,li2024graph,li2025practicable} and {poisoning attacks}~\cite{zugner2019adversarial,dai2018adversarial,zugner2019adversarial,xu2019topology,wang2019attacking,takahashi2019indirect,liu2019unified,sun2020adversarial,zhang2021backdoor}.  
For instance, \cite{dai2018adversarial} leveraged reinforcement learning techniques to design evasion attacks to both graph classification and node classification via modifying the graph structure. 
Most attacks require the attacker fully or partially knows the GNN model, while \cite{mu2021a,wang2022bandits} relaxing this to  only have black-box access, i.e., only query the GNN model API. For example,  \cite{wang2022bandits} formulates the black-box attack to GNNs as an online optimization problem with bandit feedback and provably obtains a sublinear query number. 
Furthermore, \cite{li2024graph} generalizes the black-box evasion attack on explainable GNNs.  
In the realm of graph poisoning attacks, \cite{xu2019topology} proposed a MinMax attack to GNNs based on gradient-based optimization, while \cite{zugner2019adversarial} introduced Metattack that perturbs the entire graph using meta learning. 

\vspace{+0.05in}
\noindent {\bf Certified robustness against graph poisoning graphs:} 
Most existing certified defenses for GNNs  are against test-time evasion attacks \cite{jin2020certified,jia2020certified,wang2021certified,zhang2021backdoor,xia2024gnncert,lai2023nodeawarebismoothingcertifiedrobustness,li2025provably,li2025agnncert},  
with a few ones \cite{lai2023nodeawarebismoothingcertifiedrobustness} against training-time poisoning attacks. 

There exist two main approaches providing certified robustness against poisoning data: 1) Randomized-smoothing based methods~\cite{lai2023nodeawarebismoothingcertifiedrobustness,rosenfeld2020certified,wang2020certifying} regard the training-prediction process as an end-to-end function, and treat poisoning attack as a special case of evasion attack; 2) Voting based methods~\cite{levine2020deep,jia2021intrinsic} partition training data into subsets and train a sub-classifier on each subset. 
However, they restrict the attacker on one type of perturbation (nodes, edges, or node features); they are applicable for a particular GNN task; and their robustness guarantees are not 100\% accurate. 
We are the first work to develop a deterministically certified robust GNN against graph poisoning attack with arbitrary kind of perturbations on both node and graph classification tasks.

\section{Conclusion}

We investigate the robustness of Graph Neural Networks (GNNs) against graph poisoning attacks and introduce {\name}, the first certified defense with deterministic guarantees against arbitrary poisoning perturbations, including modifications to nodes, edges, and node features. {\name} employs novel graph division strategies and leverages the message-passing mechanism in GNNs to establish robustness guarantees. Its universality allows it to encompass existing certified defenses as special cases. Experimental evaluations demonstrate that {\name} effectively mitigates arbitrary poisoning perturbations, offering superior robustness and efficiency compared to state-of-the-art certified defenses.
Future works include extending the proposed defense for \emph{federated} GNNs \cite{wang2022graphfl,yang2024distributed} and \emph{casually explainable} GNNs \cite{behnam2024graph} against arbitrary poisoning attacks. 

\newpage

\section{Acknowledgment}
We thank all the anonymous reviewers for their valuable feedback and constructive comments. 
This work is partially supported by the National Science Foundation (NSF) under grant Nos. ECCS-2216926, CNS-2241713, CNS-2331302, CNS-2339686, and the Cisco Research Award.

{
    \small    \bibliographystyle{ieeenat_fullname}
    \bibliography{main}

\begin{thebibliography}{64}
\providecommand{\natexlab}[1]{#1}
\providecommand{\url}[1]{\texttt{#1}}
\expandafter\ifx\csname urlstyle\endcsname\relax
  \providecommand{\doi}[1]{doi: #1}\else
  \providecommand{\doi}{doi: \begingroup \urlstyle{rm}\Url}\fi

\bibitem[Achanta et~al.(2012)Achanta, Shaji, Smith, Lucchi, Fua, and S{\"u}sstrunk]{2012SLIC}
R. Achanta, A. Shaji, K. Smith, A. Lucchi, P. Fua, and Sabine S{\"u}sstrunk.
\newblock Slic superpixels compared to state-of-the-art superpixel methods.
\newblock \emph{IEEE Transactions on Pattern Analysis {\&} Machine Intelligence}, 34\penalty0 (11):\penalty0 2274--2282, 2012.

\bibitem[Behnam and Wang(2024)]{behnam2024graph}
Arman Behnam and Binghui Wang.
\newblock Graph neural network causal explanation via neural causal models.
\newblock In \emph{ECCV}, 2024.

\bibitem[Bessadok et~al.(2022)Bessadok, Mahjoub, and Rekik]{bessadok2021graph}
Alaa Bessadok, Mohamed~Ali Mahjoub, and Islem Rekik.
\newblock Graph neural networks in network neuroscience.
\newblock \emph{IEEE TPAMI}, 2022.

\bibitem[Bojchevski et~al.(2020)Bojchevski, Gasteiger, and G{\"u}nnemann]{bojchevski2020efficient}
Aleksandar Bojchevski, Johannes Gasteiger, and Stephan G{\"u}nnemann.
\newblock Efficient robustness certificates for discrete data: Sparsity-aware randomized smoothing for graphs, images and more.
\newblock In \emph{ICML}, 2020.

\bibitem[Borgwardt et~al.(2005)Borgwardt, Ong, Sch{\"o}nauer, Vishwanathan, Smola, and Kriegel]{Borgwardt2005}
K.M. Borgwardt, C.S. Ong, S. Sch{\"o}nauer, SVN Vishwanathan, A.J. Smola, and H. Kriegel.
\newblock Protein function prediction via graph kernels.
\newblock \emph{Bioinformatics}, 2005.

\bibitem[Chen et~al.(2022)Chen, Yang, Zhang, KAILI, Liu, Han, and Cheng]{chenunderstanding}
Yongqiang Chen, Han Yang, Yonggang Zhang, MA KAILI, Tongliang Liu, Bo Han, and James Cheng.
\newblock Understanding and improving graph injection attack by promoting unnoticeability.
\newblock In \emph{ICLR}, 2022.

\bibitem[Dai et~al.(2023)Dai, Lin, Zhang, and Wang]{dai2023unnoticeable}
Enyan Dai, Minhua Lin, Xiang Zhang, and Suhang Wang.
\newblock Unnoticeable backdoor attacks on graph neural networks.
\newblock In \emph{WWW}, 2023.

\bibitem[Dai et~al.(2018)Dai, Li, Tian, Huang, Wang, Zhu, and Song]{dai2018adversarial}
Hanjun Dai, Hui Li, Tian Tian, Xin Huang, Lin Wang, Jun Zhu, and Le Song.
\newblock Adversarial attack on graph structured data.
\newblock In \emph{ICML}, 2018.

\bibitem[Debnath et~al.(1991)Debnath, de~Compadre, Debnath, Shusterman, and Hansch]{debnath1991structure}
A.K. Debnath, R.L.~Lopez de Compadre, G. Debnath, A.J. Shusterman, and C. Hansch.
\newblock Structure-activity relationship of mutagenic aromatic and heteroaromatic nitro compounds. correlation with molecular orbital energies and hydrophobicity.
\newblock \emph{Journal of medicinal chemistry}, 34\penalty0 (2):\penalty0 786--797, 1991.

\bibitem[Dobson and Doig(2003)]{Dobson2003}
P.D. Dobson and A.J. Doig.
\newblock Distinguishing enzyme structures from non-enzymes without alignments.
\newblock \emph{J. of Mol. Bio.}, 330\penalty0 (4):\penalty0 771--783, 2003.

\bibitem[Dwivedi et~al.(2023)Dwivedi, Joshi, Luu, Laurent, Bengio, and Bresson]{2023Benchmarking}
Vijay~Prakash Dwivedi, Chaitanya~K. Joshi, Anh~Tuan Luu, Thomas Laurent, Yoshua Bengio, and Xavier Bresson.
\newblock Benchmarking graph neural networks.
\newblock \emph{Journal of Machine Learning Research}, 24\penalty0 (1):\penalty0 48, 2023.

\bibitem[Entezari et~al.(2020)Entezari, Al-Sayouri, Darvishzadeh, and Papalexakis]{entezari2020all}
Negin Entezari, Saba~A Al-Sayouri, Amirali Darvishzadeh, and Evangelos~E Papalexakis.
\newblock All you need is low (rank) defending against adversarial attacks on graphs.
\newblock In \emph{WSDM}, 2020.

\bibitem[Fan et~al.(2019)Fan, Ma, Li, He, Zhao, Tang, and Yin]{fan2019graph}
Wenqi Fan, Yao Ma, Qing Li, Yuan He, Eric Zhao, Jiliang Tang, and Dawei Yin.
\newblock Graph neural networks for social recommendation.
\newblock In \emph{WWW}, 2019.

\bibitem[Fung et~al.(2021)Fung, Zhang, Juarez, and Sumpter]{fung2021benchmarking}
Victor Fung, Jiaxin Zhang, Eric Juarez, and Bobby~G Sumpter.
\newblock Benchmarking graph neural networks for materials chemistry.
\newblock \emph{npj Computational Materials}, 2021.

\bibitem[Hamilton et~al.(2017)Hamilton, Ying, and Leskovec]{hamilton2017inductive}
Will Hamilton, Zhitao Ying, and Jure Leskovec.
\newblock Inductive representation learning on large graphs.
\newblock In \emph{NIPS}, 2017.

\bibitem[Jia et~al.(2020)Jia, Wang, Cao, and Gong]{jia2020certified}
Jinyuan Jia, Binghui Wang, Xiaoyu Cao, and Neil~Zhenqiang Gong.
\newblock Certified robustness of community detection against adversarial structural perturbation via randomized smoothing.
\newblock In \emph{The Web Conference}, 2020.

\bibitem[Jia et~al.(2021)Jia, Cao, and Gong]{jia2021intrinsic}
Jinyuan Jia, Xiaoyu Cao, and Neil~Zhenqiang Gong.
\newblock Intrinsic certified robustness of bagging against data poisoning attacks.
\newblock In \emph{AAAI}, 2021.

\bibitem[Jia et~al.(2023)Jia, Liu, Hu, and Gong]{jia2023pore}
Jinyuan Jia, Yupei Liu, Yuepeng Hu, and Neil~Zhenqiang Gong.
\newblock Pore: Provably robust recommender systems against data poisoning attacks.
\newblock In \emph{USENIX Security Symposium}, 2023.

\bibitem[Jin et~al.(2020)Jin, Shi, Peruri, and Zhang]{jin2020certified}
Hongwei Jin, Zhan Shi, Venkata Jaya Shankar~Ashish Peruri, and Xinhua Zhang.
\newblock Certified robustness of graph convolution networks for graph classification under topological attacks.
\newblock In \emph{NeurIPS}, 2020.

\bibitem[Ju et~al.(2023)Ju, Fan, Zhang, and Ye]{ju2023let}
Mingxuan Ju, Yujie Fan, Chuxu Zhang, and Yanfang Ye.
\newblock Let graph be the go board: gradient-free node injection attack for graph neural networks via reinforcement learning.
\newblock In \emph{AAAI}, 2023.

\bibitem[Kipf and Welling(2017)]{kipf2017semi}
Thomas~N Kipf and Max Welling.
\newblock Semi-supervised classification with graph convolutional networks.
\newblock In \emph{ICLR}, 2017.

\bibitem[Lai et~al.(2024)Lai, Zhu, Pan, and Zhou]{lai2023nodeawarebismoothingcertifiedrobustness}
Yuni Lai, Yulin Zhu, Bailin Pan, and Kai Zhou.
\newblock Node-aware bi-smoothing: Certified robustness against graph injection attacks.
\newblock In \emph{IEEE SP}, 2024.

\bibitem[Levine and Feizi(2020)]{levine2020deep}
Alexander Levine and Soheil Feizi.
\newblock Deep partition aggregation: Provable defenses against general poisoning attacks.
\newblock In \emph{ICLR}, 2020.

\bibitem[Li and Wang(2025)]{li2025agnncert}
Jiate Li and Binghui Wang.
\newblock Agnncert: Defending graph neural networks against arbitrary perturbations with deterministic certification.
\newblock In \emph{USENIX Security}, 2025.

\bibitem[Li et~al.(2024{\natexlab{a}})Li, Pang, Dong, Jia, and Wang]{li2024graph}
Jiate Li, Meng Pang, Yun Dong, Jinyuan Jia, and Binghui Wang.
\newblock Graph neural network explanations are fragile.
\newblock In \emph{ICML}, 2024{\natexlab{a}}.

\bibitem[Li et~al.(2024{\natexlab{b}})Li, Pang, and Wang]{li2025practicable}
Jiate Li, Meng Pang, and Binghui Wang.
\newblock Practicable black-box evasion attacks on link prediction in dynamic graphs--a graph sequential embedding method.
\newblock In \emph{AAAI}, 2024{\natexlab{b}}.

\bibitem[Li et~al.(2025)Li, Pang, Dong, Jia, and Wang]{li2025provably}
Jiate Li, Meng Pang, Yun Dong, Jinyuan Jia, and Binghui Wang.
\newblock Provably robust explainable graph neural networks against graph perturbation attacks.
\newblock In \emph{ICLR}, 2025.

\bibitem[Liu et~al.(2019)Liu, Si, Zhu, Li, and Hsieh]{liu2019unified}
Xuanqing Liu, Si Si, Xiaojin Zhu, Yang Li, and Cho-Jui Hsieh.
\newblock A unified framework for data poisoning attack to graph-based semi-supervised learning.
\newblock \emph{arXiv preprint arXiv:1910.14147}, 2019.

\bibitem[Ma et~al.(2020)Ma, Ding, and Mei]{ma2020towards}
Jiaqi Ma, Shuangrui Ding, and Qiaozhu Mei.
\newblock Towards more practical adversarial attacks on graph neural networks.
\newblock \emph{NeurIPS}, 33:\penalty0 4756--4766, 2020.

\bibitem[Ma et~al.(2019)Ma, Wang, Derr, Wu, and Tang]{ma2019attacking}
Yao Ma, Suhang Wang, Tyler Derr, Lingfei Wu, and Jiliang Tang.
\newblock Attacking graph convolutional networks via rewiring.
\newblock \emph{arXiv preprint arXiv:1906.03750}, 2019.

\bibitem[McCallum et~al.(2000)McCallum, Nigam, Rennie, and Seymore]{mccallum2000automating}
Andrew~Kachites McCallum, Kamal Nigam, Jason Rennie, and Kristie Seymore.
\newblock Automating the construction of internet portals with machine learning.
\newblock \emph{Information Retrieval}, 3:\penalty0 127--163, 2000.

\bibitem[Mu et~al.(2021)Mu, Wang, Li, Sun, Xu, and Liu]{mu2021a}
Jiaming Mu, Binghui Wang, Qi Li, Kun Sun, Mingwei Xu, and Zhuotao Liu.
\newblock A hard label black-box adversarial attack against graph neural networks.
\newblock In \emph{CCS}, 2021.

\bibitem[Mujkanovic et~al.(2022)Mujkanovic, Geisler, G{\"u}nnemann, and Bojchevski]{mujkanovic2022defenses}
Felix Mujkanovic, Simon Geisler, Stephan G{\"u}nnemann, and Aleksandar Bojchevski.
\newblock Are defenses for graph neural networks robust?
\newblock \emph{NeurIPS}, 2022.

\bibitem[Riesen and Bunke(2008)]{riesen2008iam}
Kaspar Riesen and Horst Bunke.
\newblock Iam graph database repository for graph based pattern recognition and machine learning.
\newblock In \emph{Structural, Syntactic, and Statistical Pattern Recognition}, pages 287--297. Springer, 2008.

\bibitem[Rosenfeld et~al.(2020)Rosenfeld, Winston, Ravikumar, and Kolter]{rosenfeld2020certified}
Elan Rosenfeld, Ezra Winston, Pradeep Ravikumar, and Zico Kolter.
\newblock Certified robustness to label-flipping attacks via randomized smoothing.
\newblock In \emph{International Conference on Machine Learning}, pages 8230--8241. PMLR, 2020.

\bibitem[Sanchez-Gonzalez et~al.(2020)Sanchez-Gonzalez, Godwin, Pfaff, Ying, Leskovec, and Battaglia]{sanchez2020learning}
Alvaro Sanchez-Gonzalez, Jonathan Godwin, Tobias Pfaff, Rex Ying, Jure Leskovec, and Peter Battaglia.
\newblock Learning to simulate complex physics with graph networks.
\newblock In \emph{ICML}, 2020.

\bibitem[Scarselli et~al.(2008)Scarselli, Gori, Tsoi, Hagenbuchner, and Monfardini]{scarselli2008graph}
Franco Scarselli, Marco Gori, Ah~Chung Tsoi, Markus Hagenbuchner, and Gabriele Monfardini.
\newblock The graph neural network model.
\newblock \emph{IEEE TNN}, 2008.

\bibitem[Sen et~al.(2008)Sen, Namata, Bilgic, Getoor, Galligher, and Rad]{sen2008collective}
Prithviraj Sen, Galileo Namata, Mustafa Bilgic, Lise Getoor, Brian Galligher, and Tina Rad.
\newblock Collective classification in network data.
\newblock \emph{AI magazine}, 2008.

\bibitem[Shlomi et~al.(2020)Shlomi, Battaglia, and Vlimant]{shlomi2020graph}
Jonathan Shlomi, Peter Battaglia, and Jean-Roch Vlimant.
\newblock Graph neural networks in particle physics.
\newblock \emph{Machine Learning: Science and Technology}, 2020.

\bibitem[Sun et~al.(2020)Sun, Wang, Tang, Hsieh, and Honavar]{sun2020adversarial}
Yiwei Sun, Suhang Wang, Xianfeng Tang, Tsung-Yu Hsieh, and Vasant Honavar.
\newblock Adversarial attacks on graph neural networks via node injections: A hierarchical reinforcement learning approach.
\newblock In \emph{WWW}, 2020.

\bibitem[Takahashi(2019)]{takahashi2019indirect}
Tsubasa Takahashi.
\newblock Indirect adversarial attacks via poisoning neighbors for graph convolutional networks.
\newblock In \emph{IEEE BigData}, 2019.

\bibitem[Tang et~al.(2020)Tang, Li, Sun, Yao, Mitra, and Wang]{tang2020transferring}
Xianfeng Tang, Yandong Li, Yiwei Sun, Huaxiu Yao, Prasenjit Mitra, and Suhang Wang.
\newblock Transferring robustness for graph neural network against poisoning attacks.
\newblock In \emph{WSDM}, 2020.

\bibitem[Tao et~al.(2021)Tao, Shen, Cao, Hou, and Cheng]{tao2021adversarial}
Shuchang Tao, Huawei Shen, Qi Cao, Liang Hou, and Xueqi Cheng.
\newblock Adversarial immunization for certifiable robustness on graphs.
\newblock In \emph{WSDM}, 2021.

\bibitem[Veli{\v{c}}kovi{\'c} et~al.(2018)Veli{\v{c}}kovi{\'c}, Cucurull, Casanova, Romero, Lio, and Bengio]{velivckovic2018graph}
Petar Veli{\v{c}}kovi{\'c}, Guillem Cucurull, Arantxa Casanova, Adriana Romero, Pietro Lio, and Yoshua Bengio.
\newblock Graph attention networks.
\newblock In \emph{ICLR}, 2018.

\bibitem[Wang and Gong(2019)]{wang2019attacking}
Binghui Wang and Neil~Zhenqiang Gong.
\newblock Attacking graph-based classification via manipulating the graph structure.
\newblock In \emph{CCS}, 2019.

\bibitem[Wang et~al.(2020)Wang, Cao, Gong, et~al.]{wang2020certifying}
Binghui Wang, Xiaoyu Cao, Neil~Zhenqiang Gong, et~al.
\newblock On certifying robustness against backdoor attacks via randomized smoothing.
\newblock \emph{arXiv preprint arXiv:2002.11750}, 2020.

\bibitem[Wang et~al.(2021{\natexlab{a}})Wang, Jia, Cao, and Gong]{wang2021certified}
Binghui Wang, Jinyuan Jia, Xiaoyu Cao, and Neil~Zhenqiang Gong.
\newblock Certified robustness of graph neural networks against adversarial structural perturbation.
\newblock In \emph{KDD}, 2021{\natexlab{a}}.

\bibitem[Wang et~al.(2021{\natexlab{b}})Wang, Jia, and Gong]{wang2021semi}
Binghui Wang, Jinyuan Jia, and Neil~Zhenqiang Gong.
\newblock Semi-supervised node classification on graphs: Markov random fields vs. graph neural networks.
\newblock In \emph{AAAI}, 2021{\natexlab{b}}.

\bibitem[Wang et~al.(2022{\natexlab{a}})Wang, Li, Pang, Li, and Chen]{wang2022graphfl}
Binghui Wang, Ang Li, Meng Pang, Hai Li, and Yiran Chen.
\newblock Graphfl: A federated learning framework for semi-supervised node classification on graphs.
\newblock In \emph{ICDM}, 2022{\natexlab{a}}.

\bibitem[Wang et~al.(2022{\natexlab{b}})Wang, Li, and Zhou]{wang2022bandits}
Binghui Wang, Youqi Li, and Pan Zhou.
\newblock Bandits for structure perturbation-based black-box attacks to graph neural networks with theoretical guarantees.
\newblock In \emph{CVPR}, 2022{\natexlab{b}}.

\bibitem[Wang et~al.(2023)Wang, Pang, and Dong]{wang2023turning}
Binghui Wang, Meng Pang, and Yun Dong.
\newblock Turning strengths into weaknesses: A certified robustness inspired attack framework against graph neural networks.
\newblock In \emph{CVPR}, 2023.

\bibitem[Wang et~al.(2024)Wang, Lin, Zhou, Zhou, Li, Pang, Li, and Chen]{wang2024efficient}
Binghui Wang, Minhua Lin, Tianxiang Zhou, Pan Zhou, Ang Li, Meng Pang, Hai Li, and Yiran Chen.
\newblock Efficient, direct, and restricted black-box graph evasion attacks to any-layer graph neural networks via influence function.
\newblock In \emph{WSDM}, 2024.

\bibitem[Wu et~al.(2019)Wu, Wang, Tyshetskiy, Docherty, Lu, and Zhu]{wu2019adversarial}
Huijun Wu, Chen Wang, Yuriy Tyshetskiy, Andrew Docherty, Kai Lu, and Liming Zhu.
\newblock Adversarial examples on graph data: Deep insights into attack and defense.
\newblock In \emph{IJCAI}, 2019.

\bibitem[Xia et~al.(2024)Xia, Yang, Wang, and Jia]{xia2024gnncert}
Zaishuo Xia, Han Yang, Binghui Wang, and Jinyuan Jia.
\newblock Deterministic certification of graph neural networks against adversarial perturbations.
\newblock In \emph{ICLR}, 2024.

\bibitem[Xu et~al.(2019{\natexlab{a}})Xu, Chen, Liu, Chen, Weng, Hong, and Lin]{xu2019topology}
Kaidi Xu, Hongge Chen, Sijia Liu, Pin-Yu Chen, Tsui-Wei Weng, Mingyi Hong, and Xue Lin.
\newblock Topology attack and defense for graph neural networks: An optimization perspective.
\newblock In \emph{IJCAI}, 2019{\natexlab{a}}.

\bibitem[Xu et~al.(2019{\natexlab{b}})Xu, Hu, Leskovec, and Jegelka]{xu2019powerful}
Keyulu Xu, Weihua Hu, Jure Leskovec, and Stefanie Jegelka.
\newblock How powerful are graph neural networks?
\newblock In \emph{ICLR}, 2019{\natexlab{b}}.

\bibitem[Yang et~al.(2021)Yang, Liu, and Shi]{yang2021extract}
Cheng Yang, Jiawei Liu, and Chuan Shi.
\newblock Extract the knowledge of graph neural networks and go beyond it: An effective knowledge distillation framework.
\newblock In \emph{WWW}, 2021.

\bibitem[Yang et~al.(2024)Yang, Li, Jia, Hong, and Wang]{yang2024distributed}
Yuxin Yang, Qiang Li, Jinyuan Jia, Yuan Hong, and Binghui Wang.
\newblock Distributed backdoor attacks on federated graph learning and certified defenses.
\newblock In \emph{CCS}, 2024.

\bibitem[Zhang and Lu(2020)]{zhang2020feature}
Li Zhang and Haiping Lu.
\newblock A feature-importance-aware and robust aggregator for gcn.
\newblock In \emph{CIKM}, 2020.

\bibitem[Zhang et~al.(2021)Zhang, Jia, Wang, and Gong]{zhang2021backdoor}
Zaixi Zhang, Jinyuan Jia, Binghui Wang, and Neil~Zhenqiang Gong.
\newblock Backdoor attacks to graph neural networks.
\newblock In \emph{SACMAT}, 2021.

\bibitem[Zhao et~al.(2021)Zhao, Zhang, Zhang, Wu, Jin, Zhou, Dou, and Yan]{zhao2021expressive}
Xin Zhao, Zeru Zhang, Zijie Zhang, Lingfei Wu, Jiayin Jin, Yang Zhou, Dejing Dou, and Da Yan.
\newblock Expressive 1-lipschitz neural networks for robust multiple graph learning against adversarial attacks.
\newblock In \emph{ICML}, 2021.

\bibitem[Zhu et~al.(2019)Zhu, Zhang, Cui, and Zhu]{zhu2019robust}
Dingyuan Zhu, Ziwei Zhang, Peng Cui, and Wenwu Zhu.
\newblock Robust graph convolutional networks against adversarial attacks.
\newblock In \emph{KDD}, 2019.

\bibitem[Z{\"u}gner and G{\"u}nnemann(2019)]{zugner2019adversarial}
Daniel Z{\"u}gner and Stephan G{\"u}nnemann.
\newblock Adversarial attacks on graph neural networks via meta learning.
\newblock In \emph{ICLR}, 2019.

\bibitem[Z{\"u}gner et~al.(2018)Z{\"u}gner, Akbarnejad, and G{\"u}nnemann]{zugner2018adversarial}
Daniel Z{\"u}gner, Amir Akbarnejad, and Stephan G{\"u}nnemann.
\newblock Adversarial attacks on neural networks for graph data.
\newblock In \emph{KDD}, 2018.

\end{thebibliography}
}

\appendix
\clearpage
\setcounter{page}{1}
\maketitlesupplementary

\section{Proofs}
\label{supp:proofs}

{ 

\subsection{Proof of Theorem~\ref{thm:suffcond}}
\label{app:suffcond}

We prove for node classification and it is identical for graph classification. 

Recall $y_a$ and $y_b$ are respectively the class with the most vote ${\bf n}_{y_a}$ and with the second-most vote ${\bf n}_{y_b}$ on predicting the target node $v$ in the subgraphs $\{G_i\}'s$. Hence, 
\begin{align}
&    {\bf n}_{y_{a}}-\mathbb{I}(y_{a}>y_{b})\geq {\bf n}_{y_{b}} \label{eqn:16} \\
 &   {\bf n}_{y_{b}}-\mathbb{I}(y_{b}>y_{c})\geq {\bf n}_{y_{c}}, \forall y_c \in \mathcal{Y}\setminus\{y_{a}\} \label{eqn:17}
\end{align}
where $\mathbb{I}$ is the indicator function, and we pick the class with a smaller index when there exist ties. 

Further, on the poisoned classifiers $f'_{[S]}$ with $\theta'_{[S]}$ after the attack, the vote ${\bf n}_{y_{a}}'$ of the class $y_a$ and vote ${\bf n}_{y_{c}}'$ of any other class $y_{c}\in \mathcal{Y}\setminus \{y_{a}\}$ satisfy the below relationship: 
\begin{equation}
\label{eqn:18}
{\bf n}_{y_{a}}'\geq {\bf n}_{y_{a}} - \sum_{i=1}^{T}\mathbb{I}(f_{i}(G_{i})_v\neq f'_{i}(G_{i})_v) 
\end{equation}
\begin{equation}
\label{eqn:19}
{\bf n}_{y_{c}}'\leq {\bf n}_{y_{c}} + \sum_{i=1}^{T}\mathbb{I}(f_{i}(G_{i})_v\neq f'_{i}(G_{i})_v)
%\leq {\bf n}_{y_{b}}-\mathbb{I}(y_{b}>y_{c}) + \sum_{i=1}^{T}\mathbb{I}(f(G_{i})_v
\end{equation}

Since $f_{[S]}$ and $f'_{[S]}$ only differ in trained weights, the above expression $\sum_{i=1}^{T}\mathbb{I}(f_{i}(G_{i})_v\neq f'_{i}(G_{i})_v)$ could be replaced by $\sum_{i=1}^{T}\mathbb{I}(\theta_{i}\neq \theta'_{i})$

To ensure the returned label by the voting node classifier $\bar{f}$ does not change, i.e., $\bar{f}(G)_v = \bar{f}'(G)_v, \forall \mathcal{G}'_{tr}$, we must have:
\begin{equation}
\label{eqn:20}
{\bf n}_{y_{a}}'\geq {\bf n}_{y_c}'+\mathbb{I}(y_{a}>y_{c}),\forall y_{c}\in \mathcal{Y}\setminus \{y_{a}\}
\end{equation}

Combining with Eqns \ref{eqn:18} and \ref{eqn:19}, the sufficient condition for Eqn~\ref{eqn:20} to satisfy is to ensure: 
\begin{equation}
{\bf n}_{y_{a}} - \sum_{i=1}^{T}\mathbb{I}(\theta_{i}\neq \theta'_{i})  \geq 
{\bf n}_{y_{c}} + \sum_{i=1}^{T}\mathbb{I}(\theta_{i}\neq \theta'_{i})
\end{equation}
Or, 
\begin{equation}
{\bf n}_{y_{a}}\geq {\bf n}_{y_{c}} + 2\sum_{i=1}^{T}\mathbb{I}(\theta_{i}\neq \theta'_{i})+\mathbb{I}(y_{a}>y_{c}).
\end{equation}

Plugging Eqn~\ref{eqn:17}, we further have this condition:
\begin{equation}
\label{eqn:23}
{\bf n}_{y_{a}} \geq {\bf n}_{y_{b}}-\mathbb{I}(y_{b}>y_{c})+ 2\sum_{i=1}^{T}\mathbb{I}(\theta_{i}\neq \theta'_{i})+\mathbb{I}(y_{a}>y_{c})
\end{equation}
We observe that:
\begin{equation}
\label{eqn:24}
\mathbb{I}(y_{a}>y_{b})\geq \mathbb{I}(y_{a}>y_{c})-\mathbb{I}(y_{b}>y_{c})
,\forall y_{c}\in \mathcal{Y}\setminus \{y_{a}\}\end{equation}
Combining Eqn~\ref{eqn:24} with Eqn~\ref{eqn:23}, we have:
\begin{equation}
{\bf n}_{y_{a}} \geq {\bf n}_{y_{b}}+2\sum_{i=1}^{T}\mathbb{I}(\theta_{i}\neq \theta'_{i})+\mathbb{I}(y_{a}>y_{b})
\end{equation}
Let $M = {\lfloor {\bf n}_{y_a}-{\bf n}_{y_b}-\mathbb{I}(y_{a}>y_{b})\rfloor} / {2}$, hence 
$$\sum\nolimits_{i=1}^{T}\mathbb{I}(\theta_{i}\neq \theta'_{i}) \leq M.$$

\subsection{Proof of Theorem 2}
To prove Theorem~\ref{thm:edgebased}, we will first certify the bounded number of altered predictions under (1) edge manipulation, (2) node manipulation and (3) node feature manipulation separately through Theorems~\ref{thm:edgeperturb}-\ref{thm:nodefeaperturb}. 
\begin{theorem}[]
\label{thm:edgeperturb}
Assume $\mathcal{G}_{\text{tr}}$ is under the edge manipulation $\{\mathcal{E}_+,\mathcal{E}_-\}$, 
then at most $|\mathcal{E}_+| + |\mathcal{E}_-|$ sub-classifiers trained by our edge-centric subgraph sets are different between $\mathcal{G}'_{[S]}$ and $\mathcal{G}_{[S]}$. 
\end{theorem}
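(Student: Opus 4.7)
}
My plan is to exploit the fact that the edge-centric division is a deterministic function of each edge alone, so each edge operation localizes to exactly one subgraph set. Concretely, by Equation~\ref{eqn:edgehash} the subgraph index $i_e$ of any edge $e=(u,v)$ depends only on $\texttt{str}(u)+\texttt{str}(v)$ and on $h$ and $S$, and is independent of every other edge, node, or feature in the training graph(s). Hence the hash assignment of unaffected edges is unchanged after the perturbation.

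The main steps I would carry out are the following. First, fix any $i \in [S]$ and compare $\mathcal{G}_i$ with $\mathcal{G}_i'$ edge by edge. An edge $e \in \mathcal{E}$ with $i_e \neq i$ lies in neither set, so it contributes no change. An edge $e \in \mathcal{E} \setminus \mathcal{E}_-$ with $i_e = i$ is present in both (since its endpoints and thus its hash are untouched). Therefore $\mathcal{G}_i \neq \mathcal{G}_i'$ can only be caused by an inserted edge in $\mathcal{E}_+$ with $i_e = i$ or a deleted edge in $\mathcal{E}_-$ with $i_e = i$. Since node features are unaltered and we are in the pure edge-manipulation setting, no other source of difference exists. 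Second, define $I_+ = \{\, i_e : e \in \mathcal{E}_+\,\}$ and $I_- = \{\, i_e : e \in \mathcal{E}_-\,\}$; then $|I_+ \cup I_-| \leq |\mathcal{E}_+|+|\mathcal{E}_-|$ and for every $i \notin I_+ \cup I_-$ we have $\mathcal{G}_i = \mathcal{G}_i'$.

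Third, I would invoke determinism of the training pipeline: for any $i$ with $\mathcal{G}_i = \mathcal{G}_i'$, running the training routine with a fixed random seed/initialization on identical inputs yields identical weights, i.e.\ $\theta_i = \theta_i'$. Thus the indicator $\mathbb{I}(\theta_i \neq \theta_i')$ can be nonzero only for $i \in I_+ \cup I_-$, which gives
\begin{equation*}
\sum_{i=1}^{S}\mathbb{I}(\theta_i \neq \theta_i') \;\leq\; |I_+ \cup I_-| \;\leq\; |\mathcal{E}_+| + |\mathcal{E}_-|,
\end{equation*}
as desired.

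The bookkeeping is all routine; the only substantive point is the localization argument that an edge operation touches a single subgraph index. The one place where care is needed is the undirected-edge convention (ordering endpoints by index so $h[\texttt{str}(u)+\texttt{str}(v)] = h[\texttt{str}(v)+\texttt{str}(u)]$), which must be applied consistently both before and after perturbation; I would state this as a side remark so that inserted and deleted edges are canonicalized the same way. Beyond that, the argument reduces to counting and the assumed determinism of training on identical inputs.
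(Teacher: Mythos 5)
Your proposal is correct and follows essentially the same route as the paper's proof: both rest on the observation that the hash assignment of each edge is a deterministic function of that edge alone, so each inserted or deleted edge changes at most one subgraph set, and deterministic training on identical subgraph sets yields identical weights. Your version merely makes explicit two points the paper leaves implicit (the fixed-seed determinism of training and the consistent canonicalization of undirected edges), which is fine but not a different argument.
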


\begin{proof}
Edges of a train graph $G$ in all subgraph sets of $\mathcal{G}_{[S]}$ are disjoint. 
Hence, when any edge in $G$ is deleted or added by an adversary, only one subgraph set in $\mathcal{G}_{[S]}$ is affected. Further, when any $|\mathcal{E}_+| + |\mathcal{E}_-|$ edges in $G$ are perturbed, there are at most $|\mathcal{E}_+| + |\mathcal{E}_-|$ subgraph set between $\mathcal{G}_{[S]}$ and $\mathcal{G}'_{[S]}$ are different.
 By training $S$ node/graph sub-classifiers on $\mathcal{G}_{[S]}$ and $\mathcal{G}'_{[S]}$, there are at most $|\mathcal{E}_+| + |\mathcal{E}_-|$ sub-classifiers that have different weights between them.  
% \vspace{-2mm}
\end{proof}

\begin{theorem}[]
\label{thm:nodeperturb} 
Assume the training graph set $\mathcal{G}_{\text{tr}}$ is under the node manipulation  
$\{\mathcal{V}_+, \mathcal{E}_{\mathcal{V}_+},{\bf X}'_{\mathcal{V}_+},\mathcal{V}_-, \mathcal{E}_{\mathcal{V}_-}\}$,
then at most $|\mathcal{E}_{\mathcal{V}_+}| + | \mathcal{E}_{\mathcal{V}_-}| $ sub-classifiers trained by our edge-centric subgraph sets are different between $\mathcal{G}'_{[S]}$ and $\mathcal{G}_{[S]}$. 
\end{theorem}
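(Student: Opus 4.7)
The plan is to reduce node manipulation to edge manipulation by exploiting the structure of the edge-centric hashing together with the (explicit or implicit) isolated-node postprocessing. The key observation is that once Equation~\ref{eqn:edgehash} deterministically routes each edge into exactly one of the $S$ subgraph indices, the only way a node in $\mathcal{V}_+$ or $\mathcal{V}_-$ can influence subgraph set $\mathcal{G}_i$ is through one of its incident edges landing at index $i$: a node that contributes no edge to $\mathcal{G}_i$ either gets stripped out by the isolated-node removal (for graph classification) or sits in $\mathcal{G}_i$ without transmitting any message to a training node (for node classification).

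First I would handle node injection. For every injected node $u \in \mathcal{V}_+$, its new feature row in $\mathbf{X}'_{\mathcal{V}_+}$ can only enter the training loss through an incident edge $(u,v) \in \mathcal{E}_{\mathcal{V}_+}$. Each such edge has a single image under the hash, so the set of subgraph indices altered by the injection is contained in $\{i_e : e \in \mathcal{E}_{\mathcal{V}_+}\}$, whose cardinality is at most $|\mathcal{E}_{\mathcal{V}_+}|$. Symmetrically, deleting $u \in \mathcal{V}_-$ removes its feature row and its incident edges in $\mathcal{E}_{\mathcal{V}_-}$; each deleted edge lives in exactly one subgraph, so at most $|\mathcal{E}_{\mathcal{V}_-}|$ subgraph indices are affected. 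Summing, the number of subgraph sets in $\mathcal{G}_{[S]}'$ that differ from their counterparts in $\mathcal{G}_{[S]}$ is at most $|\mathcal{E}_{\mathcal{V}_+}| + |\mathcal{E}_{\mathcal{V}_-}|$, which mirrors the argument used for pure edge perturbation in Theorem~\ref{thm:edgeperturb}.

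To promote ``identical training data'' to ``identical trained weights'', I would appeal to determinism of the training pipeline: under the same initialization and optimizer schedule (the standing assumption for hash-based certified defenses such as \cite{xia2024gnncert,li2025agnncert}), an unchanged $\mathcal{G}_i$ yields $\theta_i' = \theta_i$. Therefore $\sum_{i=1}^{S}\mathbb{I}(\theta_i \neq \theta_i') \leq |\mathcal{E}_{\mathcal{V}_+}| + |\mathcal{E}_{\mathcal{V}_-}|$, which is exactly the claim.

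The main obstacle I anticipate is the node-classification case, where no isolated-node removal is performed and one must argue directly that an injected or deleted node having no edges in $\mathcal{G}_i$ leaves the trained sub-classifier untouched. I would discharge this by inspecting the message-passing semantics of GCN, GSAGE, and GAT: aggregation is taken over the (incoming) neighborhood, so an isolated $u$ in $\mathcal{G}_i$ neither sends its feature into nor receives features from any training node $v \in V_{\text{tr}}$. Consequently $\mathbf{X}_u$ never appears in any term of $\mathcal{L}(\mathbf{y}_{\text{tr}}, \tilde{\mathbf{y}}_i; \theta_i)$ or its gradient, so the optimization trajectory is unaffected. Once this invariance is in hand, the counting step above completes the proof.
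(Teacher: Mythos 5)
Your proposal is correct and follows essentially the same route as the paper: count the subgraph indices touched by the incident edges $\mathcal{E}_{\mathcal{V}_+}\cup\mathcal{E}_{\mathcal{V}_-}$ (each edge hashes to exactly one index), then argue via the message-passing semantics that in the remaining subgraphs the manipulated nodes are isolated and hence leave the training loss, gradients, and trained weights unchanged. Your explicit remarks on training determinism and on the isolated-node removal for graph classification are welcome elaborations of points the paper leaves implicit, but they do not constitute a different argument.
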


\begin{theorem}[]
\label{thm:nodefeaperturb} 
Assume the training graph set $\mathcal{G}_{\text{tr}}$ is under the node feature manipulation 
$\{\mathcal{V}_r, \mathcal{E}_{\mathcal{V}_r},{\bf X}'_{\mathcal{V}_r}\}$, 
then at most $|\mathcal{E}_{\mathcal{V}_r}|$ sub-classifiers trained by our edge-centric subgraph sets are different between $\mathcal{G}'_{[S]}$ and $\mathcal{G}_{[S]}$. 
\end{theorem}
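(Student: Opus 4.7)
The plan is to mirror Theorems~\ref{thm:edgeperturb}--\ref{thm:nodeperturb}, exploiting the key observation that pure node-feature manipulation leaves every edge of $\mathcal{G}_{\text{tr}}$ intact, so the edge-to-subgraph map determined by~\eqref{eqn:edgehash} is identical between $\mathcal{G}_{[S]}$ and $\mathcal{G}_{[S]}'$. The only possible discrepancy between $\mathcal{G}_i$ and $\mathcal{G}_i'$ is that features of the nodes in $\mathcal{V}_r$ may differ, and the goal is to count the subgraph indices at which this discrepancy can actually alter the trained weights.

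First I would index the set of potentially affected subgraphs. For each $e=(u,v)\in \mathcal{E}_{\mathcal{V}_r}$ define $i_e = h[\mathrm{str}(u)+\mathrm{str}(v)] \, \mathrm{mod}\, S + 1$ and let $I = \{i_e : e\in \mathcal{E}_{\mathcal{V}_r}\}$. Trivially $|I|\le |\mathcal{E}_{\mathcal{V}_r}|$, so it suffices to show $\theta_j = \theta_j'$ for every $j \notin I$.

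The second step is an isolation argument. By the definition of $I$, if $j \notin I$ then $\mathcal{G}_j$ contains no edge incident to any node of $\mathcal{V}_r$; every node in $\mathcal{V}_r$ is therefore isolated in $\mathcal{G}_j$. For graph classification, the post-processing stipulated in Section~\ref{Sec:E-C_Division} removes isolated nodes outright, so the perturbed features ${\bf X}'_{\mathcal{V}_r}$ never appear in $\mathcal{G}_j'$, giving $\mathcal{G}_j = \mathcal{G}_j'$ and hence $\theta_j = \theta_j'$. For node classification, the nodes of $\mathcal{V}_r$ remain in $\mathcal{G}_j'$ but each has no neighbors, so under message passing their altered features never enter any other node's representation, leaving the training loss on $\mathcal{G}_j$ unchanged. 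Summing over $j$, one obtains $\sum_{i=1}^{S}\mathbb{I}(\theta_i\neq\theta_i') \le |I| \le |\mathcal{E}_{\mathcal{V}_r}|$, which is the claim.

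The main obstacle will be tightening the node-classification branch of the isolation argument: strictly speaking, an isolated node $u\in \mathcal{V}_r$ still has a self-prediction $f_j(\mathcal{G}_j)_u$ that depends on ${\bf X}_u$ through the GNN's self-loop or bias, and if $u\in V_{\text{tr}}$ that prediction would perturb the training loss on $\mathcal{G}_j$ for $j\notin I$. I would close this gap by adopting the same ``manipulated nodes are disjoint from $V_{\text{tr}}$'' convention that the paper invokes for node manipulation, so that isolated $\mathcal{V}_r$-nodes contribute no term to the loss on $\mathcal{G}_j$ for $j\notin I$, completing the argument without loss.
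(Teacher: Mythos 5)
Your proof is correct and follows essentially the same route as the paper's: because the hash map assigns edges disjointly and feature manipulation leaves $\mathcal{E}$ unchanged, the edges $\mathcal{E}_{\mathcal{V}_r}$ occupy at most $|\mathcal{E}_{\mathcal{V}_r}|$ subgraph indices, and in every other subgraph the nodes of $\mathcal{V}_r$ are isolated, so by message passing (or, for graph classification, by the isolated-node removal in the postprocessing) their altered features cannot change the training of that sub-classifier. Your final remark is a genuine improvement in rigor: the paper's proof, which argues only that isolated nodes ``have no influence on other nodes' representations,'' silently ignores that an isolated $u\in\mathcal{V}_r\cap V_{\text{tr}}$ still contributes a loss term depending on ${\bf X}_u$, and the convention $\mathcal{V}_r\cap V_{\text{tr}}=\emptyset$ (which the paper states only for node injection/deletion) is indeed needed to close that gap.
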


\begin{proof}
{ 
Our proof for the above two theorems is based on the key observation that manipulations on isolated nodes do not participate in the forward calculation of other nodes' representations in GNNs. 
Take node injection for instance and the proof for other cases are similar.  
Note that all subgraphs after node injection will contain the newly injected nodes, but they still do not have overlapped edges between each other via the hash mapping. Hence, the edges $E_{\mathcal{V}_+}$ induced by the injected nodes $\mathcal{V}_+$ exist in at most $|E_{\mathcal{V}_+}|$ subgraphs. In other word, the injected nodes $\mathcal{V}_+$ in at least $S-|E_{+}|$ subgraphs have no  edges and are isolated. 

Due to the message passing mechanism in GNNs, every node only uses its neighboring nodes' representations to update its own representation. Hence, 
 these subgraphs with 
the isolated injected nodes, whatever their features ${\bf X}'_{\mathcal{V}_+}$ are, would have no influence on other nodes' representation calculation. Therefore, in at least $S-|E_{+}|$ subgraph sets, the training nodes'/graphs' representations and gradients maintain the same, implying the trained classifier weight to be the same.

} 
\end{proof}
By combining above theorems, we could reach Theorem~\ref{thm:edgebased} by simply adding up the bounded number.

\subsection{Proof of Theorem 4}
Similar to the proof of Theorem~\ref{thm:edgebased}, to prove Theorem~\ref{thm:nodebased}, we first certify the bounded number of altered predictions under (1) edge manipulation, (2) node manipulation and (3) node feature manipulation separately through Theorems~\ref{thm:edgeperturb2}-\ref{thm:nodefeaperturb2}. 

\begin{theorem}[]
\label{thm:edgeperturb2}
Assume $\mathcal{G}_{\text{tr}}$ is under the edge manipulation $\{\mathcal{E}_+,\mathcal{E}_-\}$, then at most $2|\mathcal{E}_+| + 2|\mathcal{E}_-|$  node sub-classifiers trained by our node-centric subgraph sets are different between $\vec{\mathcal{G}}'_{[S]}$ and $\vec{\mathcal{G}}_{[S]}$, and at most $|\mathcal{E}_+| + |\mathcal{E}_-|$ graph sub-classifiers trained by our node-centric subgraph sets are different between $\vec{\mathcal{G}}'_{[S]}$ and $\vec{\mathcal{G}}_{[S]}$.  
\end{theorem}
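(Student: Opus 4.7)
The plan is to split the argument cleanly by the effect of a single edge perturbation on the directed subgraphs, and then lift the bound on affected subgraphs to a bound on altered classifier weights using the fact that each sub-classifier is trained deterministically from its own subgraph set. First, I would recall that under the node-centric division every undirected edge $e=(u,v)$ contributes two directed edges $u\to v$ and $v\to u$, whose subgraph indices are $i_u = h[\mathrm{str}(u)]\,\mathrm{mod}\,S+1$ and $i_v = h[\mathrm{str}(v)]\,\mathrm{mod}\,S+1$ respectively, and these indices depend only on fixed source strings, not on other edges or features.

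For the node-classification bound $2|\mathcal{E}_+|+2|\mathcal{E}_-|$ I would reason edge by edge. Inserting or deleting a single undirected edge $(u,v)$ inserts or deletes exactly one directed edge in $\vec{\mathcal{E}}_{i_u}$ (namely $u\to v$) and exactly one in $\vec{\mathcal{E}}_{i_v}$ (namely $v\to u$); no other $\vec{\mathcal{E}}_j$ is touched, because the hash indices are deterministic in the source. Hence a single edge perturbation changes at most two subgraph sets in $\vec{\mathcal{G}}_{[S]}$, and therefore alters the training data of at most two node sub-classifiers. Summing over all $|\mathcal{E}_+|+|\mathcal{E}_-|$ perturbed edges and using that each $f_i$ is trained purely from $\vec{\mathcal{G}}_i$, the number of indices with $\theta_i\neq\theta'_i$ is at most $2(|\mathcal{E}_+|+|\mathcal{E}_-|)$.

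For the graph-classification bound $|\mathcal{E}_+|+|\mathcal{E}_-|$ the extra ingredient is the postprocessing step, which in each $\vec{G}_i$ deletes every node whose own subgraph index is not $i$ and attaches a deterministic zero-feature aggregator linked to each remaining index-$i$ node. Pure edge manipulation never changes the node set of any training graph, so the set of retained index-$i$ nodes, their features, and the aggregator with its outgoing links, are all identical in $\vec{G}_i$ and $\vec{G}'_i$. A directed edge $u\to v$ with source index $i_u$ survives postprocessing in subgraph $i_u$ only when the target $v$ also has index $i_u$, i.e., when $i_v=i_u$. Therefore perturbing an edge $(u,v)$ with $i_u\neq i_v$ changes no postprocessed subgraph at all, while perturbing $(u,v)$ with $i_u=i_v$ changes exactly one postprocessed subgraph. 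Summing over all perturbed edges gives the bound $|\mathcal{E}_+|+|\mathcal{E}_-|$, and the deterministic training map converts this into the stated bound on differing graph sub-classifier weights.

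The main obstacle I anticipate is keeping the postprocessing bookkeeping airtight in the graph-classification case: one must verify that removing cross-index target nodes really does eliminate cross-index directed edges from the training input, rather than leaving any dangling structure or changing the aggregator that could perturb training. This is handled by the explicit observation that the postprocessed $\vec{G}_i$ is fully determined by (i) the set of index-$i$ nodes, (ii) their features, and (iii) directed edges between pairs of index-$i$ nodes, together with the deterministic aggregator; pure edge manipulation fixes (i) and (ii), and modifies (iii) only when both endpoints share the same index. Once this invariance is formalized, the remainder is the same deterministic counting argument used for node classification.
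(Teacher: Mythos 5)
Your proof is correct and follows essentially the same route as the paper's: an edge-by-edge count showing each perturbed undirected edge touches at most the two subgraphs indexed by its endpoints' hashes (giving $2|\mathcal{E}_+|+2|\mathcal{E}_-|$ for node classification), and the same two-case analysis on $i_u = i_v$ versus $i_u \neq i_v$ with the postprocessing step eliminating cross-index edges (giving $|\mathcal{E}_+|+|\mathcal{E}_-|$ for graph classification). Your version is somewhat more explicit than the paper's about why the postprocessed subgraph is invariant (fixed node set and features under pure edge manipulation), but the underlying argument is the same.
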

\label{suppl:proofedge2}
\begin{proof}
For the node classifier, We simply analyze when an arbitrary edge $(u, v)$ is deleted/added from a train graph $G\in\mathcal{G}_{\text{tr}}$. It is obvious at most two subgraphs $\vec{G}_{i_{u \rightarrow v} }$ and $\vec{G}_{i_{v \rightarrow u} }$ are perturbed after perturbation, and therefore two subgraph sets are affected. Generalizing this observation to any $|\mathcal{E}_+| + |\mathcal{E}_-|$ edges in $G$ being perturbed, at most $2|\mathcal{E}_+| + 2|\mathcal{E}_-|$ subgraph sets are generated different between $\mathcal{G}_{[S]}$ and  $\mathcal{G}_{[S]}'$. 

For the graph classifier, we consider the following two cases: i) $i_{u\rightarrow v}=i_{v\rightarrow u}$. this means $u$ and $v$ are in the same subgraph, hence at most one subgraph's representation is affected; ii) $i_{u\rightarrow v}\neq i_{v\rightarrow u}$. Due to the removal of other nodes whose subgraph index is not $i$ in every subgraph $\vec{G}_{i}$, both direct edges would always be removed from $\vec{G}_{i_{u \rightarrow v} }$ and $\vec{G}_{i_{v \rightarrow u} }$ if exist. Generalizing this observation to any $|\mathcal{E}_+| + |\mathcal{E}_-|$ edges in $G$ being perturbed, at most $|\mathcal{E}_+| + |\mathcal{E}_-|$ subgraph sets are generated different between $\mathcal{G}_{[S]}$ and  $\mathcal{G}_{[S]}'$.  

\vspace{-2mm}
\end{proof}

\begin{theorem}[]
\label{thm:nodeperturb2} 
Assume a graph $G$ is under the node manipulation  
$\{\mathcal{V}_+, \mathcal{E}_{\mathcal{V}_+}, {\bf X}'_{\mathcal{V}_+}, \mathcal{V}_-, \mathcal{E}_{\mathcal{V}_-}\}$,
then at most $|\mathcal{V}_+| + |\mathcal{V}_-| $ node/graph sub-classifiers trained by our node-centric subgraph sets are different between $\vec{\mathcal{G}}'_S$ and $\vec{\mathcal{G}}_S$. 
\end{theorem}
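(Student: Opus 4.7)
The plan is to decompose the node manipulation $\{\mathcal{V}_+, \mathcal{E}_{\mathcal{V}_+}, {\bf X}'_{\mathcal{V}_+}, \mathcal{V}_-, \mathcal{E}_{\mathcal{V}_-}\}$ into the elementary operations of injecting one node at a time and deleting one node at a time, and to show that each such operation changes at most a single subgraph set in $\vec{\mathcal{G}}_{[S]}$. Since identical subgraph sets train to identical weights under a fixed training protocol, summing over the $|\mathcal{V}_+| + |\mathcal{V}_-|$ elementary operations will immediately give the desired bound.

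Fix an injected node $u \in \mathcal{V}_+$ with an arbitrary assigned feature in ${\bf X}'_{\mathcal{V}_+}$ and arbitrary induced edges $\mathcal{E}_{\mathcal{V}_+}$ incident to $u$. I will argue that only the subgraph indexed by $i_u = h[\mathrm{str}(u)] \bmod S + 1$ can differ between $\vec{\mathcal{G}}_{[S]}$ and $\vec{\mathcal{G}}'_{[S]}$. By the node-centric hash rule of Equation~\ref{eqn:nodehash}, every outgoing directed edge $u \to v'$ is routed to subgraph $i_u$; hence all other subgraphs $\vec{G}_j$ with $j \neq i_u$ see $u$ only at the receiving end of possible incoming edges $v' \to u$ whenever $j = i_{v'}$. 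For node classification, $u$ is then a pure sink in such a $\vec{G}_j$: directed message passing forbids any information flow out of $u$, so $u$'s embedding cannot reach any existing node; and the new edge $v' \to u$ is an outgoing edge of $v'$, leaving $v'$'s incoming neighborhood untouched. Because $u \notin V_{\text{tr}}$, $u$'s own embedding also never enters the loss. Consequently the loss, gradients, and trained weights $\theta_j$ on $\vec{G}_j$ are unchanged. For graph classification, the postprocessing step deletes every node whose subgraph index differs from $j$, which strips $u$ together with every incoming edge $v' \to u$ out of $\vec{G}_j$; thus the postprocessed subgraph and its graph-level embedding are identical before and after injecting $u$. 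In both tasks, only subgraph $i_u$ is potentially affected, so each injection contributes at most one differing sub-classifier.

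A symmetric argument handles each deletion $u \in \mathcal{V}_-$: all of $u$'s outgoing edges previously lay in $\vec{\mathcal{G}}_{i_u}$, and in every other subgraph $u$ was already a pure sink (node classification) or had already been removed by postprocessing (graph classification), so its removal leaves those subgraphs unchanged. Additivity across the $|\mathcal{V}_+|$ injections and $|\mathcal{V}_-|$ deletions yields at most $|\mathcal{V}_+| + |\mathcal{V}_-|$ altered subgraph sets and therefore at most $|\mathcal{V}_+| + |\mathcal{V}_-|$ differing sub-classifiers, for both tasks. The main obstacle is the sink argument for node classification: one must show rigorously that introducing a node with arbitrary feature and arbitrary incoming edges into $\vec{G}_j$ (with $j \neq i_u$) cannot perturb any existing training node's representation at any GNN layer. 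This rests on two observations that need to be stated precisely before the additive summation: (i) in directed message passing a node's update uses only its incoming neighbors, so the outgoing edge $v' \to u$ is invisible to $v'$'s update rule; and (ii) a node with no outgoing edges in $\vec{G}_j$ cannot propagate its hidden state to any other node, regardless of the depth of the GNN. Once these two facts are nailed down, the additive bound over the elementary operations follows immediately.
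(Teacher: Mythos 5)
Your proposal is correct and follows essentially the same route as the paper's proof: the key observation that all of a node's outgoing edges are hashed into a single subgraph, so that in every other subgraph the manipulated node is a pure sink under directed message passing (or is stripped by postprocessing for graph classification) and cannot alter any training node's representation, gradients, or the trained weights. You are in fact somewhat more explicit than the paper on two points it glosses over --- that the edge $v' \to u$ is invisible to $v'$'s own update and that $u \notin V_{\text{tr}}$ keeps $u$ out of the loss --- but the decomposition and the additive counting are the same.
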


\begin{theorem}[]
\label{thm:nodefeaperturb2} 
Assume a graph $G$ is under the node feature manipulation 
$\{\mathcal{V}_r, \mathcal{E}_{\mathcal{V}_r},{\bf X}'_{\mathcal{V}_r}\}$, 
then at most $|\mathcal{V}_r|$ node/graph sub-classifiers trained by our node-centric subgraphs are different between $\vec{\mathcal{G}}'_S$ and $\vec{\mathcal{G}}_S$. 
\end{theorem}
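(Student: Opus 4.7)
The plan is to decompose the claim along the nodes of $\mathcal{V}_r$ and show that perturbing the feature of a single node $u \in \mathcal{V}_r$ affects at most one trained sub-classifier, namely the one whose index is $i_u = h[\mathrm{str}(u)] \bmod S + 1$. Summing over $u \in \mathcal{V}_r$ then yields the bound $|\mathcal{V}_r|$.

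First I would fix any $u \in \mathcal{V}_r$ and invoke the node-centric hashing rule in Eqn~\ref{eqn:nodehash}: every outgoing directed edge originating from $u$ lands in the single subgraph $\vec{G}_{i_u}$. Consequently, in every other subgraph $\vec{G}_j$ with $j \neq i_u$, node $u$ has no outgoing edges at all. Since GNNs use directed message passing in which a node updates its representation only from its incoming neighbors, $u$'s feature can influence another node's representation in $\vec{G}_j$ only by being read by some node that has $u$ as an incoming neighbor; but $u$ has no outgoing edges in $\vec{G}_j$, so no other node's representation depends on ${\bf X}_u$ inside $\vec{G}_j$.

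For the node classification case, because $u \notin V_{\text{tr}}$ (following the same convention used for manipulated nodes in Theorem~\ref{thm:edgebased}), $u$'s own representation never enters the training loss of any $f_j$. Hence for every $j \neq i_u$ the loss, the gradients, and the resulting trained weights are unaffected by swapping ${\bf X}_u$ for ${\bf X}'_u$, yielding $\theta_j = \theta'_j$. For the graph classification case, the postprocessing step removes from $\vec{G}_j$ every node whose subgraph index is not $j$, so $u$ is simply absent from $\vec{G}_j$ whenever $j \neq i_u$; the training subgraph set $\vec{\mathcal{G}}_j$ is literally identical before and after the perturbation, so again $\theta_j = \theta'_j$. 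In both cases only the single sub-classifier $f_{i_u}$ can change.

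Applying this reasoning independently to each $u \in \mathcal{V}_r$ and union-bounding over the at most $|\mathcal{V}_r|$ distinct indices in $\{i_u : u \in \mathcal{V}_r\}$ gives $\sum_{i=1}^{S}\mathbb{I}(\theta_i \neq \theta'_i) \leq |\mathcal{V}_r|$. The main obstacle is making the ``isolated with respect to outgoing edges'' argument airtight in the node classification setting when $u$ may still carry incoming edges in $\vec{G}_j$; the directed message-passing convention together with the implicit restriction $u \notin V_{\text{tr}}$ are precisely what keep the training loss invariant on every $\vec{\mathcal{G}}_j$ with $j \neq i_u$, so I would cite both explicitly rather than leaving them as background.
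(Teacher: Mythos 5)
Your proof is correct and follows essentially the same route as the paper's: the key observation in both is that all outgoing edges of a node hash to a single subgraph, so under directed message passing a feature-perturbed node $u$ can influence other nodes' representations (and hence gradients and trained weights) in at most the one subgraph set indexed by $h[\mathrm{str}(u)] \bmod S + 1$. You are in fact somewhat more careful than the paper's own (shared) proof of Theorems~\ref{thm:nodeperturb2}--\ref{thm:nodefeaperturb2} on two points it leaves implicit: that for node classification one must additionally assume $\mathcal{V}_r \cap V_{\text{tr}} = \emptyset$, since otherwise the perturbed node's own feature-dependent representation would enter the training loss of all $S$ sub-classifiers, and that for graph classification the postprocessing step removes $u$ entirely from every subgraph with a different index, making those training sets literally unchanged.
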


\label{suppl:proofnode&fea2}
\begin{proof}
Our proof for the above two theorems is based on the key observation that: \emph{in a directed graph, manipulations on nodes with no outgoing edge have no influence on other nodes' representations in GNNs}. For any node  $u \in G$, only one subgraph $\vec{G}_{h[\mathrm{str}(u)] \, \, \texttt{mod} \, \, S+1}$ has outgoing edges.
Take node injection for instance and the proof for other cases are similar. Note that all subgraphs after node injection will contain newly injected nodes $V_{+}$, but they still do not have overlapped nodes with outgoing edges between each other via the hashing mapping. Hence, the injected nodes only have outgoing edges in at most $|V_{+}|$ subgraphs. 
Due to the directed message passing mechanism in GNNs, every node only uses its incoming neighboring nodes' representation to update its own representation. Hence, 
the injected nodes with no outgoing edges, whatever their features ${\bf X}'_{\mathcal{V}_+}$ are, would have no influence on other nodes' representation and gradients, including the training nodes', implying at least $S-|V_{+}|$ subgraphs' training process maintain the same.
\vspace{-2mm}
\end{proof}
}
By collaborating above theorems together, we could reach Theorem~\ref{thm:nodebased} by simply adding up the bounded number.

\begin{figure*}[t]
    \centering
    \captionsetup[subfloat]{labelsep=none, format=plain, labelformat=empty}

    \subfloat[{\footnotesize (a) Edge-Centric Graph Division for Node Classification against edge deletion, node deletion and node feature manipulation}]{
    \includegraphics[width=0.9\linewidth]{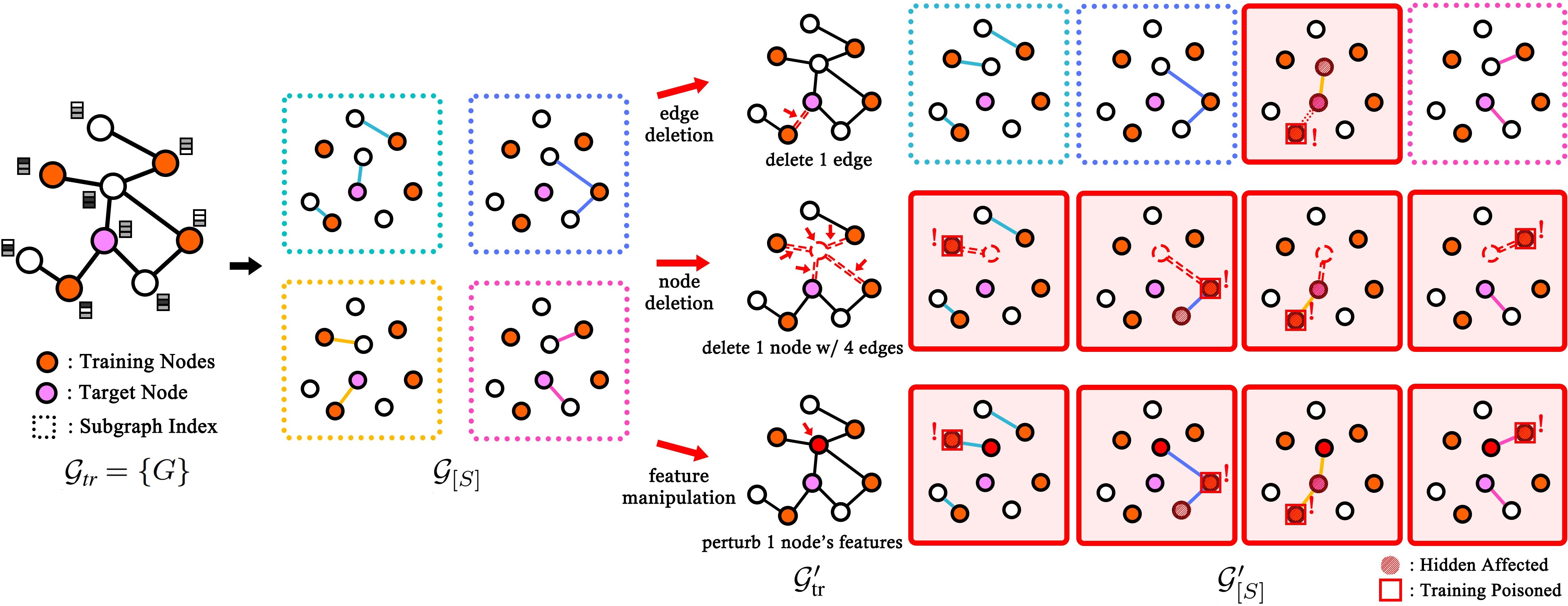}}
    \hspace{+10mm}
   
    \subfloat[{\footnotesize (b) Node-Centric Graph Division for Node Classification against edge deletion, node deletion and node feature manipulation}]{
    \includegraphics[width=0.9\linewidth]{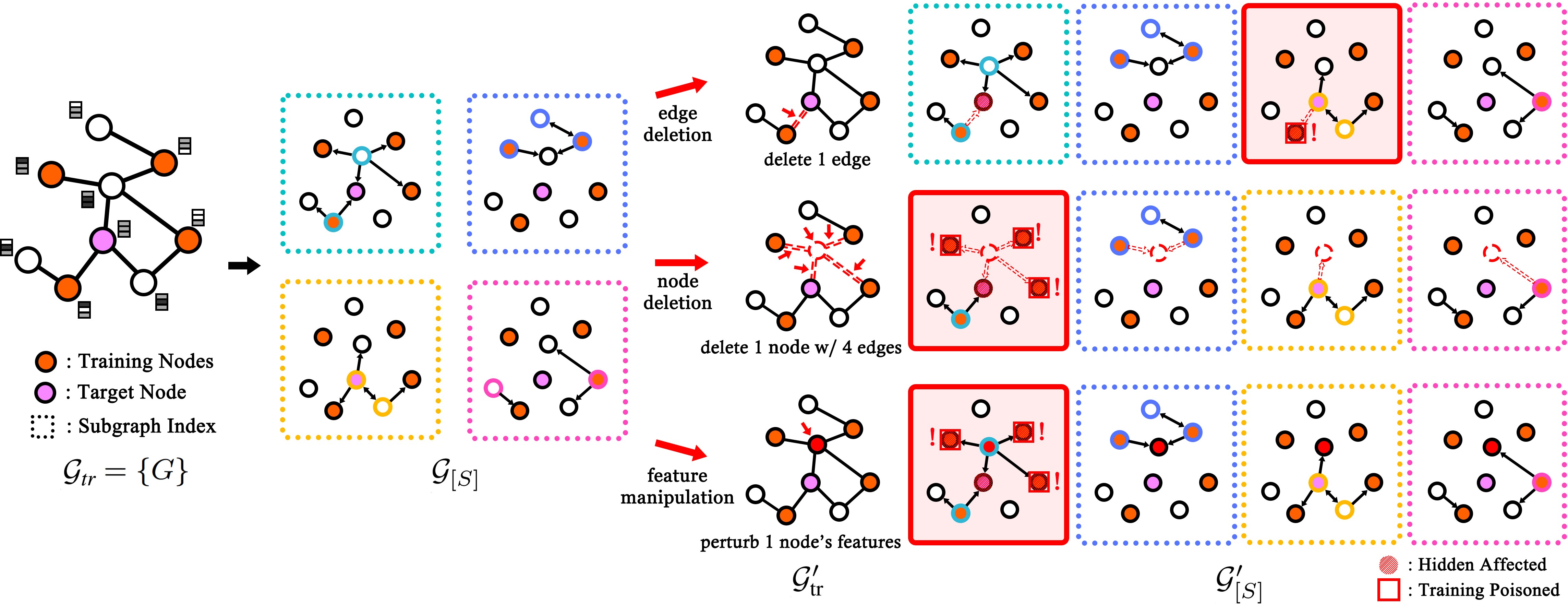}}
    \hspace{+10mm}
    \caption{Illustration of our edge-centric and node-centric graph division strategies for node classification against edge deletion, node deletion, and node feature manipulation. 
    {\bf To summarize:} 1 deleted edge  affects at most 1 subgraph prediction in both graph division strategies. In contrast, 1 deleted node with, e.g., $3$ incident edges can affect at most 3 subgraph predictions with edge-centric graph division, but at most 1 subgraph prediction with node-centric graph division.
    }
    \label{fig:subgraphs_NC_more}
   \vspace{-2mm}
\end{figure*}

\begin{figure*}[t]
    \centering
    \captionsetup[subfloat]{labelsep=none, format=plain, labelformat=empty}

    \subfloat[{\footnotesize (a) Edge-Centric Graph Division for Graph Classification against edge manipulation, node manipulation and feature manipulation}]{
    \includegraphics[width=0.9\linewidth]{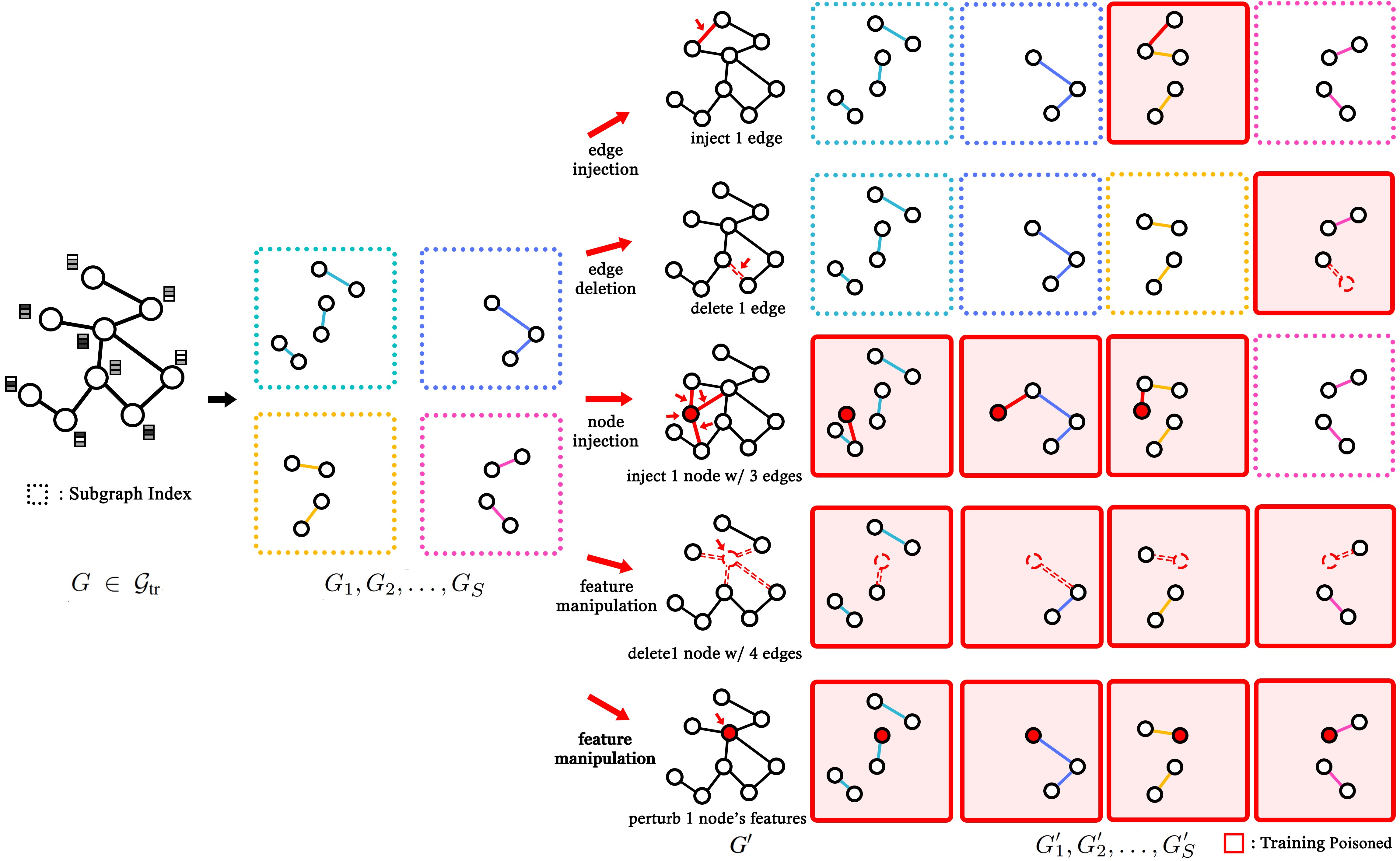}}
    \hspace{+20mm}
    
    \subfloat[{\footnotesize (b) Node-Centric Graph Division for Graph Classification against edge manipulation, node manipulation and feature manipulation}]{
    \includegraphics[width=0.9\linewidth]{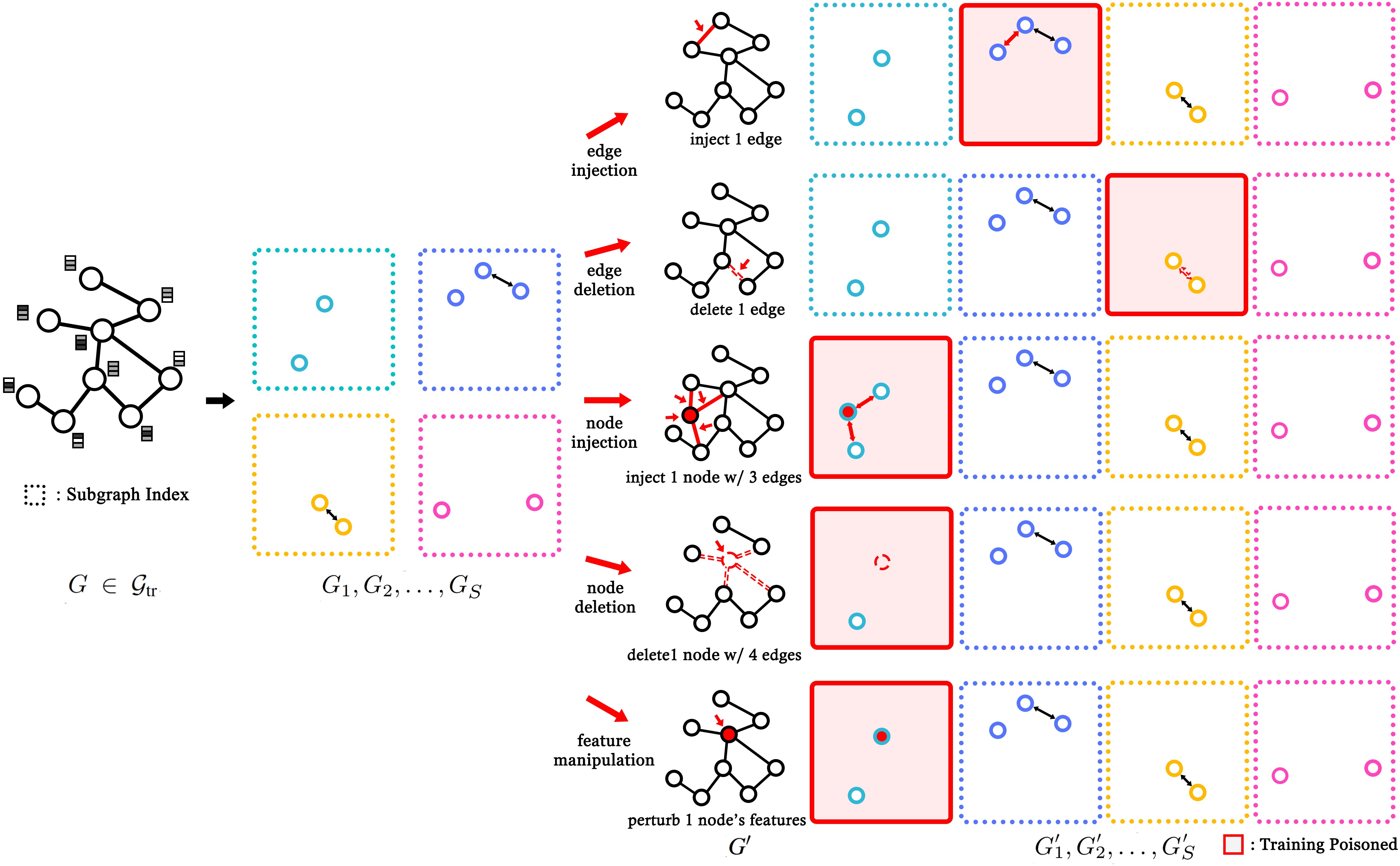}}
    \vspace{-2mm}
    \caption{Illustration of our edge-centric and node-centric graph division strategies for graph classification. The conclusion are similar to those for node classification.}
    \label{fig:subgraphs_GC}
\end{figure*}

\begin{table}[!t]
    \footnotesize
    \centering 
    \renewcommand\arraystretch{1.3}
    \begin{tabular}{c|c|c|c|c}
     \toprule
          {\bf Node Classification}&{\bf Ave degree}&{$|\mathcal{V}|$}&$|\mathcal{E}|$&$|\mathcal{C}|$ \\
         \Xhline{0.9pt}
       Cora-ML&5.6&2, 995&8,416&7\\
         \cline{1-5} 
         Citeseer&2.8&3,327&4,732&6\\
         \cline{1-5} 
         Pubmed&4.5&19,717&44,338&3\\
         \cline{1-5} 
         Amazon-C&71.5&13,752&491,722&10\\
         \Xhline{1.2pt} 
       {\bf Graph Classification}&$|\mathcal{G}|$&$|\mathcal{V}|_{avg}$&$|\mathcal{E}|_{avg}$&$|\mathcal{C}|$ \\
         \Xhline{0.9pt}
         {AIDS}&2,000&15.7&16.2&2\\
         \cline{1-5} 
        MUTAG&4,337&30.3&30.8&2\\
         \cline{1-5} 
         PROTEINS&1,113&39.1&72.8&2\\
         \cline{1-5} 
        DD&1,178&284.3&715.7&2\\
       \bottomrule
    \end{tabular}
    \caption{Datasets and their statistics.}
    \label{tab:datasets}
\end{table}

\begin{figure*}[!t]
\centering
\subfloat[Cora-ML]{\includegraphics[width=0.25\textwidth]{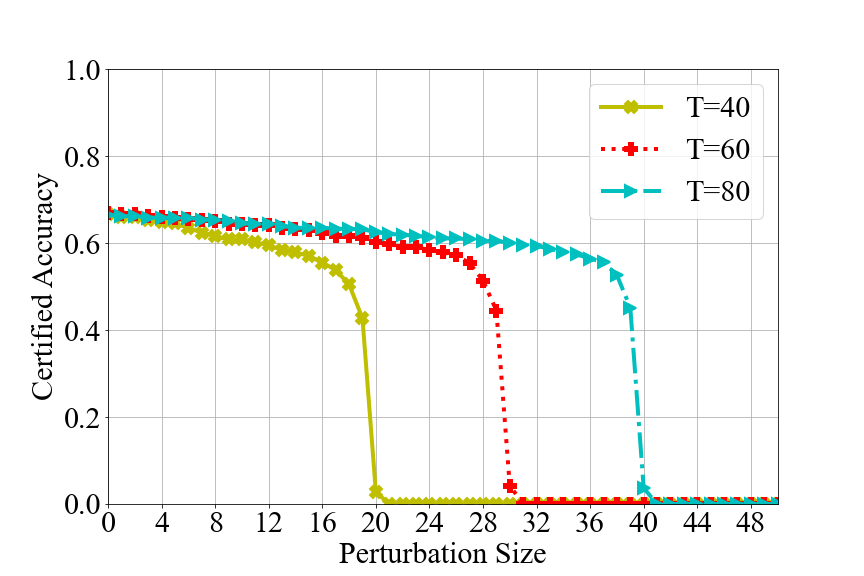}}\hfill
\subfloat[Citeseer]{\includegraphics[width=0.25\textwidth]{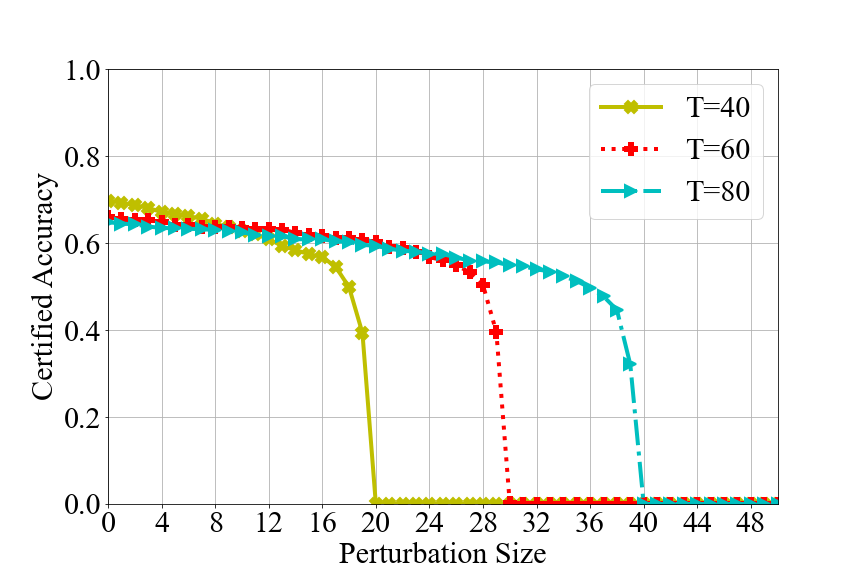}}\hfill
\subfloat[Pubmed]{\includegraphics[width=0.25\textwidth]{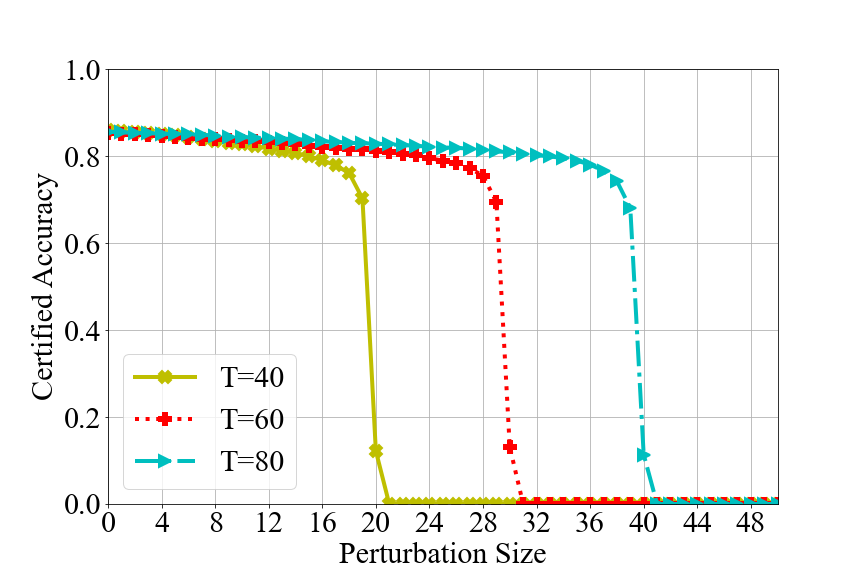}}\hfill
\subfloat[Amazon-C]{\includegraphics[width=0.25\textwidth]{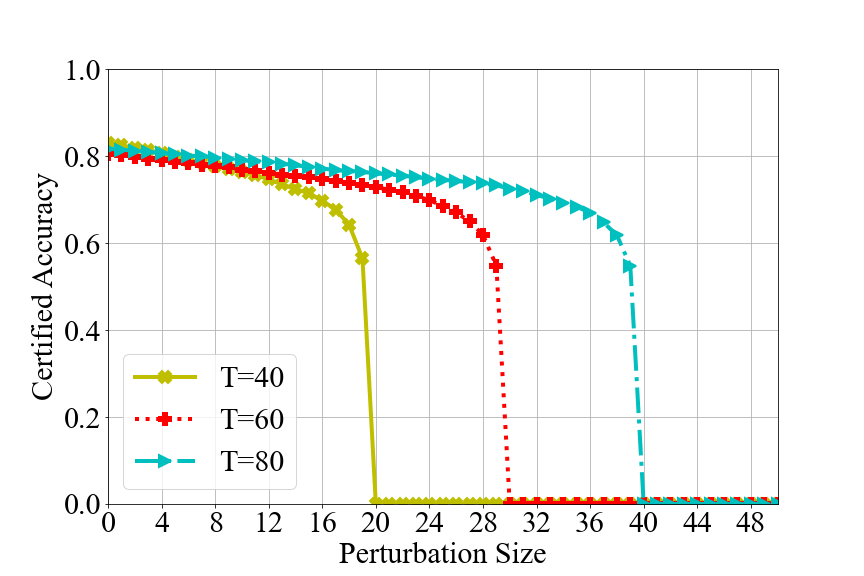}}\\
\caption{Certified node accuracy of our {\nameE} with GSAGE w.r.t. the number of subgraphs $S$.}
\label{fig:node-EC-T-GSAGE}
\end{figure*}

\begin{figure*}[!t]
\centering
\subfloat[Cora-ML]{\includegraphics[width=0.25\textwidth]{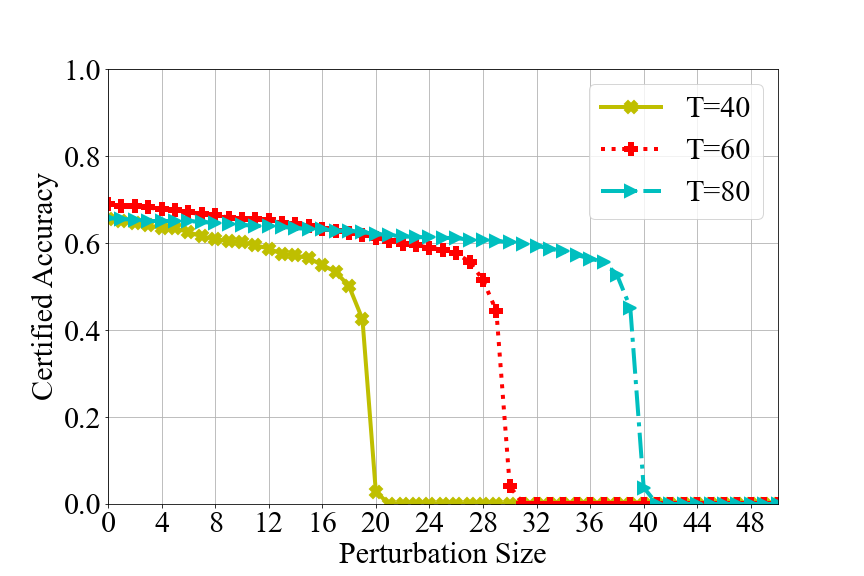}}\hfill
\subfloat[Citeseer]{\includegraphics[width=0.25\textwidth]{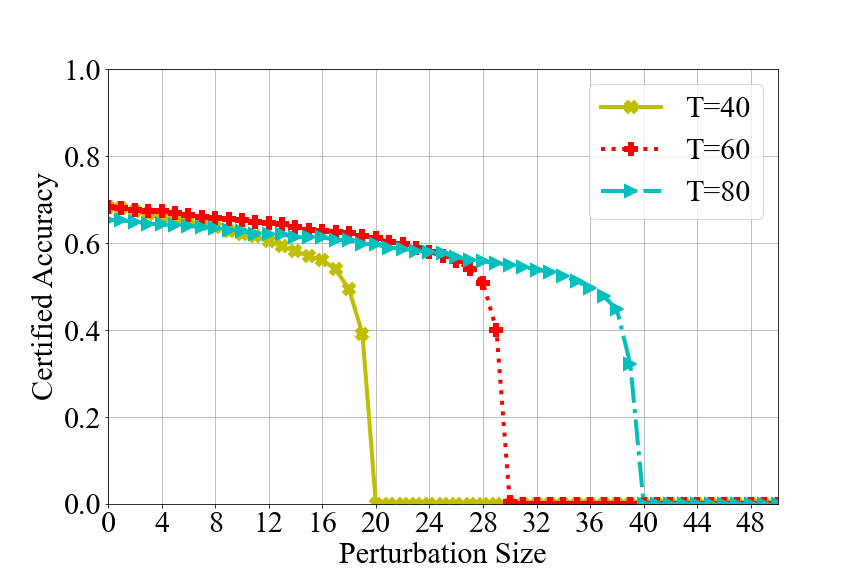}}\hfill
\subfloat[Pubmed]{\includegraphics[width=0.25\textwidth]{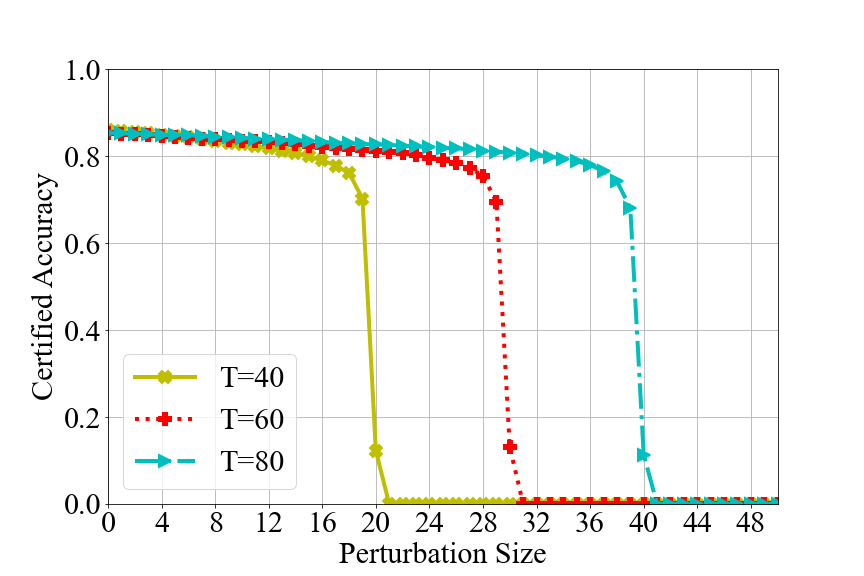}}\hfill
\subfloat[Amazon-C]{\includegraphics[width=0.25\textwidth]{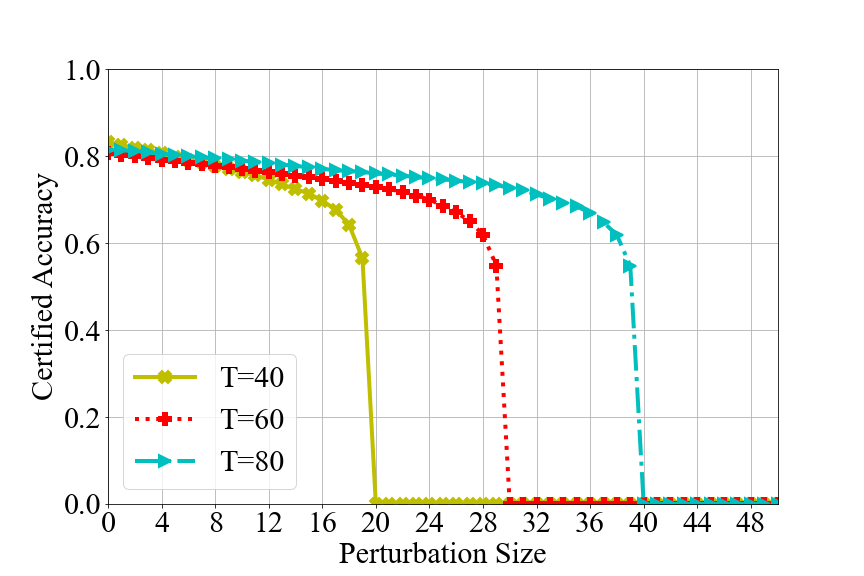}}\\
\caption{Certified node accuracy of our {\nameN} with GSAGE w.r.t. the number of subgraphs $S$.}
\label{fig:node-NC-T-GSAGE}
\end{figure*}

\begin{figure*}[!t]
\centering
\subfloat[Cora-ML]{\includegraphics[width=0.25\textwidth]{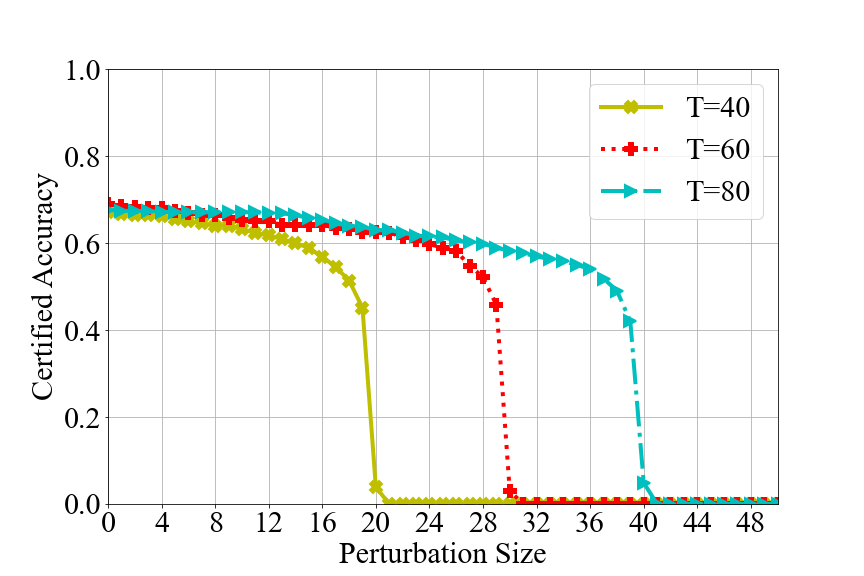}}\hfill
\subfloat[Citeseer]{\includegraphics[width=0.25\textwidth]{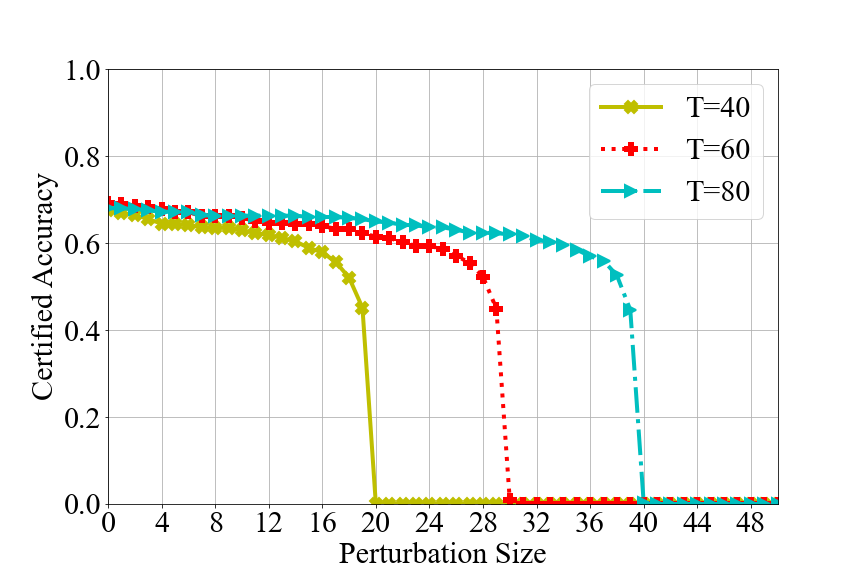}}\hfill
\subfloat[Pubmed]{\includegraphics[width=0.25\textwidth]{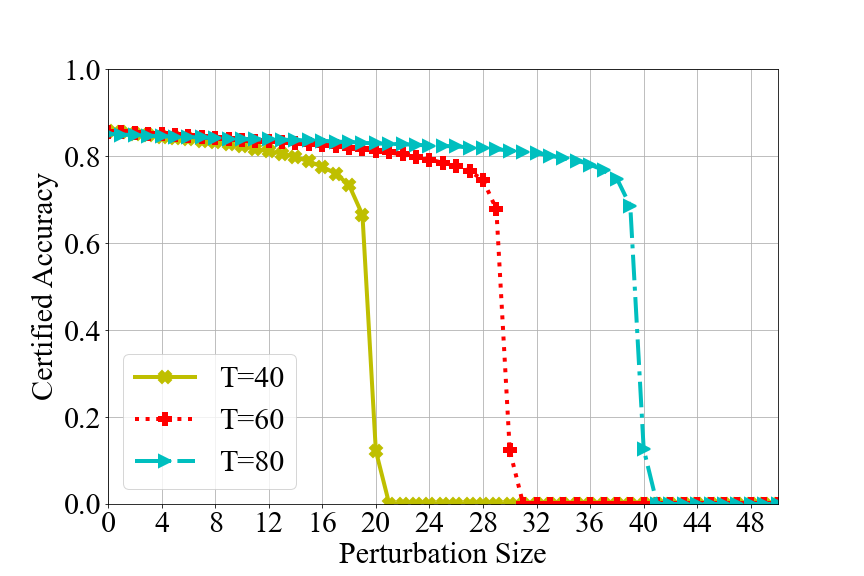}}\hfill
\subfloat[Amazon-C]{\includegraphics[width=0.25\textwidth]{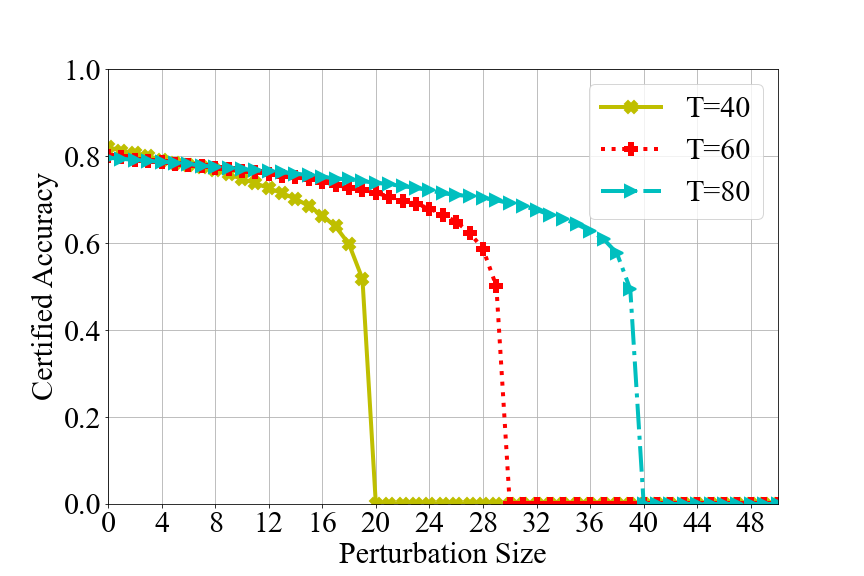}}\\
\caption{Certified node accuracy of our {\nameE} with GAT w.r.t. the number of subgraphs $S$.}
\label{fig:node-EC-T-GAT}
\end{figure*}

\begin{figure*}[!t]
\centering
\subfloat[Cora-ML]{\includegraphics[width=0.25\textwidth]{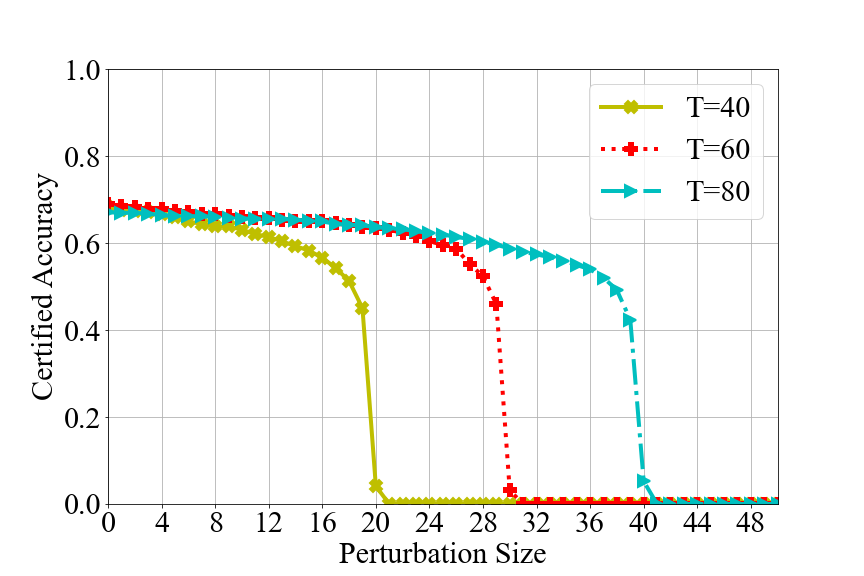}}\hfill
\subfloat[Citeseer]{\includegraphics[width=0.25\textwidth]{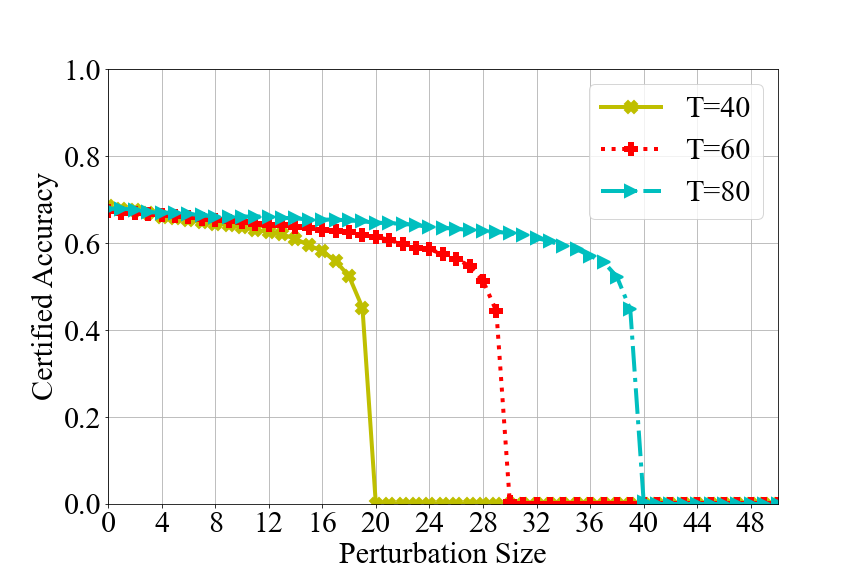}}\hfill
\subfloat[Pubmed]{\includegraphics[width=0.25\textwidth]{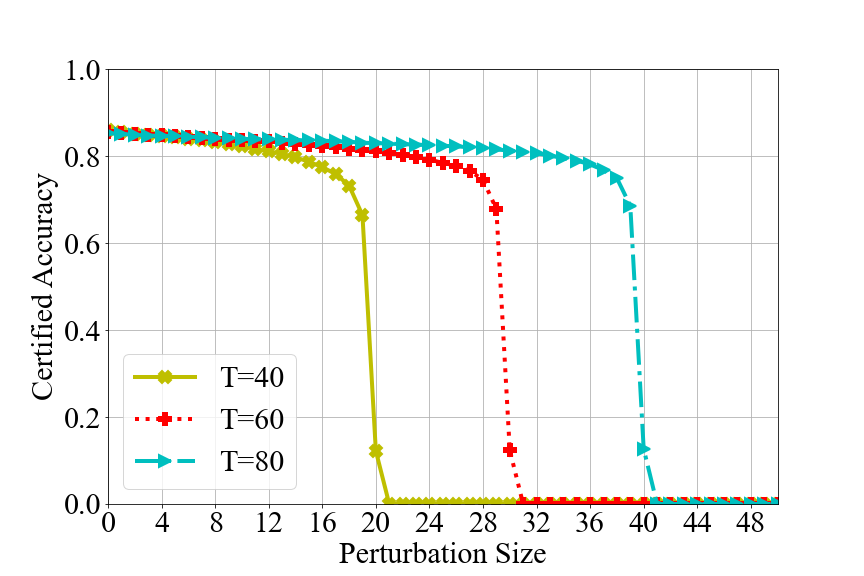}}\hfill
\subfloat[Amazon-C]{\includegraphics[width=0.25\textwidth]{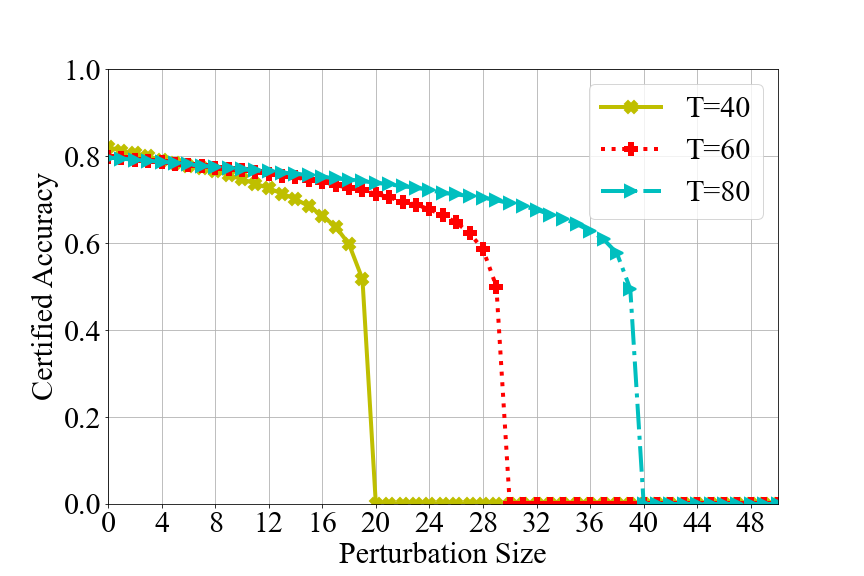}}\\
\caption{Certified node accuracy of our {\nameN} with GAT w.r.t. the number of subgraphs $S$.}
\label{fig:node-NC-T-GAT}
\end{figure*}

\begin{figure*}[!t]
\centering
\subfloat[Cora-ML]{\includegraphics[width=0.25\textwidth]{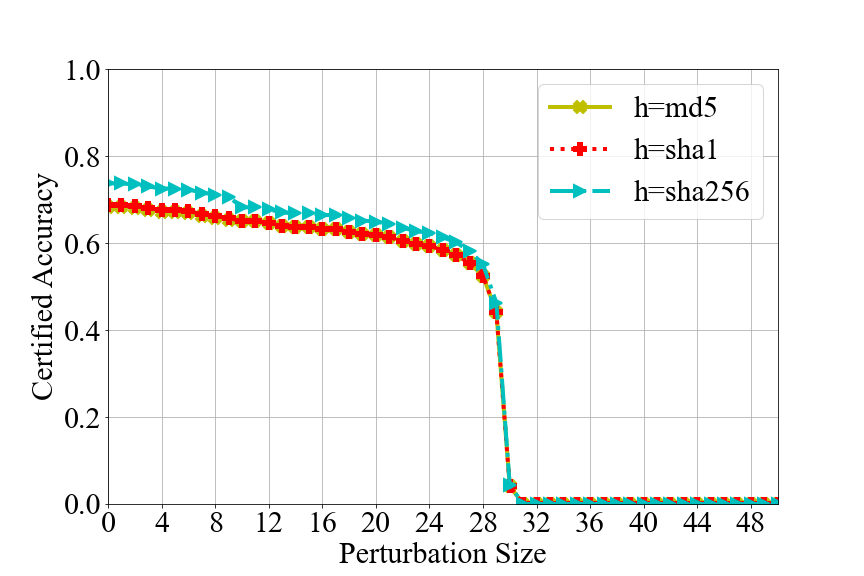}}\hfill
\subfloat[Citeseer]{\includegraphics[width=0.25\textwidth]{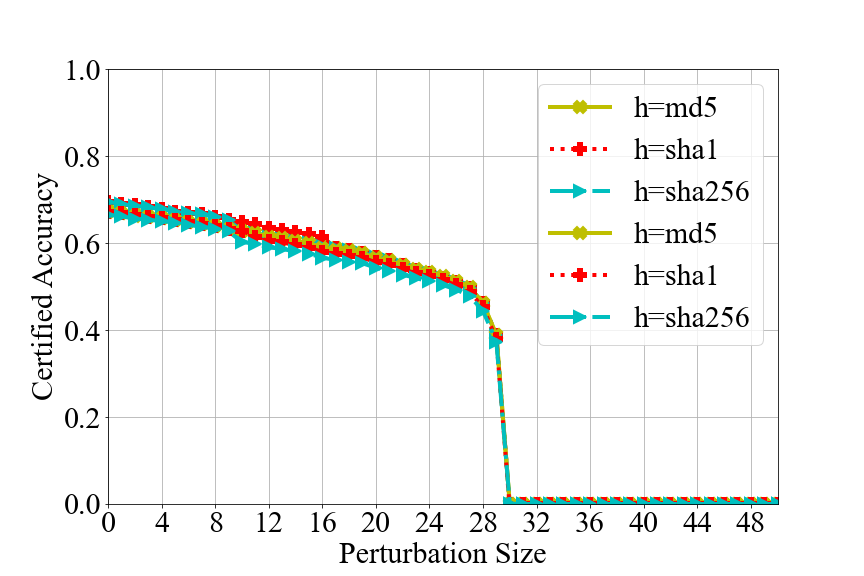}}\hfill
\subfloat[Pubmed]{\includegraphics[width=0.25\textwidth]{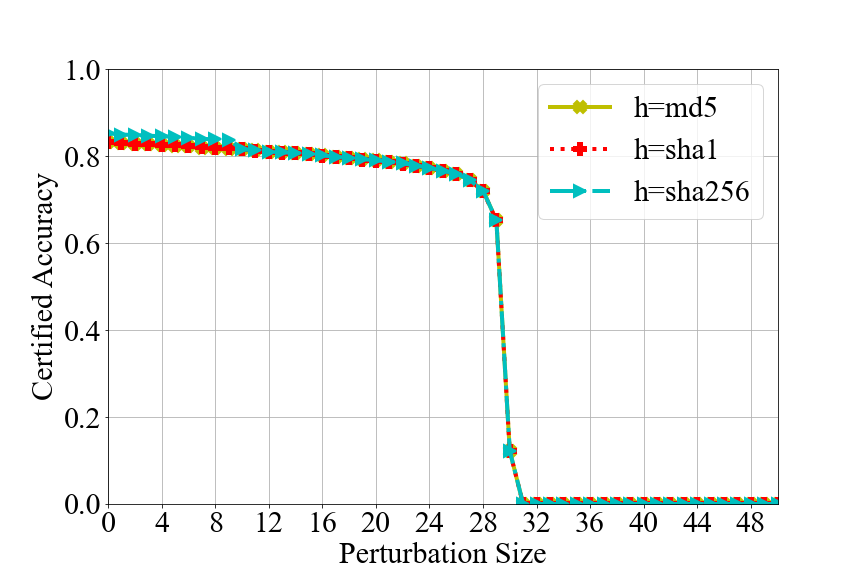}}\hfill
\subfloat[Amazon-C]{\includegraphics[width=0.25\textwidth]{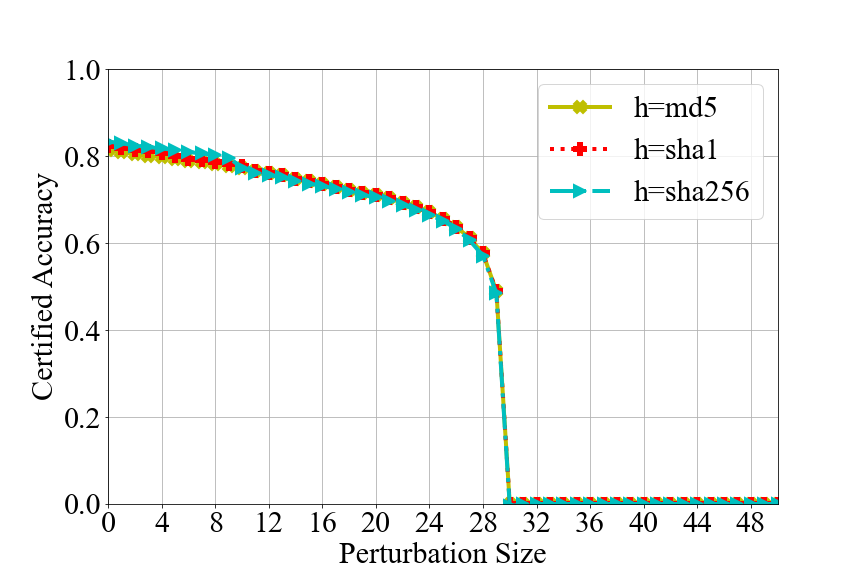}}\\
\caption{Certified node accuracy of our {\nameE} w.r.t. the hash function $h$.}
\label{fig:node-EC-hash}
\end{figure*}

\begin{figure*}[!t]
\centering
\subfloat[Cora-ML]{\includegraphics[width=0.25\textwidth]{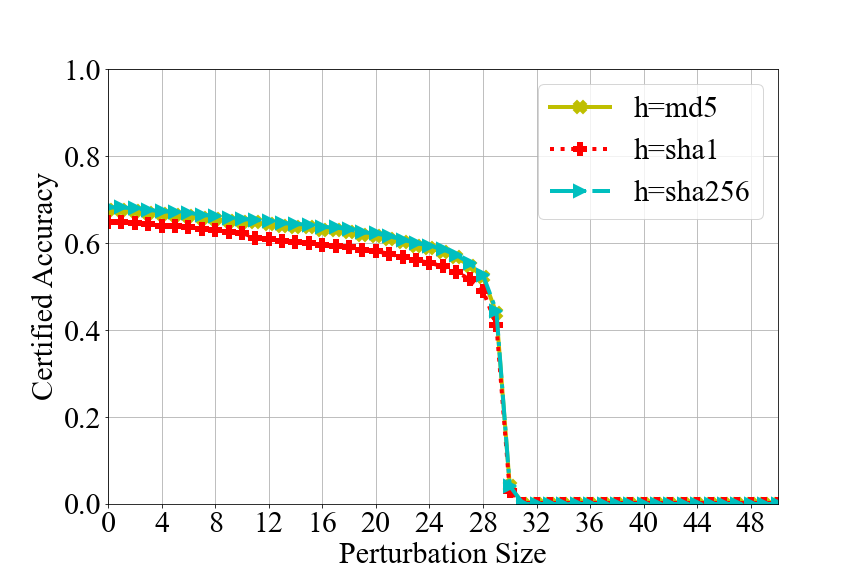}}\hfill
\subfloat[Citeseer]{\includegraphics[width=0.25\textwidth]{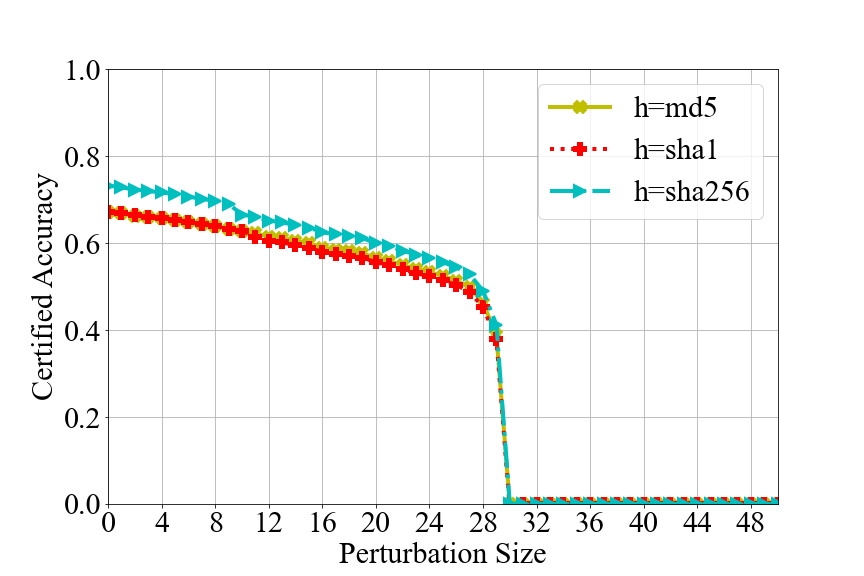}}\hfill
\subfloat[Pubmed]{\includegraphics[width=0.25\textwidth]{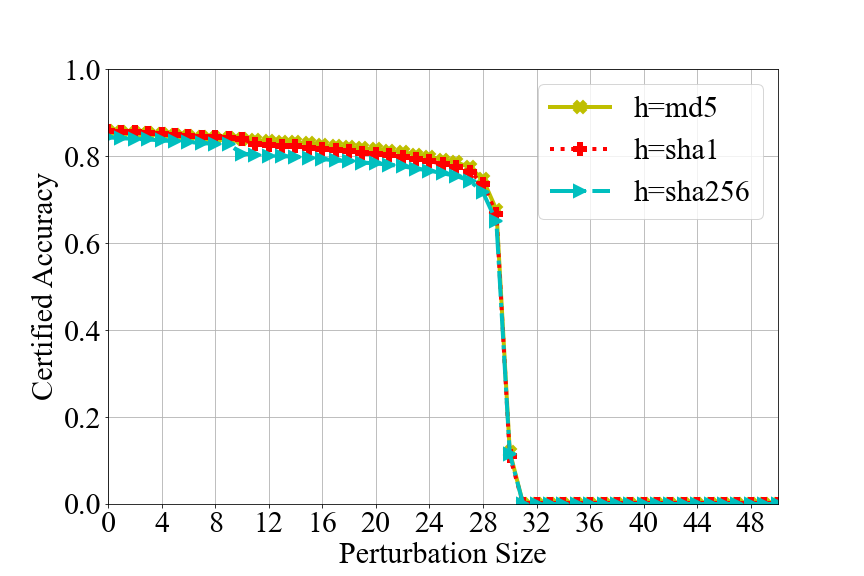}}\hfill
\subfloat[Amazon-C]{\includegraphics[width=0.25\textwidth]{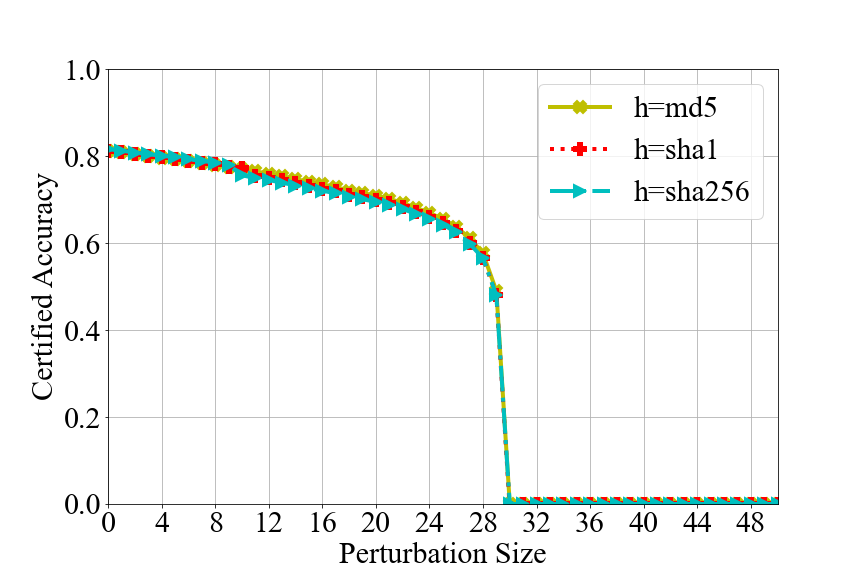}}\\
\caption{Certified node accuracy of our {\nameN} w.r.t. the hash function $h$.}
\label{fig:node-NC-hash}
\end{figure*}

\begin{figure*}[!t]
\centering
\subfloat[AIDS]{\includegraphics[width=0.25\textwidth]{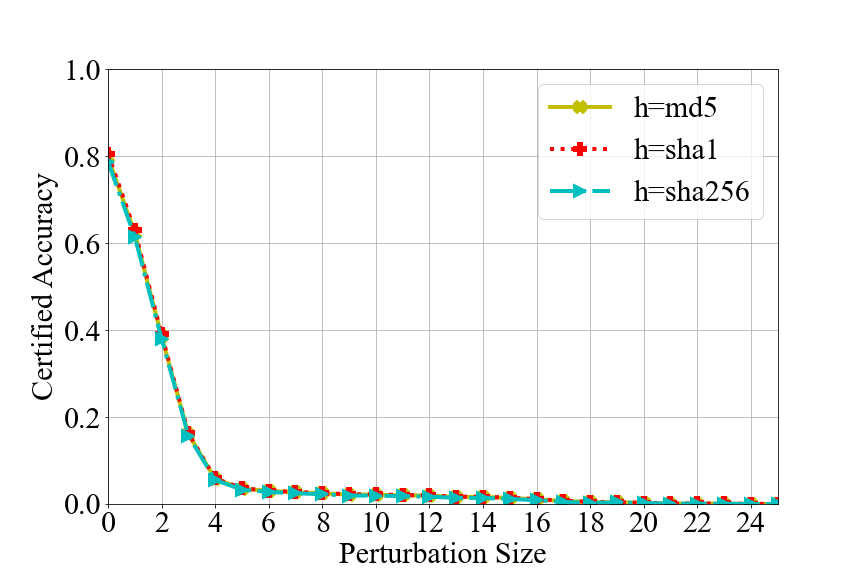}}\hfill
\subfloat[MUTAG]{\includegraphics[width=0.25\textwidth]{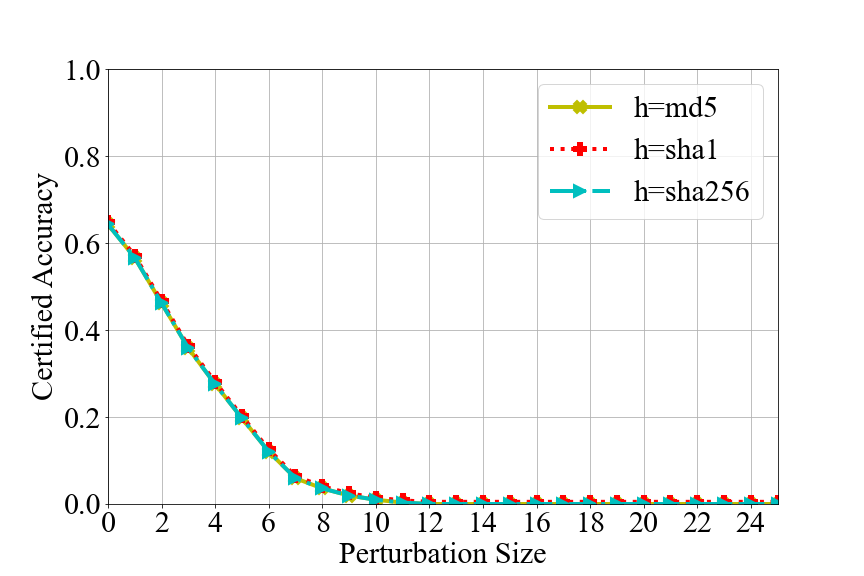}}\hfill
\subfloat[PROTEINS]{\includegraphics[width=0.25\textwidth]{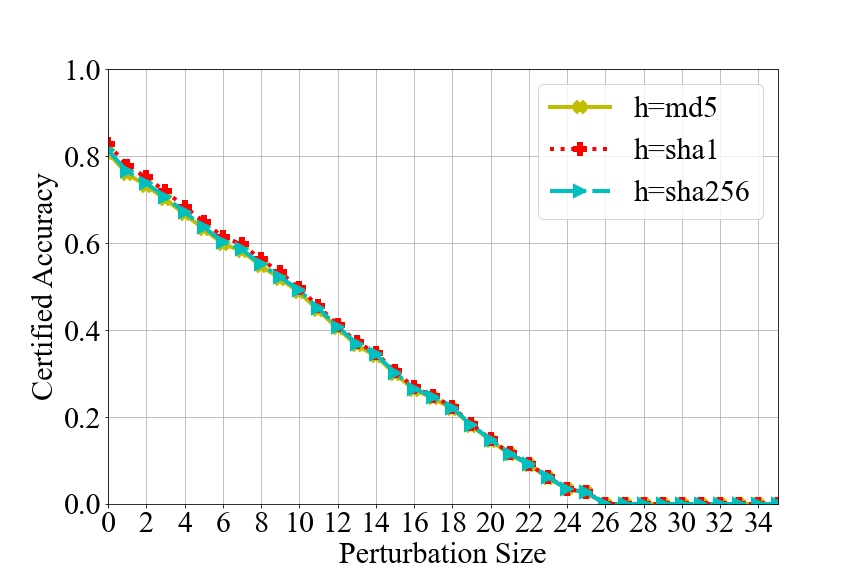}}\hfill
\subfloat[DD]{\includegraphics[width=0.25\textwidth]{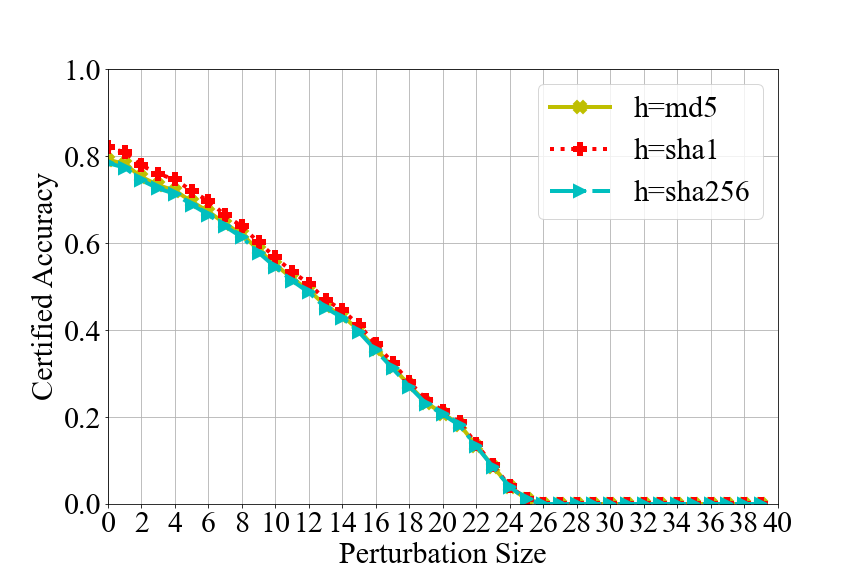}}\\
\caption{Certified graph accuracy of our {\nameE} w.r.t. the hash function $h$.}
\label{fig:graph-EC-hash}
\end{figure*}

\begin{figure*}[!t]
\centering
\subfloat[AIDS]{\includegraphics[width=0.25\textwidth]{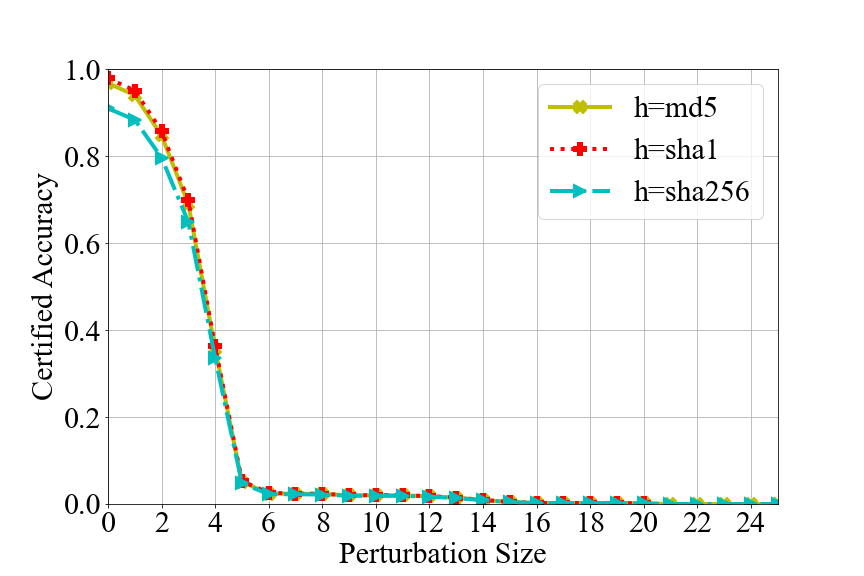}}\hfill
\subfloat[MUTAG]{\includegraphics[width=0.25\textwidth]{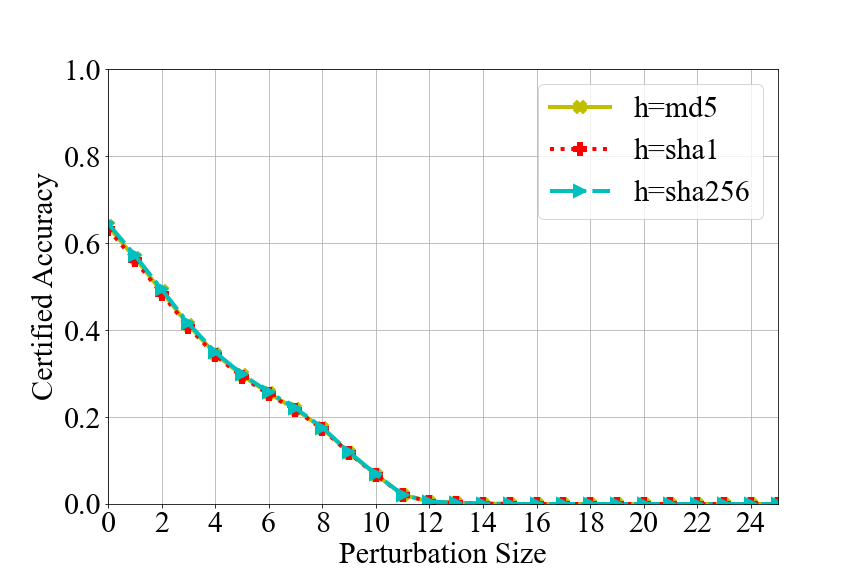}}\hfill
\subfloat[PROTEINS]{\includegraphics[width=0.25\textwidth]{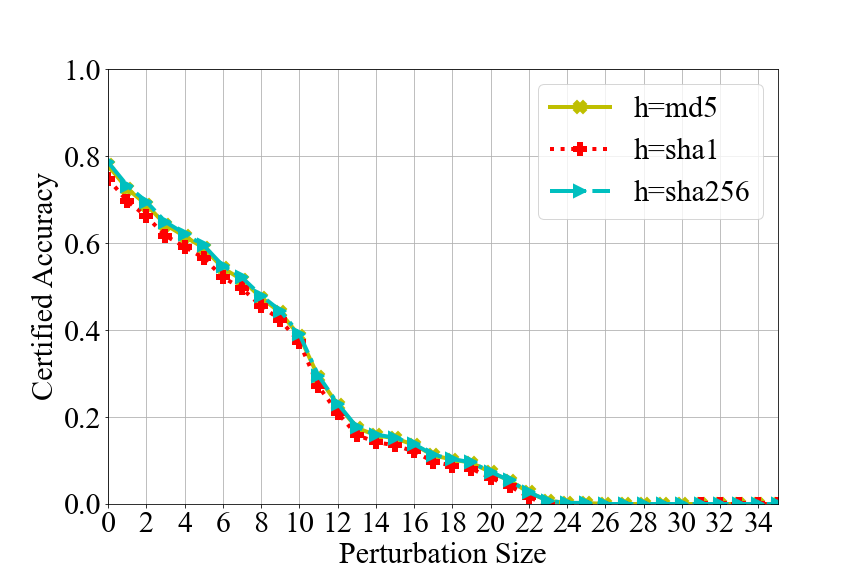}}\hfill
\subfloat[DD]{\includegraphics[width=0.25\textwidth]{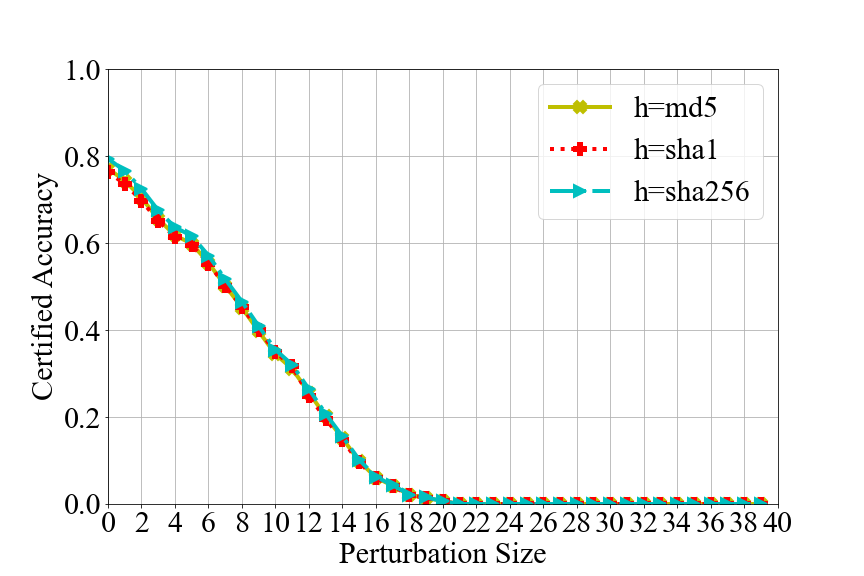}}\\
\caption{Certified graph accuracy of our {\nameN} w.r.t. the hash function $h$.}
\label{fig:graph-NC-hash}
\vspace{-2mm}
\end{figure*}

\end{document}